\newcommand{\prob}[6]{\multirow{2}{*}{{\footnotesize{#1}}} & \multirow{2}{*}{#2}  & \multirow{2}{*}{#3}  & \multirow{2}{*}{#4}  & \multirow{2}{*}{#5} & \multirow{2}{*}{#6}}
\newcommand{\pa}{\textsl{pa}}
\newcommand{\setf}[1]{{\bf{#1}}}
\newcommand{\varf}[1]{{\it{#1}}}
\newcommand{\bemph}[1]{\textbf{\emph{#1}}}
\newcommand{\fproj}[2]{f_{#1}{_{|#2}}}
\newcommand{\paral}[1]{\stackrel{#1}{\leftleftarrows} }
\newcommand{\out}{\textsl{out}}
\newcommand{\inp}{\textsl{in}}
\newcommand{\UTIL}{\mbox{\it UTIL}}
\newcommand{\VALUE}{\mbox{\it VALUE}}
\newcommand{\tuple}[1]{\langle #1 \rangle}
\newcommand{\size}[1]{|#1|}
\def\is{\leftarrow}
\def\eqto{\!=\!}
\def\st{\: | \:}
\newcommand{\argmax}{\operatornamewithlimits{argmax}}
\newcommand{\argmin}{\operatornamewithlimits{argmin}}
\DeclareMathOperator*{\bigtimes}{\vartimes}
\newcommand{\scope}[1]{\setf{x}^{#1}}
\newcommand{\memo}[1]{
  \ifthenelse {\boolean{includeMemo}}{
  \medskip\noindent\fbox{\begin{minipage}[b]{\dimexpr\linewidth-1em}#1\end{minipage}}\medskip\newline}
}
\newcommand{\bitemize}{\begin{list}{$\bullet$}{\topsep=1pt \parsep=0pt \itemsep=1pt \leftmargin=1em }} 
\newcommand{\eitemize}{\end{list}}
\newcommand{\blist}{\begin{list}{$\bullet$}{\topsep=0pt \parsep=0pt \itemsep=1pt \leftmargin=1em \labelwidth=0.5em \labelsep=0.5em}} 
\newcommand{\elist}{\end{list}}
\newcommand{\benumerate}{\hspace{-0.5in} \begin{enumerate}\topsep=0pt \parsep=0pt \itemsep=-3pt } 
\newcommand{\eenumerate}{\end{enumerate}}
\newcommand{\beitemize}{\begin{list}{$\bullet$}{\topsep=1.5pt \parsep=0pt \itemsep=1pt \leftmargin=1.5em }} 
\newcommand{\enitemize}{\end{list}}
\newcommand{\shrink}{\ifthenelse {\boolean{bshrink}}{\vspace{-0.5em}}}
\newcommand{\rev}[1]{#1}
\begin{document}

\title{Accelerating Exact and Approximate Inference for (Distributed) Discrete Optimization with GPUs\footnote{This journal article is an extended version of an earlier conference paper~\cite{fioretto:15f}. It includes (\emph{i})~a parallelized design and implementation of Mini-Bucket Elimination with GPUs on WCSPs; (\emph{ii})~a more detailed description of the GPU operations to ease reproducibility; (\emph{iii})~a significantly more comprehensive empirical evaluation with additional WCSP benchmarks and different GPU devices.}
}

\titlerunning{Accelerating Exact and Approximate Inference Algorithms with GPUs}        

\author{Ferdinando Fioretto \and Enrico Pontelli \and\\ William Yeoh \and Rina Dechter
}

\authorrunning{F. Fioretto \emph{et al.}} 


\institute{Ferdinando Fioretto\at 
           Industrial and Operations Engineering, University of Michigan \\
           Ann Arbor, MI, USA\\
           \email{fioretto@umich.edu}
\and 
	Enrico Pontelli, William Yeoh \at
           Computer Science, New Mexico State University \\
           Las Cruces, NM, USA\\
           \email{\{epontell,wyeoh\}@cs.nmsu.edu}
\and
           Rina Dechter \at
           School of Information and Computer Science, University of California, Irvine \\
           Irvine, CA, USA\\
           \email{dechter@ics.uci.edu}      
}

\date{Received: date / Accepted: date}

\maketitle

\begin{abstract}
\emph{Discrete optimization} is a central problem in artificial intelligence.  
The optimization of the aggregated cost of a network of cost functions arises in a variety of problems including \emph{Weighted Constraint Programs} (WCSPs), \emph{Distributed Constraint Optimization} (DCOP),  \rev{as well as optimization in stochastic variants such as the tasks of finding the \emph{most probable explanation} (MPE) 
in \emph{belief networks}.}
Inference-based algorithms are powerful techniques for solving discrete optimization problems, which can be used independently or in combination with other techniques. However, their applicability is often limited by their compute intensive nature and their space requirements. 
 
This paper proposes the design and implementation of a novel inference-based technique, which exploits modern massively parallel architectures, such as those found in  Graphical Processing Units (GPUs), to speed up the resolution of exact and approximated inference-based algorithms for discrete optimization.
The paper studies the proposed algorithm in both centralized and distributed optimization contexts. 

The paper demonstrates that the use of GPUs provides significant advantages in terms of runtime and scalability, achieving  up to two orders of magnitude in speedups and showing a considerable reduction in execution
time (up to 345 times faster) with respect to a sequential version.
\keywords{GPU \and WCSP \and MPE \and DCOP \and (Mini-)Bucket Elimination \and (A)DPOP}
\end{abstract}

\section{Introduction}
\label{sec:intro}

The importance of constraint optimization is outlined by the impact of its application in a 
wide range of 
domains, such as supply-chain management (e.g., \cite{supply1,gaudreault:09}), roster scheduling (e.g., \cite{nurse1,burke:04}), combinatorial auctions (e.g., \cite{sandholm:02}), bioinformatics (e.g., \cite{allouche:14,campeottoPDFP13,fiorettoDP15}),
multi-agent systems (e.g., \cite{dovier1}) and probabilistic reasoning (e.g, \cite{Pearl:88}).

In \emph{Constraint Satisfaction Problems (CSPs),} the goal is to find  a value assignment for a set of variables that satisfies a set of constraints~\cite{apt:03,handbook}. The assignments satisfying the problem constraints are called \emph{solutions}. In \emph{Weighted Constraint Satisfaction Problems (WCSPs)} the goal is that of finding an optimal solution, given a set of preferences expressed by means of cost functions \cite{shapiro:81,schiex:95,Bistarelli:97}. 

\rev{
When the problems involve uncertainty, we recur to the notion of \emph{belief} or \emph{Bayesian} networks (BNs) \cite{Pearl:88}. This framework aims at modeling natural phenomena and exogenous uncertainty through probabilistic reasoning. In BNs the goal is that of answering queries given partial beliefs under conditions uncertainty. 
Common tasks over BNs include finding the \emph{most probable explanation} (MPE), 
also known as \emph{maximum a posteriori hypothesis} (MAP), that is finding the assignment with largest probability to all unobserved variables given some observed variables 
 \cite{Dechter:03}. Belief networks are widely applied to a variety of applications, such as, diagnosis~\cite{lerner2000bayesian} and linkage analysis~\cite{fishelson:02,friedman:00}.
}

When resources are distributed among a set of autonomous agents and communication among the agents is restricted, WCSPs take the form of \emph{Distributed Constraint Optimization Problems (DCOPs)}~\cite{modi:05,petcu:05,yeoh:12}. 
In this context, agents coordinate their value assignments to minimize the overall sum of resulting constraint costs.
DCOPs have been employed to model various distributed optimization problems, such as meeting scheduling~\cite{maheswaran:04a,yeoh:10}, resource allocation~\cite{farinelli:08,zivan:15}, power network management problems~\cite{kumar:09,naps13,fioretto:AAMAS-17b}, and coordination of appliances in smart homes~\cite{fioretto:AAMAS-17a,rust:16}. We will refer to WCSPs and DCOPs as \emph{discrete optimization problems}.

Algorithms to solve discrete optimization problems can be classified as \emph{exact} and \emph{approximated}. 
Exact algorithms are guaranteed to find optimal solutions. However, since solving WCSPs and DCOPs is NP-hard \cite{handbook}, optimally solving these  problems results in prohibitive runtime and/or use of resources, such as memory or network load. 
In contrast, approximated algorithms trade  solution optimality for shorter runtime and a more efficient use of the available resources.

Furthermore, discrete optimization algorithms can adopt two main paradigms: \emph{search} or \emph{inference}. 
\emph{Search-based methods} rely on the use of non-deterministic branching rules to explore different value assignments to variables. These rules are applied recursively until all problem variables are assigned. 
This process defines a search tree (typically traversed in a depth-first fashion), which has the advantage of requiring only polynomial space. However, the practical efficiency of these methods relies on their ability to prune redundant or
sub-optimal subtrees. 
\emph{Inference-based methods} are inspired from dynamic programming (DP) techniques. These methods apply a sequence of transformations  
to reduce the problem size at each step while preserving its semantics. A well known inference-based approach is \emph{Bucket Elimination} (BE) \cite{Dechter:99}. BE 
iterates over  the variables of the problem, reducing the size of the problem at each step, by replacing a variable and its related cost functions with a single new function, derived by optimizing over the possible values of the eliminated variable. The \emph{Dynamic Programming Optimization Protocol (DPOP)}~\cite{petcu:05} is one of the most efficient inference-based DCOP solvers, and it can be seen as a distributed version of BE, where agents exchange  newly introduced cost functions via messages. 

The importance of inference-based approaches arises in several optimization fields including constraint programming~\cite{apt:03,handbook}. For example, several \emph{propagators} adopt DP-based techniques to establish constraint consistency. For instance, {\bf (1)}~the \emph{knapsack} constraint propagator proposed by Trick applies DP techniques to establish arc consistency on the constraint~\cite{trick:03}; {\bf (2)}~the propagator for the \emph{regular} constraint establishes arc consistency using a specific digraph representation of the DFA, which has similarities to dynamic programming~\cite{pesant:04}; {\bf (3)}~the \emph{context free grammar} constraint makes use of a propagator based on the CYK parser that uses DP to enforce generalized arc consistency~\cite{quimper:06}.

The main drawback of inference-based methods, including BE and DPOP, is that each transformation may introduce cost functions with large arities, requiring exponential time and space in a key structural parameter of a problem, called \emph{induced width}. 
While inference-based approaches may not always be appropriate to solve discrete optimization problems, as their time and space requirements may be prohibitive, they may be very effective in problems with particular structures, such as problems where their underlying \rev{primal graphs} have small induced widths or distributed problems where the number of messages is crucial for performance, despite the size of the messages.
Additionally, approximated inference methods can be effectively used to derive lower bounds, which are important components of branch and bound algorithms, as they can be used to prune parts of the search space by detecting \emph{dominated} solutions---i.e., solutions whose cost can provably not be lower than the best cost found so far. \emph{Mini-Bucket Elimination (MBE)} is an approximated variant of BE that can be used for this purpose.

Recent developments on external-memory algorithms have shown that the use of large secondary data storage can be effective to extend the applicability of memory intensive approaches \cite{Edelkamp2004,lim2010scaling,kask2012beem,sturtevant:13}. However, the computational solving runtime remains a bottleneck.  

To contrast this background, we note that the structure exploited by inference-based approaches in constructing solutions makes it suitable to exploit a novel class of massively parallel platforms that are based on the \emph{Single Instruction Multiple Thread} (SIMT) paradigm---where multiple threads may concurrently operate on different data, but are all executing the same instruction at the same time.
The SIMT-based paradigm is widely used in modern \emph{Graphical Processing Units (GPUs)} for general purpose parallel computing. Several libraries and programming environments (e.g., the \emph{Compute Unified Device Architecture} (CUDA)) have been made available to allow programmers to 
exploit the parallel computing power of GPUs.

In this paper, we propose the design and implementation of both 
an exact and an approximated inference-based algorithm that exploits parallel computation using GPUs to solve WCSPs and DCOPs. 
Our proposal aims at employing GPU hardware to speed up the inference process, thus providing 
 an alternative way to enhance the performance of inference-based discrete optimization approaches. 

This paper makes the following contributions: 
{\bf (1)}~We propose a novel design and implementation of a centralized and a distributed exact inference-based algorithm, inspired by BE and DPOP, to optimally solve WCSPs and DCOPs, which harnesses the computational power offered by parallel platforms based on GPUs; 
{\bf (2)}~We introduce an approximated version of the GPU-based inference-based algorithm, inspired by MBE;
{\bf (3)}~We report an extensive empirical analysis that shows significant improvements in performance with respect to the 
sequential CPU-based algorithms, reporting an average speedup of two order of magnitude; and
{\bf (4)}~We show the generality of our approach through empirical evaluations on three different GPU architectures, all providing significant speedups.
\rev{While the description of the techniques proposed in this paper focus on discrete optimization tasks, they also applies to other key problems in probabilistic graphical models, such as, MPE.}

\section{Background: Weighted Constraint Satisfaction Problems}
\label{sec:cop}\label{sec:be}

A \emph{weighted constraint satisfaction problem} (WCSP)~\cite{larrosa:02,shapiro:81} is a tuple $\langle \setf{X}, \setf{D}, \setf{C} \rangle$, where 
$\setf{X} = \{x_1, \ldots, x_{n}\}$ is a finite set of variables, 
$\setf{D} = \{D_{x_1}, \ldots, D_{x_n}\}$ is a set of finite domains for the variables in $\setf{X}$, with $D_{x_i}$ being the set of possible values for the variable $x_i$, and
$\setf{C}$ is a set of \emph{weighted constraints} (or \emph{cost functions}).
A weighted constraint $f_i \in \setf{C}$ is a function that maps tuples defined on the set of variables relevant to $f_i$ into \rev{$\mathbb{R}_+ \cup \{\infty\}$}, where $\infty$ is a special value denoting that a given combination of values is not allowed. 
The set of variables relevant to $f_i$ is referred to as the \emph{scope} of $f_i$, and denoted as $\scope{i} \subseteq \setf{X}$. 
Formally, \rev{$f_i : \bigtimes_{x_j \in \scope{i}} D_{x_j} \to \mathbb{R}_+ \cup \{\infty\}$}.\footnote{For simplicity, we assume that tuples of variables are built according to a predefined ordering.}
%
%
A \emph{solution} is a value assignment for a subset $\rho$ of variables from $\setf{X}$ that is consistent with their respective domains; i.e., it is a partial function $\theta: \setf{X} \rightarrow \bigcup_{i=1}^n D_{x_i}$ such that, for each $x_j \in \setf{X}$, if $\theta(x_j)$ is defined (i.e., $x_j\in\rho$), then $\theta(x_j) \in D_{x_j}$. 
The \bemph{cost} of an assignment $\rho$ is the sum of the evaluation of the constraints involving all the variables in $\rho$. 
A solution is \emph{complete} if it assigns a value to each variable in $\setf{X}$ and has finite cost (i.e., different
from $\infty$).
We will use the notation $\sigma$ to denote a complete solution, and, for a set of variables $\setf{V} = \{x_{i_1}, \ldots, x_{i_h}\} \subseteq \setf{X}$,  $\sigma_\setf{V} = \tuple{\sigma(x_{i_1}), \ldots, \sigma(x_{i_h}) }$  is the projection of
 $\sigma$ to the  variables in $\setf{V}$,  where $i_1 < \cdots < i_h$. 
The goal of a WCSP is to find a complete solution $\sigma^*$ with minimal cost, i.e.,
\begin{equation}	
\sigma^* \eqto \argmin_{\sigma \in \Sigma} 
		         \sum_{f_i \in \setf{C}} f_i( \sigma_{\scope{i}} ), 
\end{equation}
where $\Sigma$ is the \emph{state space}, defined as the set of all possible complete solutions. 

\rev{
Given a WCSP $P$,  $G_P \eqto (\setf{X},E_{\setf{C}})$ is the \bemph{primal graph} of $P$, where $\{x,y\} \in E_{\setf{C}}$ iff $\exists f_i \in \setf{C}$ such that $\{x,y\} \subseteq \scope{i}$. 
}
Given an ordering $o$ on $\setf{X}$, we say that a variable $x_i$ has lower \bemph{priority} w.r.t.~a variable $x_j$, denoted $x_i \prec_o x_j$,  if $x_i$ precedes $x_j$ in $o$. 

\begin{definition}[Induced Graph, Induced Width \cite{Dechter:03}] 
Given the \rev{primal} graph $G_P$ and an ordering $o$ on its nodes, the \emph{induced graph} $G_P^*$ on $o$ is the graph obtained from $G_P$ by connecting nodes, processed in descending order of priority, to all their preceding neighbors. 
\rev{
Processing a node $x_i$ results in the addition of 
edges connecting pairs of preceding neighbors of $x_i$. 
%
}
Given a graph and an ordering of its nodes, the \emph{width} of a node is the number of edges connecting it to its preceding nodes in the ordering. The induced width $w_o^*$ of $G_P$ is maximum width over all nodes of $G_P^*$ along the ordering $o$.
\end{definition}

\begin{figure}[!tbh]
  \centering
  \includegraphics[width=0.95\textwidth]{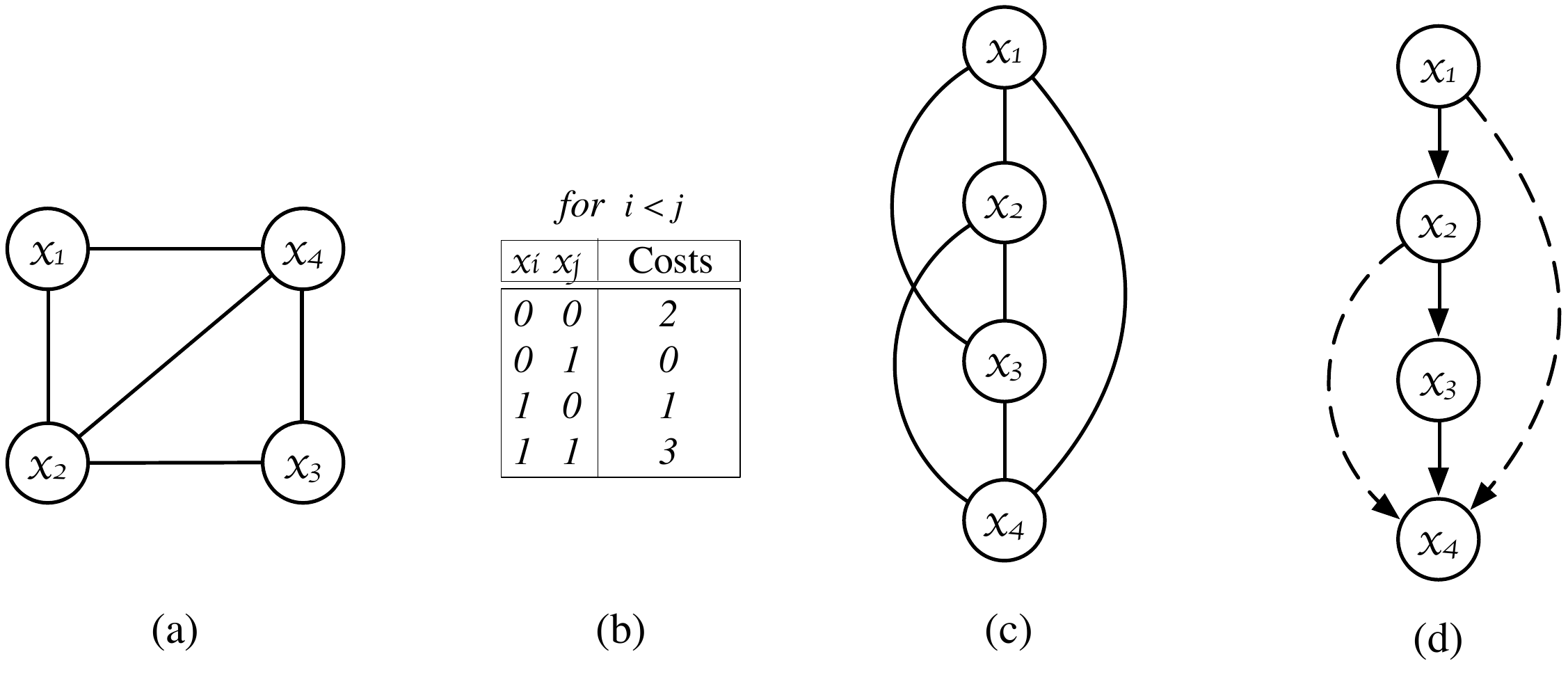}
  \caption{Example of a WCSP :  
  \textbf{(a):} Primal graph. 
  \textbf{(b):} Cost functions.
  \textbf{(c):} A possible induced graph. 
  \textbf{(d):} A possible pseudo-tree. 
  \label{fig:wcsp}}  
\end{figure}

\begin{example}
\label{ex:wcsp}
Fig.~\ref{fig:wcsp}(a) illustrates the \rev{primal graph} of a simple WCSP instance with 4 binary  variables, $x_1$, $x_2$, $ x_3$, and $x_4$, and $5$ 
\rev{constraints, $f(x_1, x_2)$, $f(x_1, x_4)$, $f(x_2, x_3)$, $f(x_2, x_4)$, $f(x_3, x_4)$.}
Fig.~\ref{fig:wcsp}(b) illustrates the constraints costs of the WCSP, which associate a cost value for each combination of values for the variables in the scope of the constraints. 
Fig.~\ref{fig:wcsp}(c) shows the induced graph $G_P^*$ obtained along the ordering $o = \langle x_1, x_2, x_3, x_4 \rangle$.
Its induced width is $3$.
\end{example}

\begin{definition}[Pseudo-tree \cite{Dechter:13}]
Given a \rev{primal graph} $G_P$ \rev{and an ordering $o$ on its nodes}, a \emph{DFS} \emph{pseudo-tree} arrangement for $G_P$ is a \rev{rooted directed} tree $T \eqto \langle \setf{X}, E_T\rangle$ of $G_P$ such that if $f_i \!\in\! \setf{C}$ and $\{x,y\} \subseteq \scope{i}$, then $x$ and $y$ appear in the same branch of $T$. 
\rev{The root of $T$ is the node associated to the variable with lower priority in $o$.} 
Edges of $G_P$ that are \emph{in} (resp. \emph{out} of) $E_T$ are called \emph{tree edges} (resp. \emph{backedges}). The tree edges connect parent-child nodes, while backedges connect a node with its \emph{pseudo-parents} and its \emph{pseudo-children}. 
\end{definition}

\begin{example}
\label{ex:wcsp-pt}
Fig.~\ref{fig:wcsp}(d) shows one possible pseudo-tree $T \eqto \langle \setf{X}, E_T\rangle$ associated to the \rev{primal graph} shown in Fig.~\ref{fig:wcsp}(a), with 
\rev{$E_T = \{f(x_1, x_2), f(x_2, x_3), f(x_3, x_4)\}$, and order $o = \langle x_1, x_2, x_3, x_4 \rangle$.}
\rev{The node labeled $x_1$ is the root node; it has a pseudo-child, node $x_4$. The node labeled $x_4$ has two pseudo-parents nodes: $x_2$ and $x_1$.}
The solid lines describe tree edges, while the dotted lines represent backedges.
\end{example}

\begin{definition}[Projection]
The \emph{projection} of a cost function $f_i$ on a set of variables $\setf{V} \subseteq \scope{i}$ is a new cost function 
\rev{$ \fproj{i}{\setf{V}} : \setf{V} \to \mathbb{R}_+ \cup \{\infty\}$}, such that for each possible assignment 
$\theta \in \bigtimes_{x_j \in \setf{V}} D_{x_j}$, 
$\fproj{i}{\setf{V}} (\theta) = \displaystyle \min_{\sigma \in \Sigma, \sigma_\setf{V}=\theta} f_i(\sigma_{\scope{i}})$.\footnote{For simplicity, we also use $\theta$ to represent the tuple
	$\langle \theta(x_{i_1}),\dots, \theta(x_{i_h})\rangle$ where $\{x_{i_1},\dots, x_{i_h}\}$ is the domain of
	$\theta$.}
\end{definition}

In other words, $\fproj{i}{\setf{V}}$ is constructed from the tuples of $f_i$, removing the values of the variable that do not appear in $\setf{V}$ and removing duplicate values by keeping the minimum cost of the original tuples in $f_i$. 
\begin{definition}[Concatenation]
Let us consider two assignments $\theta'$, defined for variables $V$, and $\theta''$, defined for
variables $W$, such that for each $x\in V\cap W$ we have that $\theta'(x) = \theta''(x)$.
Their \emph{concatenation} is an assignment $\theta' \cdot \theta''$ defined for 
$V\cup W$, such as for each $x\in V$ (resp. $x\in W$) we have that
$\theta'\cdot\theta''(x) = \theta'(x)$ (resp. $\theta'\cdot\theta''(x) = \theta''(x)$).
\end{definition}
We define two operations on cost functions:
\bitemize
	\item The \bemph{aggregation} of two functions $f_i$ and $f_j$, is a function \rev{$f_i + f_j : \scope{i} \cup \scope{j} \to \mathbb{R}_+ \cup \{\infty\}$}, such that $\forall \theta' \in \bigtimes_{x_k \in \scope{i}} D_{x_k}$ and $\forall \theta'' \in \bigtimes_{x_k \in \scope{j}} D_{x_k}$, if $\theta'\cdot\theta''$ is defined, then we have that 
	$$(f_i + f_j) (\theta' \cdot \theta'') \eqto f_i(\theta') + f_j(\theta'').$$

	\item The \bemph{elimination} of a variable $x_j \in \scope{i}$ from a function $f_i$, denoted as $\pi_{-x_j}(f_i)$, produces a new function with scope $\scope{i}\setminus\{x_j\}$, and defined as the projection of $f_i$ on $\scope{i}\setminus\{x_j\}$, i.e., $$\pi_{-x_j}(f_i) \eqto \fproj{i}{\scope{i}\smallsetminus\{x_j\}}.$$
\eitemize

\rev{While the aggregation and elimination operators are defined on summation and minimization, respectively, for discrete optimization problems, several tasks in belief networks can be solved by using variants of the aggregation and elimination operators~\cite{Dechter:03}.}


\begin{algorithm}[!htbp]
	\tcc{Variable Elimination Phase}
	\For{$i\leftarrow n$ \textnormal{\textbf{downto}} $1$}{
		$B_i \is \{ f_j \in \setf{C} \st x_i \in \scope{j} \land i = \min \{k \st x_k \in \scope{j}\} \}$\;
		$\hat{f}_i \is \pi_{-x_i}\Big( \sum_{f_j \in B_i} f_j \Big)$\;
		$\setf{X} \is \setf{X} \setminus \{x_i\}$\;
		$\setf{C} \is (\setf{C} \setminus B_i) \cup \{\hat{f}_i\}$\;
	}
	\tcc{Value Assignment Phase}
	\For{$i\leftarrow 1$ \textnormal{\textbf{to}} $n$}{
		$x_i \is d_i$ s.t.~$d_i \in D_{x_i}$ and $d_i$ is the best extension of $x_1, \ldots, x_{i-1}$ w.r.t.~$B_i$\;
		}
	\Return{$\hat{f}_1$}\;
	\caption{\textsc{Bucket Elimination}}
	\label{alg:be}
\end{algorithm}

\subsection{Bucket Elimination}
\label{sec:be}

\emph{Bucket Elimination} (BE) \cite{Dechter:99,Dechter:03} is a complete inference algorithm 
that can be used to find the optimal solutions of a WCSP. Algorithm~\ref{alg:be} illustrates its pseudocode. 
BE operates in the following two phases: 
\bitemize
\item \bemph{Variable Elimination Phase}. BE operates from the highest to lowest priority variable. When operating on variable $x_i$, it creates a bucket $B_i$, which is the set of all cost functions that involve $x_i$ as the highest priority variable in their scope (line~2). The algorithm then computes a new cost function $\hat{f}_i$ by aggregating the functions in $B_i$ and eliminating $x_i$ (line~3). Thus, $x_i$ can be removed from the set of variables $\setf{X}$ to be processed (line~4) and the new function $\hat{f}_i$ replaces in $\setf{C}$ all the cost functions that appear in  $B_i$ (line~5). 
\rev{Thus, a bucket $B_i$ contains both the original WCSP functions as well as the functions placed in it during the variable elimination process.} 
In this work, we refer to the $\hat{f}_i$ functions as the \bemph{bucket functions}.

\item \bemph{Value Assignment Phase}. Once the variable with the lowest priority has been processed, the algorithm considers variables in increasing order of priority. For each variable $x_i$, it generates an optimal assignment by selecting a value $d_i \in D_{x_i}$ that minimizes the cost of the functions in $B_i$ given the assignments of all the other variables appearing in the scope of the functions in $B_i$. 
\eitemize
As a byproduct, and without additional overhead, BE can compute the number of \rev{consistent} solutions of the problem (see \cite{Dechter:03}, for details).
The time and space complexity of BE is exponential on the induced width of the underlying \rev{primal graph}, which captures the maximum arity of the $\hat{f}_i$ functions (line~3).

\begin{example}
In our WCSP example of Fig.~\ref{fig:wcsp}, during the \emph{Variable Elimination Phase}, BE operates, in order, on the variables $x_4$, $x_3$, $x_2$, and $x_1$. 
When $x_4$ is processed, \rev{the bucket $B_4 \eqto \{f(x_1, x_4), f(x_2, x_4), f(x_3, x_4)\}$} is generated, and highlighted in Fig.~\ref{fig:be}(a)(top) by red edges. The resulting bucket function $\hat{f}_4$ is shown in Fig.~\ref{fig:be}(a)(bottom), where the rightmost column shows the values for $x_4$ after its elimination.
BE, hence, updates the sets $\setf{X} \eqto \{x_1,x_2,x_3\}$ and \rev{$\setf{C} \eqto \{f(x_1, x_2), f(x_2,x_3), \hat{f}_4\}$}, as shown in the \rev{primal graph} of Fig.~\ref{fig:be}(b)(top), where the function $\hat{f}_4$ is displayed as a dotted line. 
When $x_3$ is processed, \rev{$B_3 \eqto \{f(x_2, x_3), \hat{f}_4\}$}, 
and $\hat{f}_3$ is shown in Fig.~\ref{fig:be}(b)(bottom). 
Thus, $\setf{X} \eqto \{x_1,x_2\}$ and \rev{$\setf{C} \eqto \{f(x_1, x_2), \hat{f}_3\}$}, as shown in Fig.~\ref{fig:be}(c)(top). 
Next, $x_2$ is processed, and \rev{$B_2 \eqto \{f(x_1, x_2), \hat{f}_3\}$}, and $\hat{f}_2$ is illustrated in Fig.~\ref{fig:be}(c)(bottom). 
Thus, $\setf{X} \eqto \{x_1\}$ and $\setf{C} \eqto \{\hat{f}_2\}$, as shown in Fig.~\ref{fig:be}(d)(top).
Lastly, the algorithm processes $x_1$, sets $B_1 \eqto \{\hat{f}_2\}$, and $\hat{f}_1$ is minimized when 
$x_1 \eqto 0$, as shown in Fig.~\ref{fig:be}(d)(bottom).
\begin{figure}[!tb]
  \centering
  \includegraphics[width=1.00\textwidth]{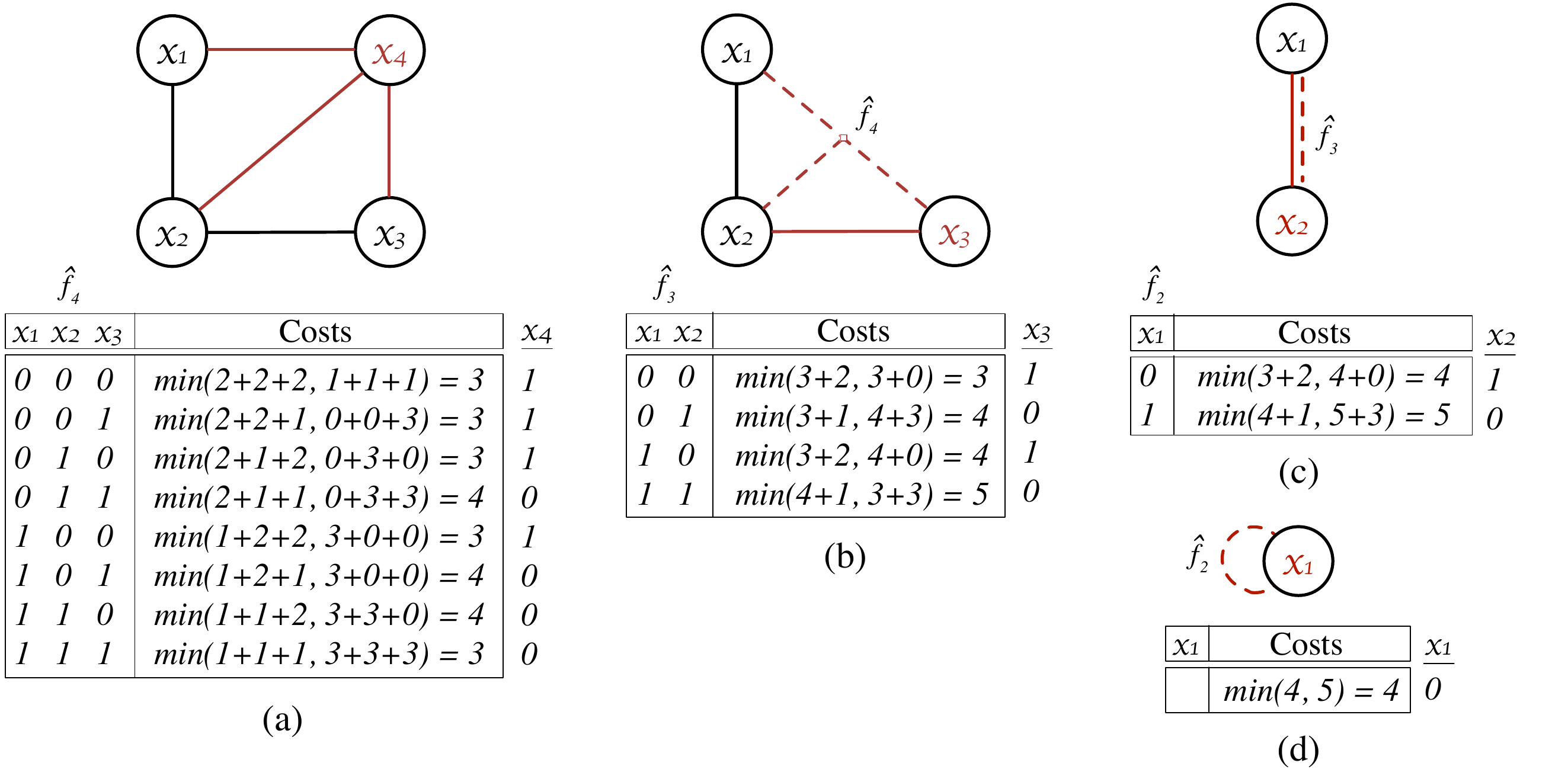}
  \caption{Bucket Elimination steps for the WCSP of Fig.~\ref{fig:wcsp} . 
  \label{fig:be}}  
\end{figure}
Next, BE starts the \emph{Value Assignment Phase}, which operates, in order, on the variables $x_1$, $x_2$, $x_3$, and $x_4$. First, it selects the value that minimizes $\hat{f}_1$, ($x_1 \eqto 0$). Thus, it processes $x_2$, and selects the value $x_2 \eqto 1$, as it minimizes $\hat{f}_2$ when $x_1 \eqto 0$, as illustrated in Fig.~\ref{fig:be}(c)(bottom).  
Similarly, when BE processes $x_3$, it selects the value $x_3 \eqto 0$, as it minimizes $\hat{f}_3$ when $x_1 \eqto 0$ and $x_2 \eqto 1$, illustrated in Fig.~\ref{fig:be}(b)(bottom). 
Finally, BE processes the last variable $x_4$ and assigns it the value $1$, since it minimizes $\hat{f}_4$ when $x_1\eqto 0, x_2\eqto 1$, and $x_3\eqto 0$, illustrated in Fig.~\ref{fig:be}(a)(bottom). Thus,  $\sigma^* \eqto \tuple{0,1,0,1}$ is an optimal solution to the problem, with cost 4.
\end{example}

\begin{algorithm}[!t]
	\tcc{Variable Elimination Phase}
	\For{$i\leftarrow n$ \textnormal{\textbf{downto}} $1$}{
		$B_i \is \{ f_j \in \setf{C} \st x_i \in \scope{j} \land i = \min \{k \st x_k \in \scope{j}\} \}$\;
		Let  $\{ B_{i_1}, \ldots, B_{i_m}\}$ be a partition of $B_i$ s.t. 
		${\left| \bigcup_{f_j \in B_{i_k}} \hspace{-0pt} \scope{j} \right| \leq z}$, for each $k = 1, \ldots, m$\;
		\ForEach{ $k \in \{1, \ldots, m\}$}	{		
			$\hat{f}_{i_k} \is \pi_{-x_i}\Big( \sum_{f_j \in B_{i_k}} f_j \Big)$\;
			$\setf{C} \is (\setf{C} \setminus B_{i_k}) \cup \{\hat{f}_{i_k}\}$\;
		}
		$\setf{X} \is \setf{X} \setminus \{x_i\}$\;
	}
	\tcc{Value Assignment Phase}
	\For{$i\leftarrow 1$ \textnormal{\textbf{to}} $n$} {
		$x_i \is d_i$ s.t.~$d_i \in D_{x_i}$ and $d_i$ is the best extension of $x_1, \ldots, x_{i-1}$ w.r.t.~$B_i$\;
	}
	\Return{$\hat{f}_1$}\;
	\caption{\textsc{Mini-Bucket Elimination}($z$)}
	\label{alg:mbe}
\end{algorithm}

\subsection{Mini-Buckets}
\label{sec:mbe}

The memory complexity and time complexity of BE depend on the arity of the functions $\hat{f}$ produced during the variable elimination step. Such requirements can quickly become infeasible for problems with large induced widths. 
To overcome this limitation, Dechter and Rish proposed an incomplete version of the Bucket Elimination~\cite{dechter:03b}. The \emph{Mini-Bucket Elimination} (MBE) is an approximated version of the BE that allows one to bound the arity of the functions $\hat{f}_i$ generated during the Variable Elimination Phase.
Its pseudocode is illustrated in Algorithm~\ref{alg:mbe}.
Similarly to BE, MBE operates in the following two phases:
\bitemize
\item \bemph{Variable Elimination Phase}. As in BE, during the variable elimination phase, 
MBE operates on the problem variables in decreasing order of priority. However, rather than creating a single bucket function $\hat{f}_i$ whose scope is the union of the scope of each function in the bucket $B_i$, it partitions $B_i$ in a set of $m$ ``mini"-buckets $\{B_{i_1}, \ldots, B_{i_m}\}$, such that the size of the scope of the bucket function $\hat{f}_{i_k}$, obtained by aggregating the functions in $B_{i_k}$, is bounded by a parameter $z$, for each $k \in \{1,\ldots,m\}$ (line~3).
Thus, MBE considers each mini-bucket independently, and computes $m$ new bucket functions $\hat{f}_{i_k}$, by aggregating the functions in $B_{i_k}$ and eliminating $x_i$ (line~5). These functions replace in \setf{C} all the functions that appear in $B_{i_k}$ (line~6), and the set of variables is updated as in BE (line~7). 
\item \bemph{Value Assignment Phase}. This phase 
is analogous to that of BE (lines~8--9).
\eitemize
Consider the elimination step for a variable $x_i \in \setf{X}$. Since:
$$
	\sum_{k=1}^m \left[ \pi_{-x_i}\Big( \sum_{f_j \in B_{i_k}} f_j \Big) \right]
	\leq
	\pi_{-x_i}\Big( \sum_{f_j \in B_{i}} f_j \Big) 
$$
eliminating $x_i$ using mini-buckets produces a lower bound on the optimal cost for the bucket $B_i$. 
Thus, MBE produces a lower bound on the optimal solution cost. Running the Value Assignment Phase might hence return a sub-optimal solution, whose evaluation will be an upper bound on the optimal solution cost.

The time and space complexity of MBE is exponential on the maximal arity of the aggregated functions in the mini-buckets (line~13), and thus it is bounded by the parameter $z$.

\begin{example}
Consider the WCSP of Fig.~\ref{fig:wcsp} solved with MBE using $z=1$.
As in BE, during the \emph{Variable Elimination Phase} MBE operates, in order, on the variables $x_4, x_3, x_2$, and $x_1$. 
When $x_4$ is processed, the bucket \rev{$B_4 \eqto \{f(x_1, x_4), f(x_2, x_4), f(x_3, x_4)\}$}---illustrated by the red edges in Fig.~\ref{fig:be}(a) top---would result in aggregated bucket function whose arity is $3$, and thus exceeds the maximal arity allowed. 
Thus, MBE creates a partition $\{ B_{4_1}, B_{4_2}, B_{4_3} \}$ for $B_4$, whose sets consists of the functions, respectively, \rev{$f(x_1, x_4), f(x_2, x_4)$, and $f(x_3, x_4)$}. 
The resulting functions $\hat{f}_{4_1}$, $\hat{f}_{4_2}$, and $\hat{f}_{4_3}$ have arity $1$, as illustrated in Fig.~\ref{fig:be}(a) bottom.
Then, MBE updates the sets $\setf{X}$ to $\{x_1,x_2,x_3\}$ and $\setf{C}$ to 
\rev{$\{f(x_1, x_2), f(x_2, x_3), \hat{f}_{4_1}, \hat{f}_{4_2}, \hat{f}_{4_3}\}$}, as shown in the \rev{primal graph} of Fig.~\ref{fig:be}(b) top. 
When $x_3$ is processed, \rev{$B_3 \eqto \{f(x_2, x_3), \hat{f}_{4_3}\}$}, marked red in Fig.~\ref{fig:be}(b) top, and the mini-bucket $B_{3_1} \eqto B_3$. The resulting bucket function $\hat{f}_{3_1}$ is shown in Fig.~\ref{fig:be}(b) bottom. 
Thus, $\setf{X} \eqto \{x_1,x_2\}$ and \rev{$\setf{C} \eqto \{f(x_1, x_2), \hat{f}_{4_1}, \hat{f}_{4_2}\hat{f}_{3_1}\}$}. 
Next, $x_2$ is processed; \rev{$B_2 \eqto B_{2_1} \eqto \{f(x_1, x_2), \hat{f}_{4_2}, \hat{f}_{3_1}\}$}, 
and $\hat{f}_{2_1}$ is illustrated in Fig.~\ref{fig:be}(c) bottom. 
Thus, $\setf{X} \eqto \{x_1\}$ and $\setf{C} \eqto \{\hat{f}_{4_1}, \hat{f}_{2_1}\}$. 
Lastly, the algorithm processes $x_1$, sets $B_1 \eqto B_{1_1} \eqto \{\hat{f}_{4_1}, \hat{f}_{2_1}\}$, and $\hat{f}_{1_1}$ is minimized when 
$x_1 \eqto 1$, as shown in Fig.~\ref{fig:be}(d) bottom.
\begin{figure}[!t]
  \centering
  \includegraphics[width=1.00\textwidth]{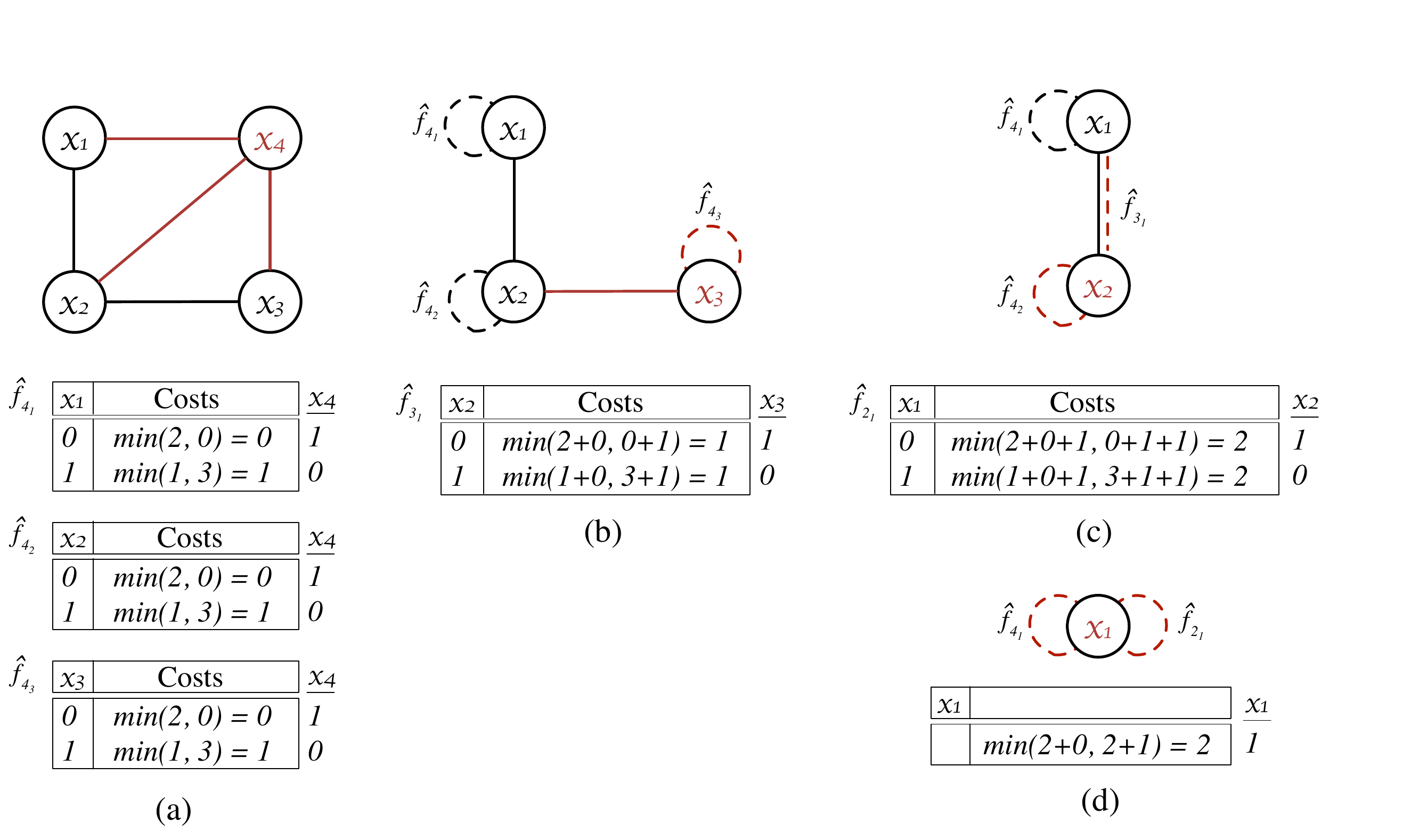}
  \caption{Mini-Bucket Elimination steps for the WCSP of Fig.~\ref{fig:wcsp} :  
  \label{fig:minibe}}  
\end{figure}
%
The \emph{Value Assignment Phase} is analogous to the process carried by BE, except that when processing variable $x_4$ MBE assigns it the value $1$, since it minimizes $\hat{f}_{4_1} + \hat{f}_{4_2} + \hat{f}_{4_3}$ when $x_1\eqto 0, x_2\eqto 1$, and $x_3\eqto 0$ (Fig.~\ref{fig:be}(a) bottom). Thus, $\sigma^* \eqto \tuple{0,1,0,1}$ is the reported solution to the problem, with a lower bound cost of 2.
\end{example}

\section{\rev{Background: Belief Networks and Most Probable Explanation}}
\label{sec:bn}

A \bemph{belief network} (BN) \cite{Pearl:88} is a tuple $\langle \setf{X}, \setf{D}, \setf{P} \rangle$, where 
$\setf{X} = \{x_1, \ldots, x_n\}$ is a set of ordered variables defined over finite domains 
$\setf{D} = \{ D_{x_1}, \ldots, D_{x_n} \}$, with $o$ an ordering of the variables in $\setf{X}$, and 
$\setf{P}$ is a set of \emph{conditional probability tables (CPTs)}. 
A CPT $f_i = \{ \Pr(x_i | \pa_i) \}$ of $\setf{P}$ denotes the join probability of $x_i$ with respect to the variables in $\pa_i$, and $\pa_i \subseteq \{ x_j \in \setf{X} | x_i \prec_o x_j \}$ is the set of variables with higher priority of $x_i$ in the ordering $o$, also called \emph{parent} variables of $x_i$. 
 %
A BN $B$ represents the probability distribution over the variables in $\setf{X}$:
$$
  \Pr\!_{B} (\sigma) = \prod_{i=1}^n \Pr(x_i | \textsl{pa}_i),
$$ 
where $\sigma$ is a complete assignment for the variables in $\setf{X}$. 
The \emph{scope} of a CPT $f_i \in \setf{P}$ is the set $\scope{i} = \{x_i\} \cup \textsl{pa}_i$. An \bemph{evidence set} $\sigma_E$ is an assigned subset of variables $E \subseteq \setf{X}$. 

A BN $B$ is represented through a directed acyclic graph $G_B = (\setf{X}, E_P)$, where $(y, x_i) \in E_P$ iff $\exists f_i \in \setf{P}$ such that $y \in \pa_i$. In other words, $E_P$ is the set of all directed arcs from each parent variable of $x_i$ to $x_i$, for every $x_i \in \setf{X}$. 
The primal graph of a BN is called \bemph{moral graph} and it connects any two variables appearing in the same CPT. 

\begin{figure}[!thb]
  \centering
  \includegraphics[width=0.75\textwidth]{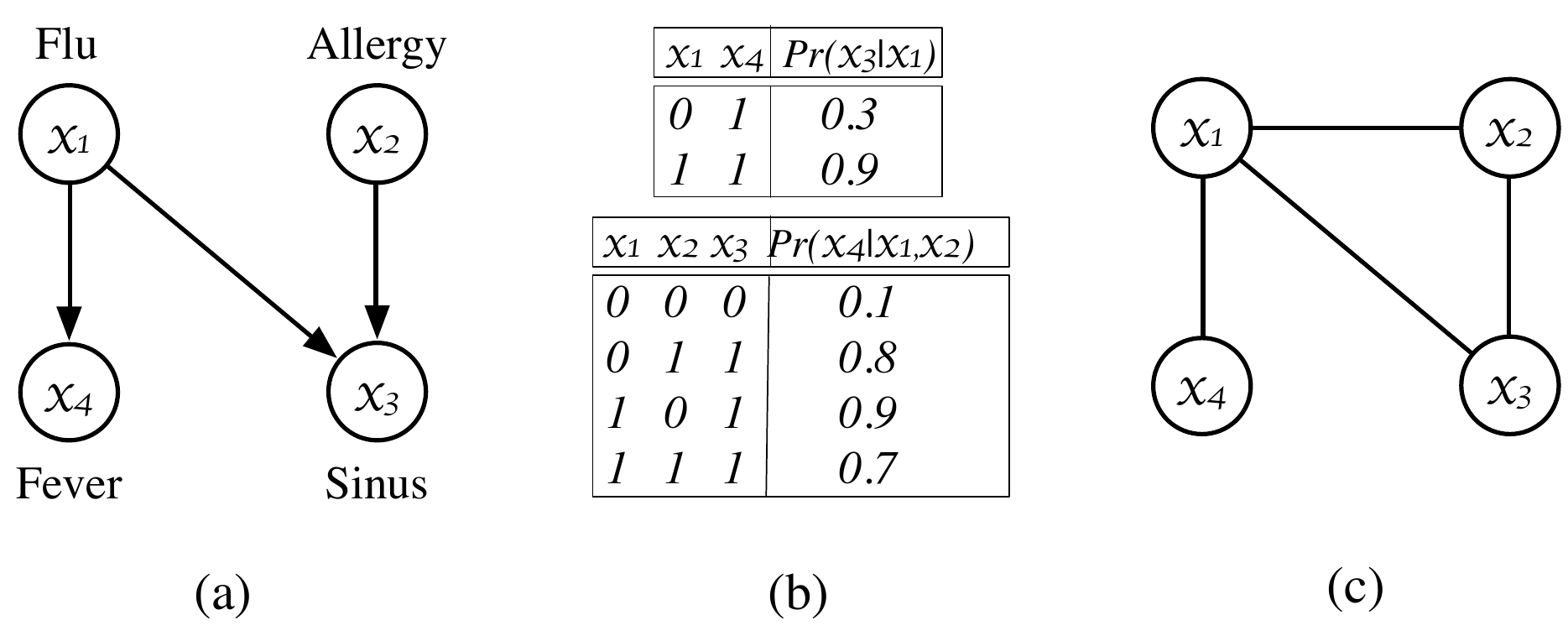}
  \caption{Example of a Belief Network :  
  \textbf{(a):} Belief Network: $Pr(x_4, x_3, x_2, x_1) = Pr(x_4|x_1) Pr(x_3| x_1, x_2) Pr(x_2) Pr(x_1)$. 
  \textbf{(b):} conditional probability tables.
  \textbf{(c):} Its moral graph. 
  \label{fig:bn}}  
\end{figure}

\begin{example}
Fig.~\ref{fig:bn}(a) illustrates a simple belief network with $4$ binary variables: $x_1, x_2, x_3$ and $x_4$, representing respectively the observations for a patient to have \emph{flu}, \emph{allergy}, \emph{fever}, and \emph{sinus infection}, and $4$ CPTs, each associated to a node and its parent nodes. For example, the CPT table illustrated in Fig.~\ref{fig:bn}(b) (top), describes the probability the patient has fever given that she/he does has have \emph{flu}. The conditional probability for $x_4=0$ is implied since the probabilities need to sum up to 1. Similarly, the CPT table of Fig.~\ref{fig:bn}(b) (bottom) describes the probabilities the patient has sinus infection for each combination of outcomes for flu and allergy.
The BN represents the joint probability distribution:
\begin{align*}
\forall x_1, x_2, x_3, x_4, \quad & Pr(x_4, x_3, x_2, x_1) 
	= Pr(x_4|x_1) Pr(x_3| x_1, x_2) Pr(x_2) Pr(x_1).
\end{align*}
Its moral graph is illustrated in Fig.~\ref{fig:bn}(c).
\end{example}

\smallskip

One of the main tasks posed over belief networks is that of finding the \bemph{maximum probably explanation} (MPE). 
%
Given a BN $B$, and an evidence set $E$, finding the MPE correspond to finding a complete assignment for the variables of $B$ that has the maximal probability given the evidence $E$.  More formally, the goal of the MPE is that of finding a complete assignment $\sigma^*$ such that:
\begin{equation}
  \sigma^* = \argmax_{\sigma \in \Sigma} 
    \prod_{f_i \in \setf{P}} \Pr(x_i | \sigma_{\pa_i}, E)
\end{equation}
where $\sigma_{\pa_i}$ is the projection of $\sigma$ to the variables in $\pa_i$.

This problems can be solved with small variations of the Bucket Elimination algorithm presented in section \ref{sec:be}. Bucket Elimination can be adapted to solve MPE tasks on belief networks where the \emph{$\min$ operator} in the projection within the elimination operation is substituted by \emph{$\max$ operator} and the \emph{summation} in the aggregation operator is substituted by the \emph{product} (for more details we refer the reader to~\cite{Dechter:03}).

\section{Background: Distributed Constraint Optimization Problems (DCOPs)}
\label{sec:dcop}

In a \emph{Distributed Constraint Optimization Problem} (DCOP)~\cite{modi:05,petcu:05,yeoh:12}, the variables, domains, and cost functions of a WCSP are distributed among a collection of \emph{agents}. A DCOP is defined as $ \langle \setf{X}, \setf{D}, \setf{C}, \setf{A}, \alpha \rangle$, where $\setf{X}, \setf{D}$, and $\setf{C}$ are defined as in a WCSP,  $\setf{A} \eqto \{a_1, \ldots, a_p\}$ is a set of \emph{agents}, and $\alpha: \setf{X} \rightarrow \setf{A}$ maps each variable to one agent. Following common conventions, we 
assume that $\alpha$ is a bijection: Each agent controls exactly one variable. Thus, we will use the terms ``variable'' and ``agent'' interchangeably and assume that $\alpha(x_i) \eqto a_i$. 
In DCOPs, solutions are defined as for WCSPs, and many solution approaches emulate those proposed in the WCSPs literature. For example, ADOPT~\cite{modi:05} is a distributed version of \emph{Iterative Deepening Depth First Search}, and DPOP~\cite{petcu:05} is a distributed version of BE. The main difference is in the way the information is shared among agents. Typically, a DCOP agent knows exclusively its domain and the functions involving its variable. It can communicate exclusively with its neighbors  (i.e., agents directly connected to it in the \rev{primal graph}\footnote{The \emph{\rev{primal graph}} of a DCOP is equivalent to that of the corresponding WCSP.}), and the exchange of information takes the form of messages.
%
Given a DCOP $P$, and a DFS pseudo-tree $T$ for the \rev{primal graph} $G_P$, we use $N(a_i) \eqto \{ a_j \!\in\! \setf{A} \st \{x_i,x_j\} \!\in\! E_{\setf{C}} \}$ to denote the \bemph{neighbors} of agent $a_i$; 
and $sep(a_i)$ to denote the \bemph{separator} of agent $a_i$, which is the set of ancestor agents that are constrained (i.e.,~they are linked in $G_P$) with agent $a_i$ or with one of its descendant agents in the pseudo-tree $T$.


\begin{example}
\label{ex:dcop}
Fig.~\ref{fig:wcsp}(a--b) illustrate an example of a DCOP instance with $4$ agents, $a_i$  $(i\in \{1\ldots,4\})$,  each controlling one variable, $x_i$. The problem variables, domains and constraints are analogous to those of the WCSP of Example \ref{ex:wcsp}. 
\rev{Fig.~\ref{fig:wcsp}(d)} shows one possible pseudo-tree for the DCOP instance, where the agents $a_1$ and $a_2$ have one pseudo-child: $a_4$. The dotted lines represent backedges.
\end{example} 

\subsection{Dynamic Programming Optimization Protocol (DPOP)}
\label{sec:dpop}

DPOP~\cite{petcu:05} is a dynamic programming based DCOP algorithm that is composed of three phases:
\beitemize
\item \bemph{Pseudo-tree Generation Phase}. In this phase the agents coordinate the construction of a pseudo-tree, realized through existing distributed pseudo-tree construction algorithms~\cite{hamadi:98}. 

\item \bemph{\UTIL\ Propagation Phase}. Each agent, starting from the leaves of the pseudo-tree, computes the optimal sum of costs in its subtree for each value combination of variables in its separator set. The agent does so by aggregating the costs of its functions with the variables in its separator and the costs in the \UTIL\ messages received from its child agents, and then eliminating its own variable. 

\item \bemph{\VALUE\ Propagation Phase}: Each agent, starting from the root of the pseudo-tree, determines the optimal value for its variable. The root agent does so by choosing the value of its variable from its \UTIL\ computations---selecting the value with the minimal cost. It sends the selected value to its children in a \VALUE\ message. Each agent, upon receiving a \VALUE\ message, determines the value for its variable that results in the minimum cost given the variable assignments (of the agents in its separator) indicated in the \VALUE\ message. Once determined, such assignment is further propagated to the children via \VALUE\ messages.
\eitemize


\begin{example}
In our example problem, after coordinating to construct the pseudo-tree (Fig.~\ref{fig:wcsp}(d)), agent $a_4$, being the leaf of the pseudo-tree, starts the \UTIL\ propagation phase, by computing the optimal cost for each value combination of variables $x_1$, $x_2$, and $x_3$ (Fig.~\ref{fig:be}(a)(bottom)), and sending the costs to its parent agent $a_3$ in a \UTIL\ message. Upon receiving the \UTIL\ messages from each of its children,  agents $a_3$ and $a_2$ follow an analogous process.
When the root agent $a_1$ receives the \UTIL\ message from each of its children, it computes the minimum cost of the entire problem, and starts the \VALUE\ propagation phase. It selects the value for its variable that minimizes the problem cost (Fig.~\ref{fig:be}(d)(bottom)) and sends this value down to the pseudo-tree to its child, $a_3$, in a \VALUE\ message. Upon receiving a \VALUE\ message from its parent, each agents follows the same process.
\end{example}

\rev{The time and the space complexities of DPOP are dominated by the \UTIL\ \emph{Propagation Phase}, which is exponential in the size of the largest separator set $sep(a_i)$ for all $a_i \!\in\! \setf{A}$.} 
The other two phases require a polynomial number of linear sized messages (in the number of variables of the problem), and the complexity of the local operations is at most linear in the size of the domain \cite{petcu:05}.

Observe that the \UTIL\ \emph{Propagation Phase} of DPOP emulates the \emph{Variable Elimination Phase} of BE in a distributed context~\cite{brito:10}. 
Given a  pseudo-tree and its ordering $o$, the \UTIL\ message generated by each DPOP agent $a_i$ is equivalent to the aggregated and projected function $\hat{f}_i$ in BE when $x_i$ is processed according to the ordering $o$.


\subsection{Approximate Distributed Pseudotree Optimization} 
\label{sec:ADPOP}

Analogously to how DPOP emulates BE in the distributed context, the \emph{Approximate Distributed Pseudotree Optimization} (ADPOP) algorithm emulates MBE to solve DCOPs \cite{petcu:05c}. ADPOP has the same three phases as DPOP, and given a pseudo-tree and its ordering $o$, the content of the \UTIL\ messages generated by each ADPOP agent $a_i$ is equivalent to the bucket functions $\hat{f}_{i_j}$ ($j \in \{ 1, \ldots, i_m\}$) in MBE when $x_i$ is processed according to the ordering $o$. 

The complexity of ADPOP is exponential in the input parameter $z$, while its \VALUE\ \emph{Propagation Phase} has the same order complexity of the \VALUE\ \emph{Propagation Phase} in DPOP.

\section{Background: Graphical Processing Units (GPUs)}
\label{sec:gpu}

Modern \emph{Graphics Processing Units} (GPUs) are massive parallel architectures, offering thousands of computing cores and a rich memory hierarchy to support graphical processing (e.g., DirectX and OpenGL APIs). NVIDIA's \emph{Compute Unified Device Architecture} (CUDA)~\cite{CUDAbook} aims at enabling the use of the multiple cores of a graphic card to accelerate \emph{general purpose} (non-graphical) applications by providing programming models and APIs that enable the full programmability of the GPU. 
The computational model supported by CUDA is \emph{Single-Instruction Multiple-Threads} (SIMT),  where multiple threads perform the same operation on multiple data points simultaneously.

 \begin{figure}[t]
   \centering
   \includegraphics[width=0.45\textwidth]{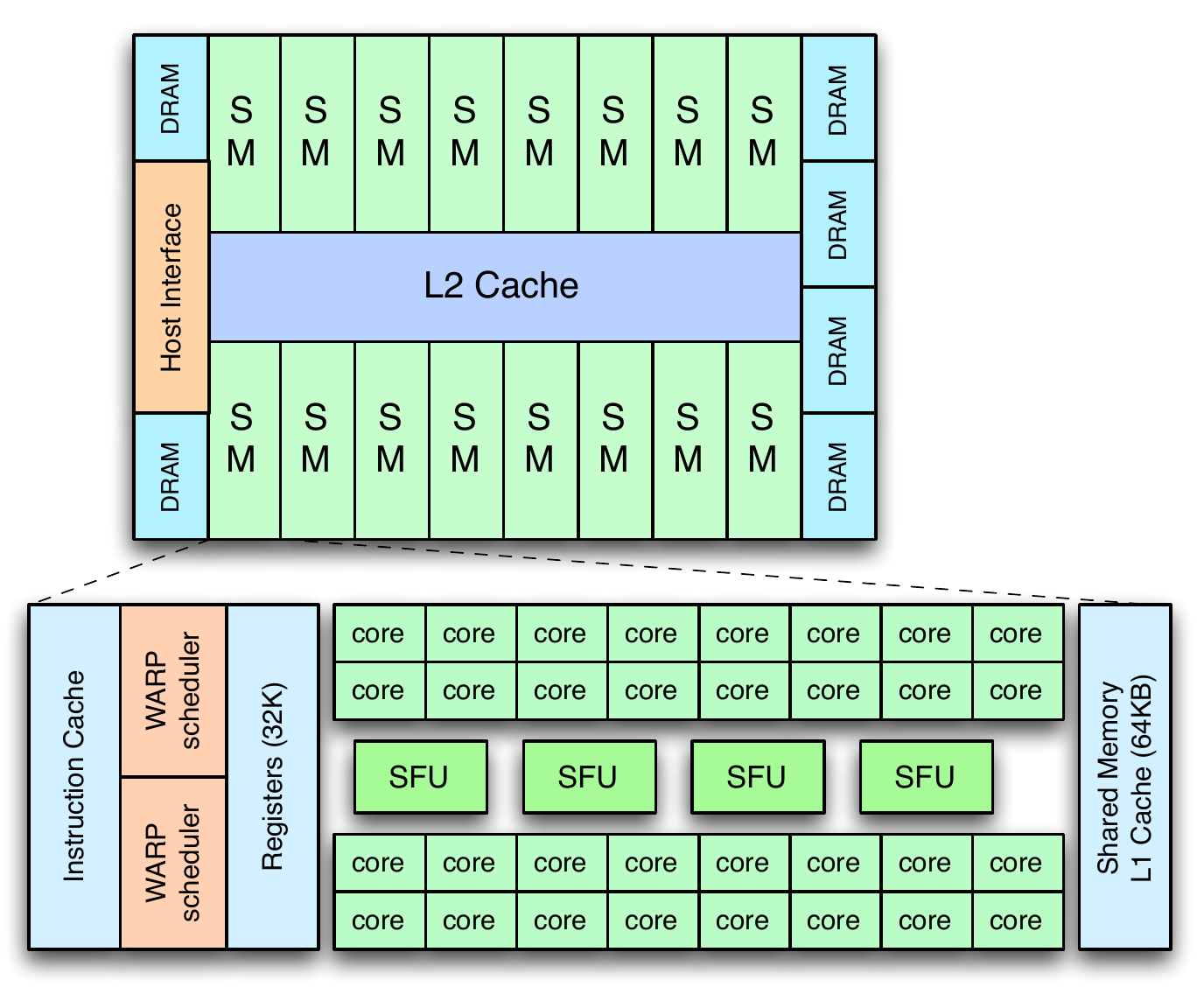}
   \includegraphics[width=0.45\textwidth]{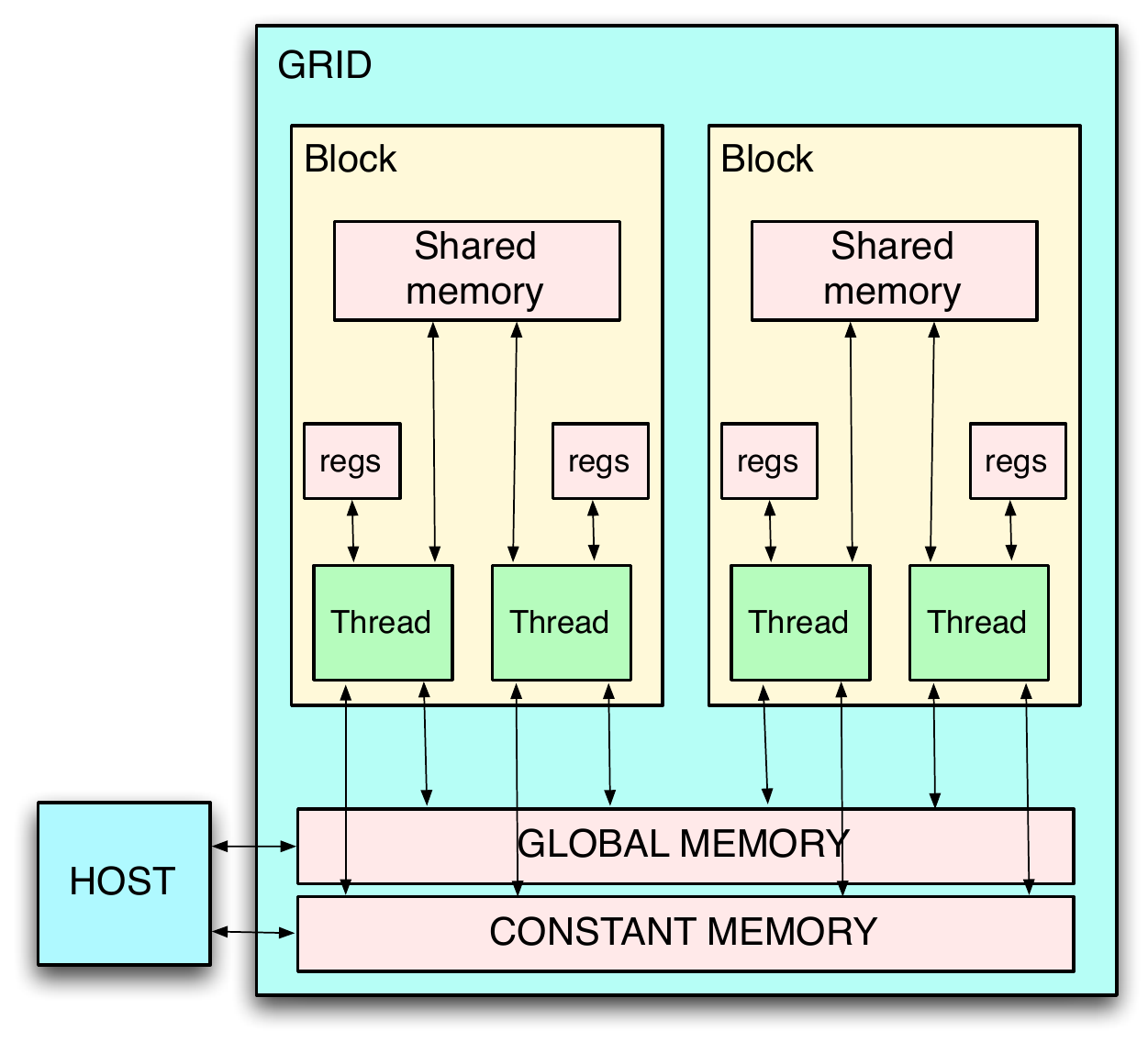}
   \caption{Fermi Hardware Architecture (left) and CUDA Logical Architecture (right)}
   \label{fig:cuda}
 \end{figure}

A GPU is constituted by a series of \emph{Streaming MultiProcessors} (SMs), whose number depends on the specific characteristics  of each class of GPU. For example, the Fermi architecture provides 16 SMs, as illustrated in Fig.~\ref{fig:cuda}(left). Each SM contains a number of computing cores, each of which incorporate an ALU and a floating-point processing unit. 
%
%
Fig.~\ref{fig:cuda}(right) shows a typical CUDA logical architecture. A CUDA program is a C/C++ program that includes parts meant for execution on the CPU (referred to as the \bemph{host}) and parts meant for parallel execution on the GPU (referred as the \bemph{device}). A parallel computation is described by a collection of \bemph{GPU kernels}, where each kernel is a function to be executed by several \bemph{threads}. 
When mapping a kernel to a specific GPU, 
CUDA schedules groups of threads (\bemph{blocks}) on the SMs. In turn, each SM partitions the threads within a block in \bemph{warps}\footnote{A warp is typically composed of 32 threads.} for execution, which represents the smallest work unit on the device. 
%
%
Each thread instantiated by a kernel can be identified by a unique, sequential, identifier ($T_{id}$), which allows to differentiate both the data read by each thread and code to be executed.

\subsection{Memory Organization}

GPU and CPU are, in general, separate hardware units with physically distinct memory types connected by a system bus. Thus, in order for the device to execute some computation invoked by the host and to return the results back to the caller, a data flow needs to be enforced from the host memory to the device memory and vice versa. 
The device memory architecture is quite different from that of the host, in that it is organized in several levels differing to each other for both physical and logical characteristics.

Each thread can utilize a small number of \emph{registers},\footnote{In modern devices, each SM allots 64KB for registers space.} which have  thread lifetime and visibility. Threads in a block can communicate by reading and writing a common area of memory, called \bemph{shared memory}. The total amount of shared memory per block is typically 48KB.
Communication between blocks and communication between the blocks and the host is realized through a large \bemph{global memory}.
The data stored in the global memory has global visibility and lifetime. Thus, it is visible to all threads within the application (including the host), and lasts for the duration of the host allocation. 

Apart from lifetime and visibility, different memory types have also different dimensions, bandwidths, and access times. 
Registers have the fastest access memory, typically consuming zero clock cycles per instruction, while the global memory is the slowest but largest memory accessible by the device, with access times ranging from 300 to 600 clock cycles. 
The shared memory is partitioned into 32 logical banks, each serving exactly one request per cycle.
Shared memory has an extremely small access latency, provided that multiple thread memory accesses are mapped to different memory banks.


\subsection{Bottlenecks and Common Optimization Practices}
 \begin{figure}[t]
   \centering
   \includegraphics[width=0.99\textwidth]{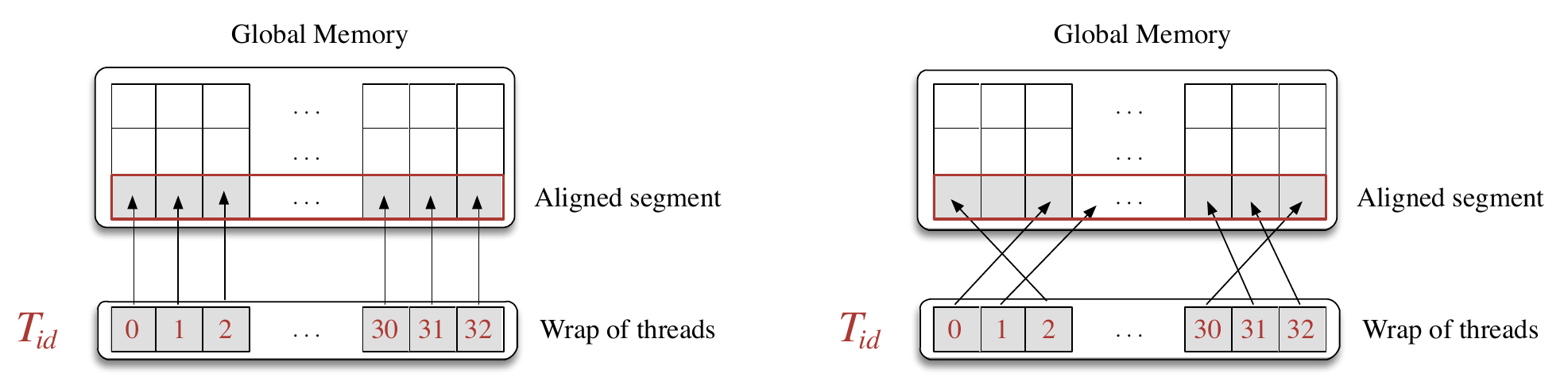}
   \caption{Coalesced (left) and scattered (right) data access patterns.}
   \label{fig:coaleshed}
 \end{figure}

While it is relatively simple to develop correct GPU programs (e.g., by incrementally modifying an existing sequential program), it is nevertheless challenging to design an efficient solution. Several factors are critical in gaining performance. In this section, we discuss a few common practice that are important for the design of efficient CUDA programs.

Memory bandwidth is widely considered to be an important bottleneck for the performance of GPU applications. 
%
%
Accessing global memory is relatively slow compared to accessing shared memory in a CUDA kernel. However, even if not cached, global accesses covering a contiguous 128 Bytes data are fetched at once. Thus, most of the global memory access latency can be hidden if the GPU kernel employs a \emph{coalesced} memory access pattern.
Fig.~\ref{fig:coaleshed}(left) illustrates an example of coalesced memory access pattern, in which aligned threads in a warp accesses aligned entries in a memory \emph{segment}, 
which results in a single transaction. Thus, coalesced memory accesses allow the device to reduce the number of fetches to global memory for every thread in a warp. In contrast, when threads adopt a \emph{scattered} data accesses (Fig.~\ref{fig:coaleshed}(right)), the device serializes the memory transaction, drastically increasing its access latency.

Data transfers between the host and device memory is performed through a system bus, which translates to slow transactions. Thus, in general, it is  convenient to store the data onto the device memory. Additionally, batching small memory transfers into a large one will reduce most of the per-transfer processing overhead~\cite{CUDAbook}.


The organization of the data in data structures and data access patterns play a fundamental role in the efficiency of the GPU computations.
Due to the computational model employed by the GPU, it is important that each thread in a warp executes the same branch of execution. When this condition is not satisfied (e.g., two threads execute different branches of a conditional construct), the degree of concurrency typically decreases, as the execution of threads performing separate control flows can be serialized. This is referred to as \emph{branch divergence}, a phenomenon that has been intensely analyzed within the \emph{High Performance Computing} (HPC) community~\cite{HanA11,ChakrounMMB13,DiamosAMKWY11}.

\section{GPU-based (Distributed) Bucket Elimination (GPU-(D)BE)}
\label{sec:alg}

Our \emph{GPU-based (Distributed) Bucket Elimination} (GpuBE) framework, extends BE and MBE (DPOP and ADPOP, respectively) by exploiting GPU parallelism within the \emph{aggregation} and \emph{elimination} operations.
These operations are responsible for the creation of the functions $\hat{f}_i$ in BE and $\hat{f}_{i_k}$ in MBE (lines~3 and 5 of Algorithms~1 and 2, respectively) and the \UTIL\ tables in DPOP and ADPOP (\UTIL\ \emph{Propagation Phase}), and they dominate the complexity of the algorithms. 
Thus, we focus on the details of the design and the implementation relevant to such operations.
The key observation that allows us to parallelize these operations is that the computation of the cost for each value combination in a bucket function is independent of the computation in the other combinations. The use of a GPU architecture allows us to exploit such independence, by concurrently exploring several value combinations of the bucket function, computed by the aggregation operator, as well as concurrently eliminating out variables.

Due to the equivalence of BE (resp. MBE) and DPOP (resp. ADPOP), we will refer to the \emph{bucket} functions $\hat{f}$ and \emph{\UTIL\ tables} resulted by the aggregation and elimination operations of Algorithms \ref{alg:be} and \ref{alg:mbe}, as well as variables and agents, interchangeably.

\subsection{GPU Data Structures}

In order to fully capitalize on the parallel computational power of GPUs, the data structures need to be designed in such a way to limit the amount of information exchanged between the CPU host and the GPU device, minimizing the accesses to the (slow) device global memory, while ensuring that the data access pattern enforced is coalesced.
To do so, we store into the device global memory exclusively the minimal information required to compute the bucket functions, which are communicated to the GPU once at the beginning of the computation of each bucket or mini-bucket. 
This allows the GPU kernels to communicate with the CPU host exclusively to exchange the results of the aggregation and elimination processes.

We introduce the following concept:
\begin{definition}[Bucket-table] 
A \emph{bucket-table} is a $4$-tuple, $T =\langle \setf{S},  \setf{R}, \chi, \prec \rangle$, where:
\bitemize
\item $\setf{S} \subseteq \setf{X}$, is a list of variables denoting the \emph{scope} of $T$.
\item $\setf{R}$ is a list of tuples of values, each tuple having length $|S|$. Each element in this list (called \emph{row} of $T$) specifies an assignment of values for the variables in $\setf{S}$ that is consistent with their domains. We denote with $\setf{R}[i]$ the tuple of values corresponding to the $i$-th row in $\setf{R}$, for $i = \{1, \ldots, |\setf{R}|\}$.
\item $\chi$ is a list of length $|R|$ of cost values corresponding to the costs of the assignments in $\setf{R}$. 
In particular, the element $\chi[i]$ represents the cost of the assignment $\setf{R}[i]$ for the variables in $\setf{S}$, with $i = \{1, \ldots, |\setf{R}|\}$.
\item $\prec$ denotes an ordering relation used to sort  the variables in the list $\setf{S}$. In turn, the  value assignments, and cost values, in each row of $\setf{R}$ and $\chi$, respectively, obey to the same ordering.
\eitemize
\end{definition}
As a technical note, a bucket table $T$ is mapped onto the GPU device to store exclusively the cost values $\chi$, not the associated variables values. We assume that the rows of $\setf{R}$ are sorted in lexicographic order---thus,
the $i$-th entry $\chi[i]$ is associated with the $i$-th permutation $\setf{R}[i]$ of the variable values in $\setf{S}$, in lexicographic order. This strategy allows us to employ a simple perfect hashing to efficiently associate row numbers with variables' values. We will elaborate on this topic in Section \ref{sec:aggregation}.
Additionally, all the data stored on the GPU global memory is organized in mono-dimensional arrays, so as to facilitate \emph{coalesced memory accesses}.

\begin{algorithm}[htbp]
	\small{\caption{\textsc{GpuBE}(z)}\label{alg:gpu-dbe}}
	\SetNlSty{textbf}{}{~~}
	\tcc{Variable Ordering Phase (Pseudo-Tree Construction)}
	$\setf{\bar{X}} \gets $ Sort $\setf{X}$ w.r.t.~$\prec_{T}$ ordering\;
	
	\tcc{Variable Elimination Phase}
	\For{$i\leftarrow n$ \textnormal{\textbf{downto}} $1$, with $x_i \in \setf{\bar{X}}$} {
		$B_i \gets$ \textsc{Cpu{::}ConstructBucket($\setf{C}, x_i, z$)}\;
		Let  $\{ B_{i_1}, \ldots, B_{i_m}\}$ be a partition of $B_i$ s.t. 
		${\left| \bigcup_{f_j \in B_{i_k}} \hspace{-0pt} \scope{j} \right| \leq z}$, for each $k = 1, \ldots, m$\;
		\ForEach{ $k \in \{1, \ldots, m\}$}	{		
			$T_{i_k} = \langle B_{i_k}, \setf{R}_{i_k}, \chi_{i_k}, \prec_T \rangle \paral{} \textsc{Gpu{::}Reserve}( |\setf{R}_{i_k}| )$\;
			\SetNlSty{textbf}{}{~~}
			\ForEach{$f_j \in B_{i_k}$} {			
				\SetNlSty{textbf}{}{*}
				$T_j = \langle \scope{f_j}, \setf{R}_{j}, \chi_{j}, \prec_T \rangle \paral{D \gets H} \textsc{Gpu{::}Reserve}( |\setf{R}_j| )$\;
				\SetNlSty{textbf}{}{*}
				$T_{i_k} \paral{} \textsc{Gpu{::}Aggregate}( T_{i_k}, T_j )$\;
			}
			$\hat{f}_{i_k} \paral{H \gets D} \textsc{Gpu{::}Eliminate}( T_{i_k}, x_i )$\;
		}
	}
	\tcc{Variable Assignment Phase}
	\For{$i \leftarrow 1$ \textnormal{\textbf{to}} $n$, with $x_i \in \setf{\bar{X}}$} {
		$x_i \gets$ \textsc{Cpu{::}FindBestAssignment($x_1, \ldots, x_{i-1}$)}
	}
	\Return{$\hat{f}_1$}\;
\end{algorithm}

\subsection{Algorithm Overview}
\label{sec:algorithm}

Algorithm~\ref{alg:gpu-dbe} illustrates the pseudocode of GpuBE, where $z$ is an input parameter denoting the maximal mini-bucket size to be processed. 
We use the following notations:
\begin{itemize} 
\item Starred line numbers denote those instructions that are executed concurrently by both the CPU and the GPU. 
\item The symbols $\gets$ and $\paral{}$ denote sequential and parallel (i.e., multiple GPU threads) operations, respectively. 
\item If a parallel operation requires a copy from host (device) to device (host), we write $\!\paral{D \gets H}$ ($\!\paral{H \gets D}$).
Host to device (device to host) memory transfers are performed immediately before (after) the execution of the GPU kernel. 
\end{itemize}

GpuBE is composed of three phases: 
\textbf{(1)} \emph{Variable Ordering}, 
\textbf{(2)} \emph{Variable Elimination}, and 
\textbf{(3)} \emph{Variable Assignment}. 
Let us consider $N(x_i) \eqto \{ x_j \!\in\! \setf{X} \st \{x_i,x_j\} \!\in\! E_{\setf{C}} \}$, 
 defined analogously as for the agents' case.
During the first phase (line~1), the problem variables are sorted
according to a pseudo-tree ordering relation; in particular, we apply the following
heuristics in the construction of the pseudo-tree:
$x_i \prec_T x_j$  \textit{iff}  $|N(x_i)|  < |N(x_j)|$, for every $x_i, x_j \in \setf{X}$.
For the distributed case, this phase is identical to that of (A)DPOP, where the agents coordinate the construction of a pseudo-tree, using an off-the-shelf message-passing algorithm \cite{hamadi:98}.

In the second phase, the algorithm processes each variable, in descending order, according to the relation $\prec_T$, and proceeds as in (M)BE:
\bitemize
	\item The function $\textsc{Cpu::ConstructBucket}$ constructs the bucket $B_i$ as illustrated in Algorithm 1, line~2. The algorithm proceeds in creating a partition of this bucket, if 
	required (i.e., if $z < w^*$).
	This phase differs slightly in the distributed case, where each agent, upon receiving a new bucket function from its descendant agents, inserts it into its bucket set $B_i$.
	
	\item For each mini-bucket $B_{i_k}$ ($k=1,\ldots,m$), GpuBE determines and reserves the amount of global memory to be assigned to each associated bucket-table $T_{i_k}$ (line~6). 
	After the \textsc{Gpu::Reserve} function is invoked, a space sufficient to store the 
	bucket-table is allocated, and its cost values $\chi_{i_k}$ are initialized to $0$.

	\item Thus, GpuBE aggregates the bucket-table $T_j$ associated to each function $f_j$ in the mini-bucket with the bucket-table $T_{i_k}$ (lines~7--9). To do so, it first creates a bucket-table $T_j$ that encodes the cost values of the bucket function $f_j$, reordering them, if necessary, according to the order on its scope specified by the pseudo-tree relation $\prec_T$ (line~8). This procedure requires a memory transfer from the CPU host to the GPU device global memory. Then, it adds the values $\chi_j$ of the aggregating bucket-table $T_j$ into the corresponding entries of the bucket-table $T_{i_k}$ (line~9).
	We will further discuss the details of this function, as well as the other kernel functions, in the next sections. 
	
	\item Finally, the algorithm invokes a GPU call to eliminate the variable $x_i$ from the bucket-table ${T}_{i_k}$, thereby constructing the bucket function $\hat{f}_{i_k}$, which is, finally, copied back to  the CPU host memory (line~10).

\eitemize
In the distributed case, each agent processes lines~3--6 in parallel without prior coordination. 
Starting from the leaves of the pseudo-tree, the agents build their \UTIL\ messages containing the bucket functions (lines~5--10), and send them to their parents. 
Thus, each agent waits to receive the \UTIL\ messages from all of its children before performing the aggregation and elimination operations (lines~7--9 and line~10, respectively) for each mini-bucket. 
By the end of this phase (line~10), the root agent knows the overall cost for each values of its variable $x_i$. Thus, it chooses the value that results in the minimum cost, and it starts the third phase by sending to each child agent the value of its variable $x_i$. 

In the centralized case, when space is not a concern, there is no need of copying the bucket tables back to the host, after the variable elimination step (line~10). Thus, two memory transfer transactions are avoided for each variable being processed.

In the third phase, the algorithm proceeds analogously to as done in (M)BE. For the distributed case, the agents select the values for their variables that minimize their bucket functions costs, given the assignments of their ancestor agents, and send them in \VALUE\ messages to their children. These operations are repeated by every agent receiving a \VALUE\ message until the leaf agents are reached. 

While we described the case in which the underlying problem \rev{primal graph} is connected, our implementation allows us to handle disconnected graphs. This is done by solving the sub-problems in each connected subgraph independently from other subproblems,   
and retrieving the problem cost by aggregating the costs stored in the root of each pseudo-tree associated to the connected graphs.



\subsection{GPU-based Constraint Aggregation}
\label{sec:aggregation}

We now describe the implementation of the constraint aggregation GPU kernel. 
This operation, takes as input two bucket-tables: $T_{i_k}$ and $T_j$, and aggregates the cost values in $\chi_j$ to those of $\chi_{i_k}$ for all the  corresponding assignments of the shared variables in the scope of the two bucket-tables. We refer to $T_{i_k}$ and $T_j$ as to the \emph{output} and \emph{input} bucket-tables, respectively.

Consider the example in Fig.~\ref{fig:aggregate}, the cost values $\chi_j$ of the input bucket-table $T_j$ (right) are aggregated to the cost values $\chi_{i_k}$ of the output bucket-table $T_{i_k}$ (left)---which where initialized to $0$. The rows of the two tables with identical value assignments for the shared variables $x_2$ and $x_3$ are shaded with the same color.

\begin{figure}[htbp]
  \centering
  \includegraphics[width=0.60\textwidth]{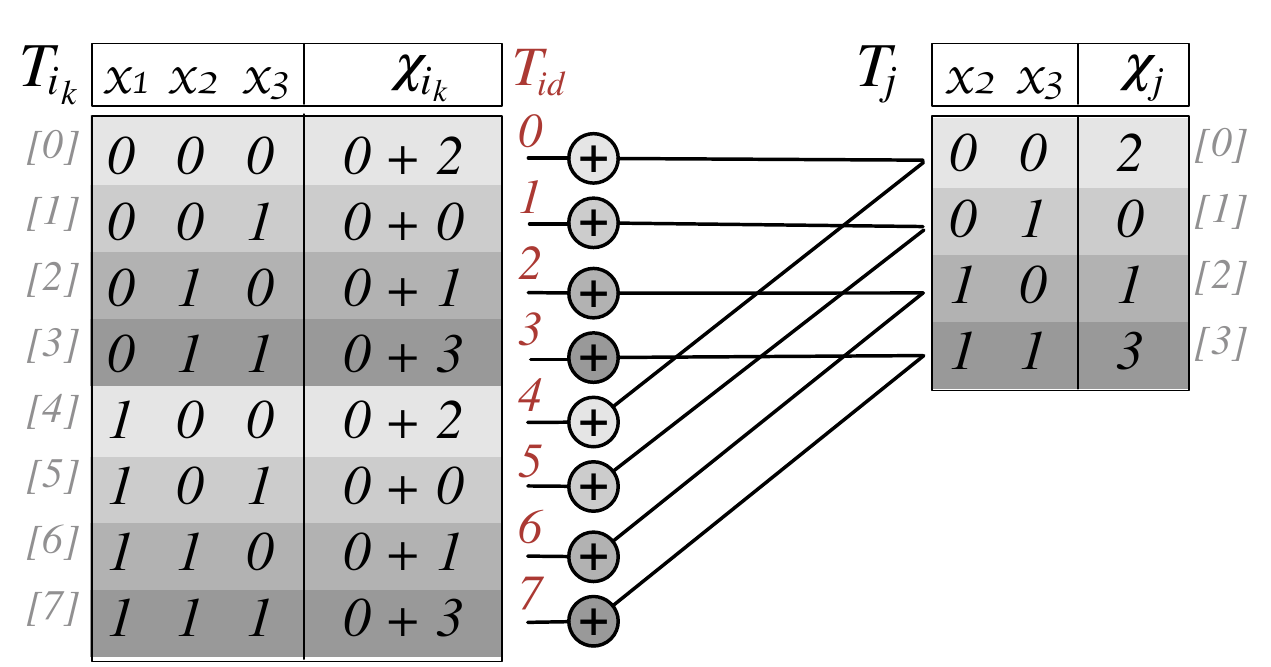}
  \caption{Example of aggregation of two tables on GPU.
  \label{fig:aggregate}}  
\end{figure}

To optimize performance of the GPU operations and to avoid unnecessary data transfer to/from the GPU global memory, we only transfer the list of cost values $\chi$ for each bucket-table that need to be aggregated, and employ a simple perfect hashing
function  to efficiently associate row numbers with variables' values. 
This allows us to compute the indices of the cost vector of the input bucket-table relying exclusively on the information of the thread ID and, thus, avoiding accessing the scope $\setf{S}$ and assignment vectors $\setf{R}$ of the input and output bucket-tables.

We now discuss how this process can be efficiently handled on the GPU kernels.  
Let $T^\out = \langle \setf{S}^\out, \setf{R}^\out, \chi^\out, \prec^\out \rangle$ be the output bucket-table, whose scope is 
$\setf{S}^\out = \{x_1^\out, \ldots, x_m^\out\}$.
Let $T^\inp = \langle \setf{S}^\inp, \setf{R}^\inp, \chi^\inp, \prec^\inp \rangle$ be the input bucket-table, whose scope is $\setf{S}^\inp = \{ x_1^\inp, \ldots, x_s^\inp \}$, and such that $\setf{S}^\inp \sqsubseteq \setf{S}^\out$, 
where $A \sqsubseteq B$ denotes that A is a subsequence of B, and with $s \leq m$. Additionally, let $x_m^\out = x_s^\inp$, i.e., the last variable of the input and output bucket-table scopes coincides. The latter is the variable to be eliminated; We will explain this design choice in the next section, where we will discuss the variable elimination process on a GPU. 
Finally, let $\phi_\out: \mathbb{N} \to \mathbb{N}$ be a mapping from input bucket-table scope variables indexes to output bucket-table scope variable indexes, such that 
$\phi_\out(i) = j$ \textit{iff} $x_i^\inp = x_j^\out$. 
For instance, in our example of Fig.~\ref{fig:aggregate}, $\phi_\out(0) = 1$, as the variable $\setf{S}_j[0] = \setf{S}_{i_k}[1] = x_2$. 
Hence, given a row index $r_{\out}$ for the output bucket-table $\chi^\out$, the corresponding row index $r_\inp$ associated to the input bucket-table cost array $\chi^\inp$ is given by:
\begin{equation}
\small{
r_{\inp} = 
\sum_{k=1}^{s-1} \left[ 
	\underbrace{
		\left(\prod_{j = k+1}^{s} |D_{x_j^\inp}| \right)
	}_{\varf{mul}[k]}
	\!\cdot\!
	 \left( \lfloor
	 	\frac{r_\out}
	        {\displaystyle\underbrace{\prod_{j = \phi_\out(k) + 1}^{m} |D_{x_j^\out}|}_{\varf{div}[k]} }
	         \rfloor 
	 \!\!\!\!\mod\!
	 \underbrace{|D_{x_k^\inp}|}_{\varf{mod}[k]} 
	 \right)   
\right]
	 +
	 r_\out\!\!\!\! \mod \! \underbrace{|D_{x_s^\inp}|}_{\varf{mod}[s]}
}
\label{eq:hash}
\end{equation}
Each term in the summation of Equation~\eqref{eq:hash} represents the contribution of the $k$-th variable's value in $\setf{R}^\out[r_\out]$, as an offset to the index $r_\inp$ in the array $\setf{R}^\inp$. 

The vectors $\varf{mul}$, $\varf{div}$, and $\varf{mod}$ are data structures employed to compute efficiently the $r_\inp$ indices on the GPU. The values $\varf{mul}[k]$, $\varf{div}[k]$, and $\varf{mod}[k]$ (and $\varf{mod}[s]$) can be efficiently computed in $O(s)$, $O(n)$, and $O(1)$, respectively, for each $k=\{1,\ldots,s-1\}$,  and copied onto the GPU global memory with one copy transaction---we allocate them as a single mono-dimensional array.

In order to exploit the highest degree of parallelism offered by the GPU device,  we 
\textbf{(1)} map one GPU thread $T_{id}$ to one element of the output bucket-table $r_\out$ and 
\textbf{(2)} adopt the ordering relation $\prec_T$ for each input and output bucket-table processed. 
Adopting such techniques allows each thread to be responsible of performing exactly two reads and one write from/to the GPU global memory. Additionally, the ordering relation enforced on the bucket-tables allows us to exploit the locality of data and to encourage coalesced data accesses. 
As illustrated in Fig.~\ref{fig:aggregate}, this paradigm allows threads (whose IDs are identified in red by their $T_{id}$'s) to operate on contiguous chunks of data and, thus, minimizes the number of actual read (from the input bucket-table, on the right) and write (onto the output bucket-table, on the left) operations from/to the global memory performed by a group of threads with a single data transaction.\footnote{Accesses to the GPU global memory are cached into cache lines of $128$ Bytes, and can be fetched by all requiring threads in a warp.}

\begin{procedure}[!t]
	  $r_{i_k} \gets $ the thread's entry ID ($T_{id}$)\; 
	  $r_{j} \gets 0$ \tcc{holds the value of the index entry of $\chi_j$}
	  $s \gets |\setf{S}_j|$\;
	  $\tuple{ \varf{mul}, \varf{div}, \varf{mod}} \gets \textsc{CopyToSharedMemory}()$\;
	  \For{$\ell \gets (1 \ldots s \!-\! 1)$} {
	  	$r_{j} \gets r_j + \varf{mul}[\ell] \cdot 
		\big( \lfloor \frac{r_{i_k}}{\varf{div}[\ell]} \rfloor) \% \varf{mod}[\ell]$ \big)\;
	  }
	  $r_{j} \gets r_{j} + \big( r_{i_k} \% \varf{mod}[s] \big)$\;  
	  $\chi_{i_k}[r_{i_k}] \gets \chi_{i_k}[r_{i_k}] + \chi_j[r_j]$\;
	{\caption{{Gpu}{::}Aggregate($T_{i_k}, T_j$)} \label{alg:aggregation}}
\end{procedure}
The constraint aggregation GPU kernel is described in Procedure \ref{alg:aggregation}, which is computed in parallel by a number of threads equal to the number of rows of the output bucket-table. 
Each thread identifies its row index $r_{i_k}$ within the output bucket-table cost values array $\chi_{i_r}$ based on its thread ID (line~1), 
and it initializes a variable that will contain the input bucket-table row index to 0 (line~2). It then copies into the shared memory the static entities $\varf{mul}, \varf{div}$, and  $\varf{mod}$ associated to the aggregation of the the bucket-tables being processed (line~4).
A further inspection to the \ref{alg:aggregation} procedure reveals how it makes use of the auxiliary data structures above to efficiently implement the \emph{hash function} of equation~\eqref{eq:hash}, and retrieve the entry index of the input bucket-table associated to the variables value permutation of the output bucket-table $\setf{R}_{i_k}[r_{i_k}]$ (lines~5--7).
Finally,  the instruction in line~8 aggregates the corresponding input bucket-table value to the output bucket-table $\chi_{i_k}[r_{i_k}]$.

Note that this algorithm highly fits the SIMT paradigm adopted by GPUs; the thread ID and the auxiliary $\varf{mul}, \varf{div}$, and  $\varf{mod}$ arrays are used to retrieve and update all the data necessary to compute the output bucket-table.
Additionally, the accesses to the global memory are minimized, as the auxiliary arrays are copied into the shared memory. 

We illustrate the above process in the following example.
\begin{example}
Consider the operation of aggregating the input bucket-table $T_{j}$ with the bucket-table $T_{i_k}$ of Fig.~\ref{fig:aggregate} corresponding, respectively, to the bucket-table representing the constraint $f_{23}$ and the bucket-table $\hat{f}_3$ (before eliminating the variable $x_3$) in Fig.~\ref{fig:be}(b). 
With the Equation~\eqref{eq:hash} notation, $s = 2$, $m=3$ 
and, thus, the index $k$ of the summation ranges from 1 to $s-1= 1$. Therefore: 
\begin{align*} 
\varf{mul}[0] &= \textstyle \prod_{j=2}^2 |D_{x_j}| = 2 
& \varf{div}[0]  &= \textstyle \prod_{j=\phi_\out(1)+1= 2}^2 |D_{x_j}| = 2 \\
\varf{mod}[0]  &= \textstyle |D_{x_2}| = 2 
& \varf{mod}[1]  &= \textstyle |D_{x_3}| = 2 
\end{align*}
Therefore, the mapping from the thread IDs (or, equivalently, the output bucket-table row indices $r_{i_k}$) to the input bucket-table row indices $r_j$ is: 
\begin{align*} 
T_{id} &= 0  \quad\Rightarrow \quad r_j = 2 \cdot (\textstyle \lfloor \frac{0}{2} \rfloor ) + 0 \!\!\!\!\mod\! 2 = 0 \\
T_{id} &= 1 \quad\Rightarrow \quad r_j = 2 \cdot (\textstyle \lfloor \frac{1}{2} \rfloor ) + 1 \!\!\!\!\mod\! 2 = 1 \\
T_{id} &= 2 \quad\Rightarrow \quad r_j = 2 \cdot (\textstyle \lfloor \frac{2}{2} \rfloor ) + 2 \!\!\!\!\mod\! 2 = 2 \\
T_{id} &= 3 \quad\Rightarrow \quad r_j = 2 \cdot (\textstyle \lfloor \frac{3}{2} \rfloor ) + 3 \!\!\!\!\mod\! 2 = 3 \\
T_{id} &= 4 \quad\Rightarrow \quad r_j = 2 \cdot (\textstyle \lfloor \frac{4}{2} \rfloor ) + 4 \!\!\!\!\mod\! 2 = 0 \\
T_{id} &= 5 \quad\Rightarrow \quad r_j = 2 \cdot (\textstyle \lfloor \frac{5}{2} \rfloor ) + 5 \!\!\!\!\mod\! 2 = 1 \\
T_{id} &= 6 \quad\Rightarrow \quad r_j = 2 \cdot (\textstyle \lfloor \frac{6}{2} \rfloor ) + 6 \!\!\!\!\mod\! 2 = 2 \\
T_{id} &= 7 \quad\Rightarrow \quad r_j = 2 \cdot (\textstyle \lfloor \frac{7}{2} \rfloor ) + 7 \!\!\!\!\mod\! 2 = 3
\end{align*} 
\end{example}

As a technical detail, the bucket-tables are created and processed so that the variables in their scope are sorted according to the order $\prec_T$.
This means that the variables with the highest priority appear first in the scope list, while the variable to be eliminated always appear last. 
We will see, in the next section, that such detail allows us to efficiently encode the elimination operation on the GPU.

\begin{figure}[!h]
  \centering
  \includegraphics[width=0.75\textwidth]{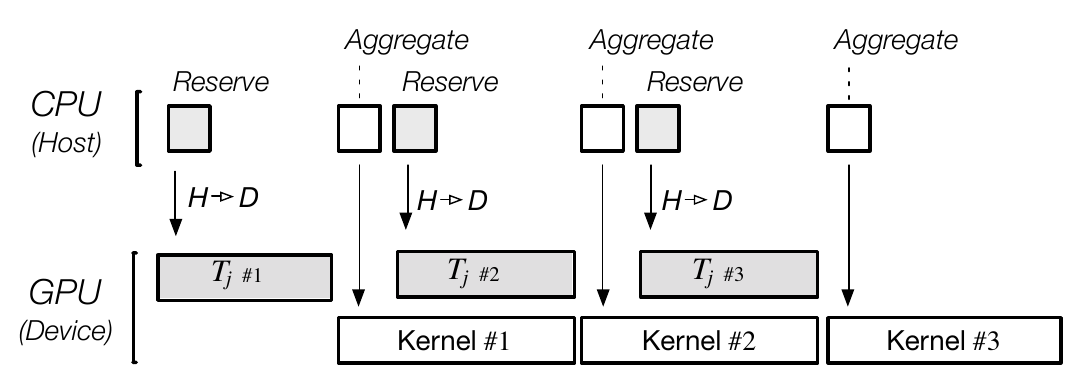}
  \caption{Concurrent computation between host and device.\label{fig:hostdev}}  
\end{figure}

To fully capitalize on the use of the GPU, we exploit an additional level of parallelism, achieved by running GPU kernels and CPU computations concurrently (lines~8--9 of Algorithm \ref{alg:gpu-dbe}). This is possible when the $T_j$ bucket-tables can be partitioned in multiple chunks.  
Fig.~\ref{fig:hostdev} illustrates this operation. After transferring the first bucket-table chunk ($T_j$ ${}_{\#1}$) into the device memory, the process starts the execution of the \ref{alg:aggregation}() kernel, which operates on this portion of the bucket table (called \emph{Kernel} ${}_{\#1}$ in Fig.~\ref{fig:hostdev}). 
Thus, the control immediately returns to the CPU host, which enforces the next data transfer onto the device memory, through a call to a \textsc{Gpu{::}Reserve}($T_j$ ${}_{\#2}$).
A host-device synchronization point is imposed after each memory transfer (except the first one), to ensure that no overlapping \ref{alg:aggregation}() GPU kernels are enforced.


\subsection{GPU-based Variable Elimination}
\label{sec:elimination}
We now describe the implementation of the variable elimination GPU kernel. This operation takes as input a bucket-table $T_{i_k}$ and a variable $x_i \in \setf{S}_{i_k}$ and removes this variable from the bucket-table's scope, optimizing over its cost rows. As a result, the output bucket-table rows list the unique assignments for the value combinations of $\setf{S}_{i_k} \setminus \{x_{i}\}$ in the input bucket-table $\setf{R}_{i_k}$ which minimizes the costs values for each $d \in D_{x_i}$.

\begin{figure}[!t]
  \centering
  \includegraphics[width=0.70\textwidth]{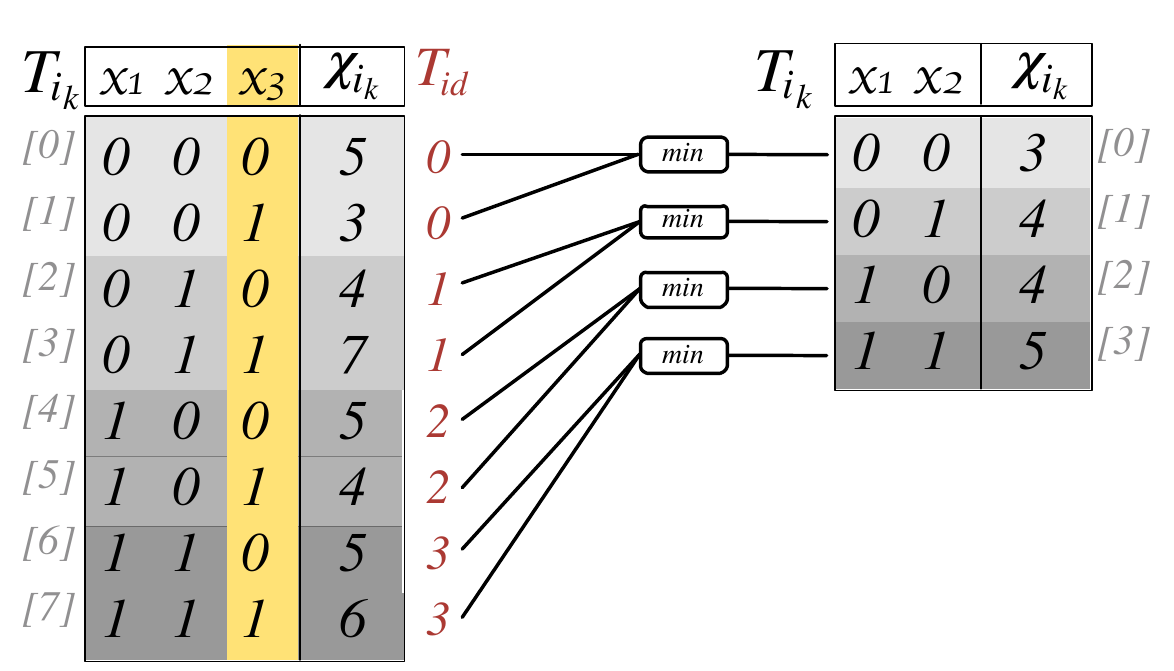}
  \caption{Example of aggregation of two tables on GPU.
  \label{fig:eliminate}}  
\end{figure}
Fig.~\ref{fig:eliminate} illustrates this process, where the variable $x_3$ is eliminated from the bucket-table $T_{i_k}$. The column being eliminated  is highlighted yellow in the input bucket-table. The different row colors identify the unique assignments for the remaining variables $x_1, x_2$, and exposes the high degree of parallelization that is associated to such operation. 
To exploit this level of parallelization, we adopt a paradigm similar to that employed in the aggregation operation on GPU, where each thread is responsible of the computation of a single output element. 

\begin{procedure}[!thb]
	\small{\caption{Gpu{::}Eliminate(${T}_{i_k}, x_i$) \label{alg:eliminate}} }	
	  $r_{i_k} \gets $ the thread's entry ID ($T_{id}$)\; 
	  $r_{j} \gets r_{i_k} \cdot |D_{x_i}|$ \tcc{holds the value of the index entry of $\chi_{i_k}$}
	  $c^* \gets \chi_{i_k}[r_j]$\;

	  \For{$\ell \gets (1 \ldots | D_{x_i} | \!-\! 1)$} {
		  $c^* \gets \min\{c^*, \chi_{i_k}[r_j + \ell] \}$ \;
	  }
	  $\chi_{i_k}[r_{i_k}] \gets c^*$\;

\end{procedure}
The variable elimination GPU kernel is described in Procedure \ref{alg:eliminate}, which is computed in parallel by a number of threads equal to the number of rows of the output bucket-table. Each thread identifies its row index $r_{i_k}$ within the output bucket-table cost values $\chi_{i_k}$ (line~1), given its thread ID. It hence sets an input row index $r_j$ to the value of the first $\chi_{i_k}$ input bucket-table row to analyze (line~1), and it stores in $c^*$ its associated cost. 
Note that, as the variable to eliminate is listed last in the scope of the bucket-table, it is possible to retrieve each unique assignment for the projected output bucket table, simply by offsetting $r_{i_k}$ by the size of $D_{x_i}$. Additionally, all elements listed in $\chi_{i_k}[r_j], \ldots, \chi_{i_k}[r_j + |D_{x_i}|]$ differ exclusively on the value assignment to the variable $x_i$ (see Fig.~\ref{fig:eliminate}).
Thus, the GPU kernel evaluates the input bucket-table cost values associated to each element in the domain of $x_i$, by incrementing the row index $r_j$, $|D_{x_i}| -1$ times, and chooses the minimum cost value (lines~4--5). 
At last, it saves to the associated output row the best cost found (line~6).

Note that each thread reads $|D_{x_i}|$ adjacent values of the vector $\chi_{i_k}$, and writes one value in the same vector. Thus, this algorithm \textbf{(1)} perfectly fits the SIMT paradigm, 
\textbf{(2)} minimizes the accesses to the global memory as it encourages a coalesced data access pattern, and 
\textbf{(3)} uses a relatively small amount of global memory, as it recycles the memory area allocated for the input bucket-table, to output the cost values for the output bucket-table. 

The ordering $\prec_T$ adopted by the bucket-tables makes this procedure effective, 
by forcing the variables to be eliminated to be always listed as last.
Additionally, we note that reordering the bucket-tables scope may be necessary exclusively when constructing the bucket-table associated to the constraints in $\setf{C}$. Indeed, the bucket-tables constructed by the algorithm preserve this ordering over their scope, since all the problem variables are processed according to the same ordering relation $\prec_T$, guaranteeing that the variables being eliminated are those with lower priority with respect to $\prec_T$. Therefore, no reordering will be required in the bucket functions during the process.

%
\smallskip
Finally, to reduce the memory transfer time, in addition to the technique described in the previous section, we unrolled 
the for-loop in lines~7--9 of Algorithm 3. 
Doing so allows us to process all the bucket-tables within a mini-bucket in a single GPU kernel and to copy them to the device using a single transaction.

\section{Theoretical Analysis} 
\label{sec:theo}

We report below a theoretical analysis on the runtime and memory complexity of our GpuBE($z$) algorithms. 
For the distributed case, we report results on the network load and messages size 
complexity provided by the proposed algorithms. 
The \emph{network load} and \emph{messages size} are defined, respectively, as the total number of messages exchanged by the agents and as the size of the largest message exchanged by the agents during problem resolution.
Since our algorithms rely on an inference-based procedure, the agent's complexity (i.e., the maximal number of operations performed by the agents while solving the problem) is equivalent to the size of the largest message exchanged. In turn, the latter corresponds to the memory complexity of the algorithm.
We use \emph{GpuBE}($w^*$) and \emph{GpuDBE}($w^*$) to denote our GPU versions of BE and DPOP, respectively, 
and \emph{GpuBE}($z$) and \emph{GpuDBE}($z$) to denote our GPU versions of MBE and ADPOP, respectively, with mini-bucket size $z$.

\begin{theorem}
For a problem $P$, given an ordering $\prec_T$ on the \rev{primal graph} $G_P$, 
the (mini-)bucket tables (resp.~\UTIL\ messages) constructed by  GpuBE($z$) (resp.~the GpuDBE($z$) agents) are identical to those constructed by (M)BE (resp.~the (A)DPOP agents), for $z \leq w^*$.
\end{theorem}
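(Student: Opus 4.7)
\medskip
\noindent\textbf{Proof proposal.} The plan is to prove the statement by induction on the position of a variable in the ordering $\prec_T$, showing that for every $x_i$ the mini-bucket partition and each resulting bucket-table $T_{i_k}$ computed by GpuBE$(z)$ coincides (as a function) with the bucket function $\hat f_{i_k}$ built by (M)BE. The distributed claim will then follow from the equivalence between DPOP and BE (resp.~ADPOP and MBE) already recalled in Sections~\ref{sec:dpop} and~\ref{sec:ADPOP}: each UTIL message is, by construction, the concatenation of the bucket functions sent to the parent, so once the centralized equivalence is established, the distributed version is obtained by relabelling.

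First I would fix the ordering $\prec_T$ and argue that \textsc{Cpu::ConstructBucket} produces exactly the set $B_i$ defined in line~2 of Algorithms~\ref{alg:be} and~\ref{alg:mbe}: this is immediate because it applies the same predicate ``$x_i$ is the highest-priority variable in $\scope{j}$'' used by (M)BE. For $z < w^*$, the partitioning rule in line~4 of Algorithm~\ref{alg:gpu-dbe} is literally the same as line~3 of Algorithm~\ref{alg:mbe}, so the mini-buckets $\{B_{i_1},\dots,B_{i_m}\}$ coincide. For $z \ge w^*$ no partitioning is needed and GpuBE reduces to the exact case, matching BE.

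Next I would show that, for a fixed mini-bucket $B_{i_k}$, the bucket-table $T_{i_k}$ returned by the sequence of calls to \textsc{Gpu::Aggregate} followed by \textsc{Gpu::Eliminate} satisfies $\chi_{i_k}[r]=\hat f_{i_k}(\setf{R}_{i_k}[r])$ for every row $r$. The key lemma is the correctness of the hash in Equation~\eqref{eq:hash}: given an assignment $\theta$ to $\setf{S}^{\out}$ stored at row $r_{\out}$, its projection onto $\setf{S}^{\inp}\sqsubseteq\setf{S}^{\out}$ sits at row $r_{\inp}$ given by that formula. This can be verified by unfolding the lexicographic row-ordering: the $k$-th coordinate of $\theta$ contributes $\lfloor r_{\out}/\varf{div}[k]\rfloor \bmod\varf{mod}[k]$, which is precisely its value in $D_{x_k^{\inp}}$, and these values are recombined into $r_{\inp}$ via the mixed-radix basis $\varf{mul}[k]$. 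Once this lemma is in hand, Procedure~\ref{alg:aggregation} implements $\chi_{i_k}[r]\leftarrow\chi_{i_k}[r]+\chi_j[\phi(r)]$ for every shared assignment, which—since each thread owns a distinct $r_{i_k}$ and reads a (possibly shared) $r_j$—realises $\sum_{f_j\in B_{i_k}}f_j$ without data races. Analogously, Procedure~\ref{alg:eliminate} exploits the invariant that the variable $x_i$ being eliminated is stored last in $\setf{S}_{i_k}$ with respect to $\prec_T$, so the $|D_{x_i}|$ rows that agree on $\setf{S}_{i_k}\setminus\{x_i\}$ are contiguous in $\chi_{i_k}$; taking their minimum therefore implements $\pi_{-x_i}$ exactly.

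The main obstacle I expect is the bookkeeping lemma about Equation~\eqref{eq:hash}, since it encodes the mixed-radix positional numeral system induced by $\prec_T$ together with the subsequence embedding $\phi_{\out}$; a clean proof by induction on $|\setf{S}^{\inp}|$ (separating the last variable, handled by the $r_{\out}\bmod\varf{mod}[s]$ term, from the remaining ones) should give it. A secondary obstacle is the invariant that every bucket-table produced during the run keeps its scope ordered according to $\prec_T$, which must be maintained inductively: the base case holds because line~8 of Algorithm~\ref{alg:gpu-dbe} explicitly reorders, and the inductive step follows because \textsc{Gpu::Eliminate} only drops the last variable. Once these two invariants are established, the induction step matches (M)BE's line~3/5, and iterating to $x_1$ yields $\hat f_1$ identical to (M)BE's output. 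For the distributed variants, the identification of each UTIL message with the bucket functions of the sending agent—together with the fact that GpuDBE and (A)DPOP agents process their local buckets in the same order—transfers the centralized equivalence to the messages, completing the proof. \qed
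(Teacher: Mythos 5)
Your proposal is correct and follows essentially the same route as the paper's proof: fix the ordering, note that bucket construction and mini-bucket partitioning coincide with the corresponding lines of (M)BE, argue that \textsc{Gpu::Aggregate} and \textsc{Gpu::Eliminate} realize $\sum_{f_j \in B_{i_k}} f_j$ and $\pi_{-x_i}$ respectively, treat BE/DPOP as the single-bucket special case, and lift the equivalence to the distributed setting by identifying each \UTIL\ message with the bucket functions sent to the parent. The only difference is granularity: you additionally prove the kernel-level facts---the mixed-radix index lemma behind Equation~\eqref{eq:hash} and the invariant that scopes remain ordered by $\prec_T$ with the eliminated variable last---which the paper's proof simply asserts when it declares the GPU routines ``analogous to'' the MBE aggregation and elimination operations.
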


\begin{proof}
The proof follows from the observation that GpuBE($z$) and (M)BE are executed on the same induced graph $G_P^*$. 
Thus, the problem variables are processed in the same order by both versions of the algorithms---lines 1 and 6 of Algorithm~\ref{alg:be} (1 and 8 of Algorithm~\ref{alg:mbe}) for (M)BE, and lines 2 and 11 of Algorithm~\ref{alg:gpu-dbe} for GpuBE($z$). 
Analogously, in GpuDBE($z$) and (A)DPOP, agents operate on the same pseudo-tree ordering. 

For the centralized case, 
during the \emph{Variable Elimination Phase}, 
the bucket construction and mini-bucket partitioning operations of  GpuBE($z$) (lines 3--4 of Algorithm~\ref{alg:gpu-dbe}) are identical to those of MBE (lines 2--3 of Algorithm~\ref{alg:mbe}). 
For each mini-bucket $B_{i_j}$ in MBE, the operations to create the bucket function 
$\hat{f}_{i_k}$ are identical in both algorithms:  
the effect of invoking the $\textnormal{Gpu{::}Aggregate}( T_{i_k}, T_j )$ routine, in GpuBE, for each bucket-table $T_{i_k}$, corresponding to the bucket function $f_{i_k}$ (lines 7--9 of Algorithm~\ref{alg:gpu-dbe}), is analogous to the 
aggregation operations performed in MBE: 
$F=\sum_{f_j \in B_{i_k}} f_j $ (line 5 of Algorithm~\ref{alg:mbe}, in parenthesis),
and the effect of the $\textnormal{Gpu{::}Eliminate}( T_{i_k}, x_i )$ routine, which projects the variable $x_i$ onto the scope of $T_{i_j}$, produces the bucket function $\hat{f}_{i_k}$, which in turn correspond to the elimination operation performed by MBE:  
$\pi_{-x_i}( F )$ (line 5 of Algorithm~\ref{alg:mbe}).
For the distributed cases, 
both ADPOP and GpuDBE($z$) agents perform the same operations described above---during the \emph{\UTIL\ Propagation Phase}---and populate the \UTIL\ messages they send to their parent.
The equivalence between the Variable Elimination and \emph{\UTIL\ Propagation Phases} of BE and DPOP, with the respective phases in GpuBE($w^*$) and GpuDBE($w^*$), respectively, follows from the process described above differing exclusively in that partitioning $B_i$ produces a single bucket with the same functions as those listed in $B_i$.

The operations performed during the \emph{Variable Assignment Phases} for (M)BE and GpuBE($z$) (lines 5-7, Algorithm~\ref{alg:be}, for BE , lines 8--9, Algorithm~\ref{alg:mbe}, for MBE, and lines 11--12, Algorithm~\ref{alg:gpu-dbe}, for GpuBE($z$)) are identical. Additionally, the variables are processed in the same order in both algorithms. Thus, the solution assignment for the problem variables returned by (M)BE and GpuBE are identical.
Similarly, for the distributed case, (A)DPOP and GpuDBE($z$) agents perform the same \VALUE\ Propagation phase.\hfill$\Box$

\end{proof}


\begin{corollary}
\rev{
For a given $z \leq w^*$, the time and memory (message size) requirements of Gpu(D)BE($z$) are, in the worst case, in O($d^{z+1}$), and O($d^{z}$), respectively,  
where $d = \max_{x_i \in \setf{X}} D_{x_i}$.}
\end{corollary}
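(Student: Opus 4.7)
My plan is to derive the two bounds as a direct consequence of the preceding theorem, which lets me reuse the standard complexity analysis of (M)BE and (A)DPOP rather than re-argue everything from scratch on the GPU side. First I would invoke that theorem to observe that the bucket tables constructed by GpuBE($z$), and the \UTIL\ messages constructed by GpuDBE($z$) agents, are bit-for-bit identical to those produced by MBE and ADPOP respectively. Hence the memory footprint, the message size, and the total amount of arithmetic work performed by Gpu(D)BE($z$) are asymptotically the same as in the corresponding sequential reference algorithms, and it suffices to establish the stated bounds for the latter.

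Second, I would use the mini-bucket partitioning rule at line~3 of Algorithm~\ref{alg:mbe} to bound the scope of each bucket function $\hat{f}_{i_k}$: after $x_i$ is projected out, the scope contains at most $z$ variables, so its tabulation $\chi_{i_k}$ occupies at most $d^z$ cost entries, where $d = \max_{x_i \in \setf{X}} |D_{x_i}|$. This immediately yields the memory bound $O(d^z)$, and in the distributed case it transfers directly to the \UTIL\ message-size bound, since each \UTIL\ message encodes exactly one such bucket function.

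Third, the runtime bound follows by tracking the \emph{intermediate} aggregated table that Procedures~\ref{alg:aggregation} and~\ref{alg:eliminate} manipulate before $x_i$ is eliminated. Its scope contains the at-most-$z$ remaining variables together with $x_i$, so it has at most $d^{z+1}$ entries. The aggregation kernel spawns one thread per such entry, each thread performing $O(1)$ work after a constant-time hash computation using the $\varf{mul}, \varf{div}, \varf{mod}$ arrays cached in shared memory; the elimination kernel likewise sweeps $O(d^{z+1})$ entries while carrying out a min-reduction of arity $|D_{x_i}| \leq d$. Absorbing the polynomially many aggregation invocations per mini-bucket into the dominant term gives total work $O(d^{z+1})$ per bucket, establishing the time bound.

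The main obstacle I anticipate is keeping two different table sizes distinct throughout the argument: the \emph{output} bucket function $\hat{f}_{i_k}$, of size $O(d^z)$, which dictates the memory footprint and the \UTIL\ message size; and the \emph{intermediate} aggregated table, of size $O(d^{z+1})$, which still carries $x_i$ and therefore dictates the runtime. A secondary routine point is verifying that GPU parallelization merely redistributes the total work across threads without inflating it, so the sequential and parallel work complexities collapse to the same asymptotic expressions and the bounds transfer cleanly from (M)BE/(A)DPOP to Gpu(D)BE($z$) via the preceding theorem.
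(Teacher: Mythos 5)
Your proposal is correct and follows essentially the same route as the paper's own proof: it invokes the equivalence established in Theorem~1, bounds the memory and \UTIL\ message size by the $O(d^{z})$ size of the bounded-arity bucket tables, and obtains the $O(d^{z+1})$ time bound by observing that the GPU's parallelism only redistributes (by a hardware-bounded constant factor) the work on the pre-elimination table, exactly as the paper argues. The only cosmetic difference is that you account for the $d^{z+1}$ term via the intermediate aggregated table explicitly, while the paper phrases it as the sequential (M)BE/(A)DPOP operation count surviving the constant-bounded parallelism; these are the same argument.
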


\begin{proof}
This result follows from the equivalence of the \emph{Variable Elimination Phases} of (M)BE and GpuBE($z$), and of the 
\emph{\UTIL\ Propagation Phases} of (A)DPOP and GpuDBE($z$).
During these phases, the construction of the (mini)-buckets requires to save, in the worst case, all possible combinations for the value assignments of the bucket-function with bounded arity $z$. Thus, they require $O(d^z)$ space. 
Similarly, for the distributed case, due to the equivalence of (A)DPOP and GpuDBE($z$), the largest message exchanged by the agents  has size $O(d^z)$.

Additionally, the total amount of operations (or, equivalently, bucket-tables rows) that can be processed in parallel during the GPU-based Constraint Aggregation and GPU-based Variable Elimination steps, 
is bounded by a constant value which depends on the GPU card characteristic. Thus, the time complexity of GpuDBE($z$) is in O$(d^{z+1}$). \hfill$\Box$
\end{proof}


\begin{corollary}
The network load required for GpuDBE($z$) is equivalent to the network load required by (A)DPOP.
\end{corollary}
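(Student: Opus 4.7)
The plan is to argue that GPU acceleration in GpuDBE($z$) is entirely a local (intra-agent) optimization and therefore does not alter the inter-agent communication pattern of (A)DPOP. Since the network load is defined as the total number of messages exchanged by the agents during problem resolution, it suffices to show that the collections of messages sent in GpuDBE($z$) and (A)DPOP are in one-to-one correspondence, phase by phase.

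I would proceed in three steps, one per phase of the algorithm. First, in the \emph{Pseudo-tree Generation Phase}, both GpuDBE($z$) and (A)DPOP invoke the same off-the-shelf distributed pseudo-tree construction algorithm (e.g., the one of \cite{hamadi:98}), so the number of messages exchanged is identical by construction. Second, in the \emph{\UTIL\ Propagation Phase}, I would appeal to the previous theorem, which establishes that, given the same pseudo-tree ordering $\prec_T$, the \UTIL\ messages constructed by each GpuDBE($z$) agent coincide with those constructed by the corresponding (A)DPOP agent. In both algorithms every non-root agent sends exactly one \UTIL\ message to its parent after receiving a \UTIL\ message from each of its children, so the number of \UTIL\ messages equals $|\setf{A}| - 1$ in both cases. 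Third, in the \emph{\VALUE\ Propagation Phase}, every non-leaf agent sends a single \VALUE\ message to each child once it has received one from its parent (the root initiates the phase), yielding again $|\setf{A}| - 1$ messages in both algorithms.

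Summing the three contributions yields the same total number of exchanged messages in GpuDBE($z$) as in (A)DPOP, which is exactly the claim. The only subtlety worth spelling out is that the use of the GPU within a given agent affects only how the local aggregation and variable-elimination steps are computed (lines~6--10 of Algorithm~\ref{alg:gpu-dbe}); it does not introduce, merge, or suppress any inter-agent message, because no step of the kernel invocations \textsc{Gpu::Aggregate} or \textsc{Gpu::Eliminate} crosses an agent boundary.

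I do not expect a substantial obstacle in this argument: the result is essentially a corollary of the previous theorem together with the observation that GpuDBE($z$) inherits the communication structure of (A)DPOP verbatim. The only point requiring a touch of care is making explicit that the GPU-specific data transfers (host--device and device--host) are internal to a single agent and thus contribute nothing to the network load as defined.
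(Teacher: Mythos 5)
Your proposal is correct and follows essentially the same route as the paper's proof: it invokes the equivalence established in Theorem 1 and counts one \UTIL\ and one \VALUE\ message per tree edge of the pseudo-tree, concluding the same $O(n)$ network load. Your additional remarks---that the pseudo-tree construction phase uses the identical off-the-shelf algorithm and that GPU kernel invocations and host--device transfers are strictly intra-agent---are sound elaborations of what the paper leaves implicit, but do not change the argument.
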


\begin{proof}
This result follow from the equivalence of DPOP with GpuDBE($w^*$) and ADPOP($z$) with GpuBE($z$) (Theorem 1).
Since (A)DPOP requires each agent to send one \UTIL\ message to its parent and one \VALUE\ message to each of its children, there are a total of $n-1$ \UTIL/\VALUE\ messages exchanged---one through each tree-edge of the pseudo-tree $T_P$.  Thus, the network load required by (A)DPOP and GpuDBE($z$) is in $O(n)$. \hfill$\Box$
\end{proof}

\begin{corollary}
Gpu(D)BE is correct and complete.
\end{corollary}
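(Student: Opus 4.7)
The plan is to leverage Theorem 1 and its immediate consequence: Gpu(D)BE produces exactly the same intermediate artifacts (bucket tables in the centralized case, \UTIL\ messages in the distributed case) as (M)BE and (A)DPOP, respectively. Since correctness and completeness of BE, MBE, DPOP, and ADPOP are established results \cite{Dechter:99,dechter:03b,petcu:05,petcu:05c}, the strategy is to reduce our claim to those known results rather than reprove them from scratch.

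Concretely, I would split the argument along two axes: centralized vs.\ distributed, and exact ($z = w^*$) vs.\ approximate ($z < w^*$). For the exact centralized case, Theorem 1 gives that each $\hat{f}_i$ computed on the GPU coincides with the corresponding bucket function computed by BE. Hence the table $\hat{f}_1$ returned at the root encodes the optimal cost, and the \emph{Variable Assignment Phase} (which is identical in both algorithms, line~12 of Algorithm~\ref{alg:gpu-dbe} vs.\ lines~6--8 of Algorithm~\ref{alg:be}) extends this to an optimal complete assignment $\sigma^*$. For the approximate centralized case, Theorem 1 again yields equality of the mini-bucket tables $\hat{f}_{i_k}$ with those of MBE; the lower-bound guarantee on the optimal cost and the upper-bound guarantee from evaluating the returned assignment then transfer verbatim from the known properties of MBE.

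For the distributed case, I would invoke the equivalence between the \UTIL\ messages exchanged by GpuDBE($z$) agents and those exchanged by (A)DPOP agents (Theorem~1), together with the equivalence of the \VALUE\ propagation phases. Combined with Corollary~3, which guarantees that the same number of messages is exchanged along tree edges, this shows that every agent eventually receives the \UTIL\ information it needs, selects its value in the \VALUE\ phase, and the resulting joint assignment coincides with that produced by (A)DPOP. Completeness (termination) follows because the pseudo-tree has finite depth, each phase performs a single pass, and each GPU kernel invocation terminates in a bounded number of steps as argued in Corollary~2.

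The main potential obstacle is making precise what ``correct'' means in the approximate regime $z < w^*$: Gpu(D)BE then does not return an optimal solution but rather a lower bound (from the returned $\hat{f}_1$) and an upper bound (from evaluating the heuristically assigned $\sigma$). I would state this explicitly in the proof and again rely on Theorem~1 to transport the bound guarantees of MBE/ADPOP to Gpu(D)BE, rather than re-deriving them. A secondary, mostly notational, subtlety is the disconnected-\rev{primal graph} case mentioned at the end of Section~\ref{sec:algorithm}: I would dispatch it by observing that Gpu(D)BE runs independently on each connected component, so correctness and completeness on each component combine straightforwardly, since the cost decomposes as a sum over components.
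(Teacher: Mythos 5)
Your proposal is correct and follows essentially the same route as the paper: the paper's own proof is a one-line reduction, stating that correctness and completeness of GPU-(D)BE($w^*$) follow from the known correctness and completeness of BE and DPOP together with Theorem~1. Your additional case analysis (the approximate regime $z < w^*$ with its lower/upper-bound reinterpretation, the termination argument, and disconnected primal graphs) goes beyond what the paper proves for this corollary---which is stated only for the exact variant---but it is consistent with the paper's argument and does not change the underlying approach.
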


\begin{proof}
The correctness and completeness of GPU-(D)BE($w^*$) follow from the  correctness and completeness of BE \cite{Dechter:99} and DPOP~\cite{petcu:05}, and Theorem 1.\hfill$\Box$
\end{proof}

\section{Experimental Results} 
\label{sec:results}

In this section, we evaluate our GPU implementations of BE and MBE (GpuBE) as well as our GPU implementations of DPOP and ADPOP (GpuDBE) and compare them with their CPU counterparts.\footnote{Our source code is available at \url{https://github.com/nandofioretto/GpuBE}, and \url{https://github.com/nandofioretto/GpuDBE}} 

%
Experiments for GpuDBE and (A)DPOP are conducted using a multi-agent DCOP simulator that simulates the concurrent activities of multiple agents, whose actions are activated upon receipt of a message. 
All algorithms use the same variable ordering in the centralized case and pseudo-tree in the distributed case. 
Performance of the centralized algorithms are evaluated using the algorithms' wallclock runtime, while the performance of distributed algorithms are evaluated using the \emph{simulated runtime} metric~\cite{sultanik:07}.  The main focus of the evaluation is on runtime and speedup achieved by the GPU implementations with respect to their CPU counterparts. 
Additionally, to compare the quality of the solution bounds reported by the incomplete algorithms, we also report the best solution quality found within the given time limits by \emph{toulbar2} \cite{allouchetoulbar2}, an optimized, exact centralized solver for WCSPs. Toulbar2 is a state-of-the-art solver that  uses a depth-first branch-and-bound process to identify a minimum cost assignment and employs the notion of \emph{soft local consistency} to prune the search space using the problem lower bound. 


Our experiments are conducted on an \emph{AMD Opteron 6276} with a 2.3GHz CPU and is equipped with a GPU device \emph{GeForce GTX TITAN} with $14$ multiprocessors, 2688 cores, with a clock rate of 837MHz, and 6GB of global memory.  


We performed our experiments on both randomly generated instances on different networks topologies and on standard WCSP benchmarks.\footnote{Downloadable from \url{http://costfunction.org/en/benchmark/} and \url{http://graphmod.ics.uci.edu/group/Repository}}
We first analyze the runtimes of the CPU and GPU versions of BE and DPOP on randomly generated instances, where we report the runtimes and lower bounds of the GPU and CPU versions of MBE and ADPOP at varying of the bucket size $z$.
Then, to ensure that the speedups are not due to a specific GPU device configuration, we compare the CPU and GPU speedups achieved on $3$ distinct GPU architectures, characterized by different clock rates, number of SMs, and memory sizes. 
Finally, we report the solving time and lower bounds of our GpuBE on an extensive set of WCSP benchmarks to verify the generality of the speedups across different domains.
Each solver has 1-hour timeout of wallclock time in the centralized case and a 1-hour timeout of simulated time in the distributed case. Additionally, they have a memory limit of 32GB to solve each problem instance. 
Results are averaged over all instances. 
If a solver fails to solve an instance is due to either memory limits (labeled \emph{oom}) or timeout (labeled \emph{oot}). 

\subsection{Binary Random Networks}
The instances for each binary network topology are generated as follows:
\bitemize
\item {\bf Random:} We create an $n$-node network, whose density $p_1$ produces  $\lfloor n\,(n-1)\,p_1 \rfloor$ edges in total. We do not bound the tree-width, which is  based on the underlying graph and randomly generated. 

\item {\bf Scale-free:} We create an $n$-node network based on the \emph{Barabasi-Albert model}~\cite{barabasi:99}. Starting from a connected $2$-node network, we repeatedly add a new node,  randomly connecting it to two existing nodes. In turn, these two nodes are selected with probabilities that are proportional to the numbers of their connected edges. The total number of edges is $2\,(n-2)+1$.

\item {\bf Grid:} We create an $n$-node network arranged as a rectangular grid, where each internal  node is connected to four neighboring nodes, while nodes on the grid perimeter are connected to three neighboring nodes unless they are at the corner of the grid, in which case they are connected to two neighboring nodes.
\eitemize
We generate $50$ instances for each topology, ensuring that the underlying graph is connected. The cost functions are generated using random integer costs in $[0, 100]$, and the constraint tightness (i.e., ratio of entries in the cost table that have a cost of $\infty$) $p_2$ is set to $0.5$ for all experiments. 
We set the following as default parameters: For the random and scale-free topology, $n \!=\! 10$, $d \!=\! \max_{D_i \in \setf{D}} \size{D_i} \!=\! 10$, and $p_1 \!=\! 0.3$, and for the grid topology, $\sqrt{n} \!=\! 10$.

%
%
%
\begin{table}
	\centering
	\resizebox{0.9\textwidth}{!}
	{
      \begin{tabular}{|*{4}{c} || *{3}{r}| *{3}{r} | }
        \hline
        \multicolumn{4}{|c||}{Problem} & \multicolumn{3}{| c |}{BE} & \multicolumn{3}{| c |}{DPOP}\\  
	$n$ & $d$ & $p_1$ & $w^*$ & CPU & GPU & speedup & CPU & GPU & speedup\\
	\hline\hline
        10  &  10 & 0.3 & 2.9 & 0.019  &  0.002  &  10.5  &  0.007  &  0.001  &  7.20\\
        11  &  10 & 0.3 & 3.2 & 0.031  &  0.002  &  13.6  &  0.013  &  0.001  &  13.0\\
        12  &  10 & 0.3 & 3.6 & 0.069  &  0.003  &  25.7  &  0.028  &  0.001  &  28.5\\
        13  &  10 & 0.3 & 4.3 & 0.413  &  0.005  &  79.4  &  0.210  &  0.002  &  116\\
        14  &  10 & 0.3 & 4.4 & 0.631  &  0.006  &  98.6  &  0.214  &  0.002  &  134\\
        15  &  10 & 0.3 & 5.3 & 4.190  &  0.026  &  158  &  1.609  &  0.009  &  187\\
        16  &  10 & 0.3 & 5.8 & 32.29  &  0.189  &  171  &  9.848  &  0.049  &  202\\
        17  &  10 & 0.3 & 6.4 & 65.41  &  0.328  &  200  &  28.14  &  0.138  &  204\\
        18  &  10 & 0.3 & 7.5 & 206.1  &  0.944  &  218  &  103.0  &  0.483  &  213\\
        19  &  10 & 0.3 & 8.0 & 602.1  &  2.541  &  237  &  470.2  &  2.019  &  233\\
        20  &  10 & 0.3 & 8.5 & 675.3  &  3.145  &  215  &  508.9  &  2.160  &  236\\
	\hline
        10  &  5  &  0.3 & 3.0 & 0.001  &  0.002  &  0.56  &  0.001  &  0.001  &   0.80\\
        10  &  10 & 0.3 & 2.9 & 0.019  &  0.002  &  10.5  &  0.007  &  0.001  &   7.20\\
        10  &  25  &  0.3 & 2.8 & 0.227  &  0.004  &  55.4  &  0.092  &  0.001  &  92.3\\
        10  &  50  &  0.3  & 2.9 & 24.81  &  0.095  &  262  &  13.99  &  0.048  &  291\\
        10  &  100 &  0.3 & 2.9 & 67.59  &  0.220  &  308  &  35.22  &  0.118  &  299\\
	\hline
        10 & 10  &  0.2  & 2.0 & 0.001  &  0.001  &  0.62  &  0.001  &  0.001  &  0.94\\
        10  &  10 & 0.3 & 2.9 & 0.019  &  0.002  &  10.5  &  0.007  &  0.001  &  7.20\\
        10 & 10  &  0.4 & 3.8 & 0.094  &  0.002  &  40.7  &  0.042  &  0.001  &  42.5\\
        10 & 10  &  0.5  &4.5 & 0.525  &  0.005  &  105  &  0.234  &  0.002  &  130\\
        10 & 10  &  0.6  & 5.4& 3.378  &  0.019  &  176  &  1.941  &  0.011  &  176\\
        10 & 10  &  0.7  & 5.9& 14.86  &  0.072  &  205  &  10.00  &  0.053  &  189\\
        10 & 10  &  0.8  & 6.7& 56.23  &  0.246  &  228  &  31.29  &  0.147  &  213\\
        10 & 10  &  0.9  &7.6 & 72.32  &  0.312  &  232  &  42.47  &  0.201  &  211\\
    \hline
  \end{tabular}
  }
  \caption{Random networks. \label{tab:be_rand}}
\end{table}

\begin{table}
	\centering
  	\resizebox{0.9\textwidth}{!}
	{
      \begin{tabular}{|*{3}{c} || *{3}{r}| *{3}{r} |}
        \hline
        \multicolumn{3}{|c||}{Problem} & \multicolumn{3}{| c |}{BE} & \multicolumn{3}{| c |}{DPOP}\\ 
	$n$ & $d$ & $w^*$ & CPU & GPU & speedup & CPU & GPU & speedup \\
	\hline\hline
        10  &  10 &  6.3 &  22.99  &  0.111  &  207  &  13.78  &  0.064  &  215\\
        11  &  10 & 6.0  &  25.57  &  0.120  &  212  &  13.21  &  0.057  &  231\\
        12  &  10 & 6.0 & 27.96  &  0.132  &  212  &  14.60  &  0.072  &  203\\
        13  &  10 & 5.9 & 80.14  &  0.370  &  217  &  36.21  &  0.174  &  208\\
        14  &  10 & 6.9 & 78.36  &  0.339  &  231  &  32.50  &  0.145  &  223\\
        15  &  10 & 8.2 & 189.4  &  0.887  &  213  &  66.86  &  0.340  &  197\\
        16  &  10 & 9.2 & oom & oom & - & oom & oom & - \\
        17  &  10 & 9.5 & oom & oom & - & oom & oom & - \\
        18  &  10 & 10 & oom & oom & - & oom & oom & - \\
        19  &  10 & 11 & oom & oom & - & oom & oom & - \\
        20  &  10 & 12 & oom & oom & - & oom & oom & - \\
        	\hline
        10  &  5  & 6.8 &  0.322  &  0.004  &   74.8  &  0.175  &  0.001  &  145\\
        10  &  10 &  6.3 &  22.99  &  0.111  &  207  &  13.78  &  0.064  &  215\\
        10  &  25 &  6.6 &  242.5  &  0.888  &  273  &  127.9  &  0.593  &  216\\
        10  &  50 & 6.4 &  oom & oom & - & oom & oom & -  \\
        10  &  100 & 6.4 &  oom & oom & - & oom & oom & -  \\
    \hline
  \end{tabular}
  }
    \caption{Scale-free networks. \label{tab:be_scalefree}}
\medskip

  	\resizebox{0.9\textwidth}{!}
	{
      \begin{tabular}{|*{3}{c} || *{3}{r}| *{3}{r} |}
        \hline
        \multicolumn{3}{|c||}{Problem} & \multicolumn{3}{| c |}{BE} & \multicolumn{3}{| c |}{D-BE}\\ 
	$\sqrt{n}$ & $d$ & $w^*$ & CPU & GPU & speedup & CPU & GPU & speedup \\
	\hline\hline
	5 & 10   & 3.3 & 0.259 & 0.005 &  44.7 & 0.022 & 0.001 & 21.8\\
	6 & 10   & 3.7 & 0.267 & 0.008 &  33.4 & 0.022 & 0.001 & 22.6\\
	7 & 10   & 3.7 & 0.515 & 0.012 &  42.9 & 0.037 & 0.001 & 30.8\\
	8 & 10   & 3.6 & 0.848 & 0.018 &  47.1 & 0.041 & 0.001 & 29.6\\
	9 & 10   & 3.9 & 1.460 & 0.028 &  52.0 & 0.049 & 0.001 & 33.7\\
	10 & 10 & 4.0 & 1.881 & 0.035 &  34.4 & 0.054 & 0.002 & 31.7\\
	11 & 10 & 3.7& 1.934 & 0.040 &  48.3 & 0.073 & 0.002 & 38.4\\
	12 & 10 & 3.8 & 2.174 & 0.042 &  48.4  & 0.089 & 0.002 & 38.7\\
	13 & 10 & 3.9 & 2.430 & 0.045 &  42.2  & 0.102 & 0.003 & 37.8\\
	14 & 10 & 4.0 & 2.996 & 0.055 &  54.5  & 0.127 & 0.003 & 39.7\\
	15 & 10 & 4.0 & 3.785 & 0.071 &  53.3  & 0.151 & 0.004 & 38.7\\
	\hline
	10 &  5  & 3.7 & 0.043 & 0.020    & 2.21 &  0.001 & 0.001 & 1.00\\
	10 & 10 & 4.0 & 1.881 & 0.035 &  34.4 & 0.054 & 0.002 & 31.7\\
	10 & 25 & 3.9 & 97.29 & 0.388 &  251 & 2.930 & 0.011 & 266\\
	10 & 50 & 4.0 & oom & oom	 & - & oom & oom & - \\
	10 & 100 	& 4.0 & oom & oom	 & - & oom & oom & - \\
    \hline
      \end{tabular}
      }
    \caption{Grid networks. \label{tab:be_grid}}
\end{table}
%
Tables \ref{tab:be_rand}--\ref{tab:be_grid} report the runtime, in seconds, for random, scale-free, and grid topologies, respectively, varying the number of variables (resp.~agents) for the centralized (resp.~distributed) algorithms, the size of the variables domains, and the constraint tightness of the \rev{primal graph}. 
The first four (three) columns of Table \ref{tab:be_rand}, (\ref{tab:be_scalefree} and \ref{tab:be_grid}) describe the problem setting adopted for each experiment. The induced width $w^*$ is averaged across all instances. All other columns report the average runtime and GPU vs.~CPU speedup in parenthesis. 
We make the following observations:
\bitemize
\item
The GPU-based inference-algorithms are consistently faster that their CPU counterparts, with speedups of up to 307x. 
Only two exceptions arise for the random networks, where in the small instances with $n=10$, $d=5$, $p_1=0.3$, and $n=10$, $d=10$, $p_1=0.2$, the GPU versions of the algorithms are slower than their CPU counterparts.
\item
The speedup increases with the problem size. In particular, the speedup increases with increasing induced width and with increasing domain size of the problem variables. Both these factors influence the size of the bucket-tables to be processed.\footnote{Recall that BE needs to process bucket-tables whose number of rows is in $O(d^{w^*})$.}  
This observation corroborates the effectiveness of the GPU parallelism exploited in the construction of these tables. 
\item
As expected, the inference-based algorithms are unable to process instances characterized by large induced widths 
or large domain sizes, 
as the size of the bucket-tables become intractable with the memory limitations.
 This is evident in the scale-free and grid networks, where the solvers run out of memory for instances with $n \geq 16$ and $d \geq 50$, respectively.
%
%

\item 
The simulated runtimes of the DCOP algorithms are consistently smaller than the wallclock runtimes of the WCSPs ones.
This is due to the fact that agents in different branches of the pseudo-tree can compute their bucket-tables independently from each other. 
\item 
Finally, the speedup trends of the distributed algorithms are similar to those of the centralized algorithms. 
\eitemize

\begin{figure}[!t]
  \centering
  \includegraphics[width=0.7\textwidth]{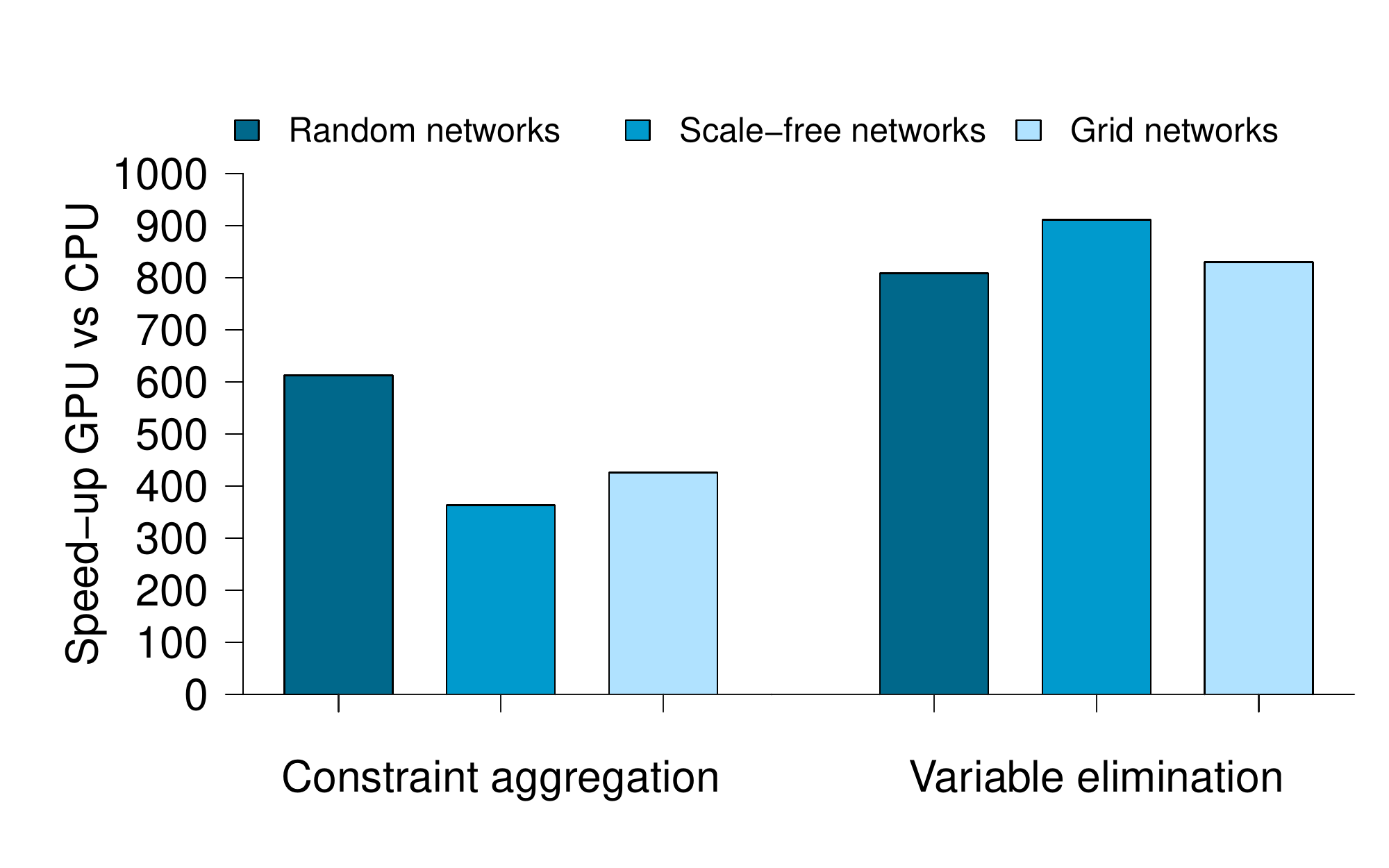}
  \caption{Analysis of the average speedup obtained by the GPU-based constraint aggregation, and GPU-based variable elimination w.r.t.~their CPU-based counterparts in the random, scale-free, and grid network instances. \label{fig:funcSpeed}}
\end{figure}

Next, we analyze the performance of the individual kernels that implement the constraint aggregation and the variable elimination processes described, respectively, in Sections~\ref{sec:aggregation} and~\ref{sec:elimination}. 
Figure~\ref{fig:funcSpeed} illustrates the average speedup obtained by the GPU-based constraint aggregation, and the GPU-based variable elimination with respect to their CPU-based counterparts when considering the largest bucket processed in each instance of the random, scale-free, and grid network instances.
The reported average speedup for the constraint aggregation operations range from 363x (in scale free networks) to 613x (in random networks). 
The variable elimination operations achieve an even higher speedup, ranging from 830x (for grid networks) to 911x (for scale free networks). This is due to the high locality of data exploited by the GPU-based variable elimination kernel, which encourages coalesced data accesses, and through memory reuse, where we overwrite the input bucket-table of the variable elimination process with the resulting bucket-table from the same process.

\begin{figure}[!t]
  \centering
  \includegraphics[width=0.47\textwidth]{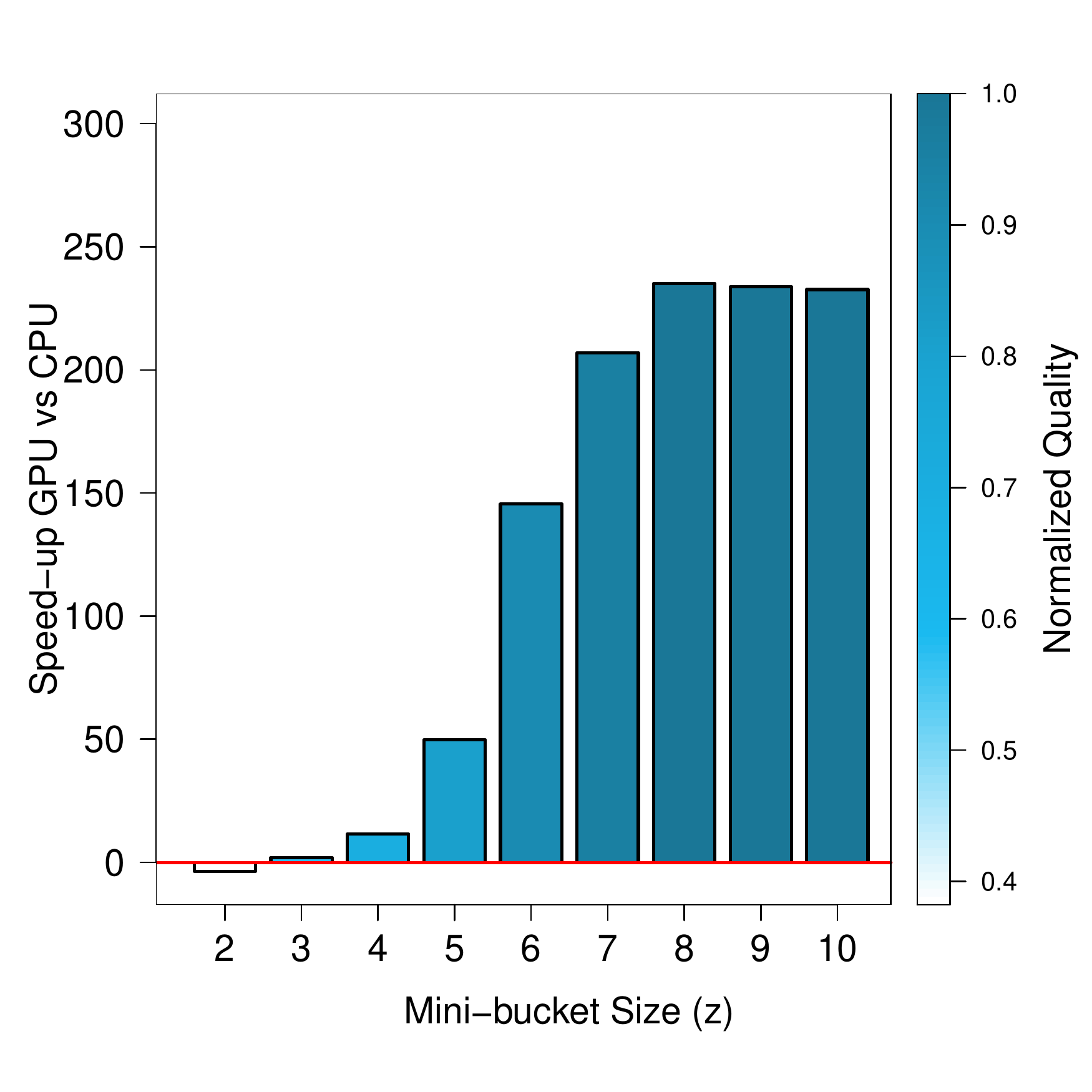}
  \includegraphics[width=0.47\textwidth]{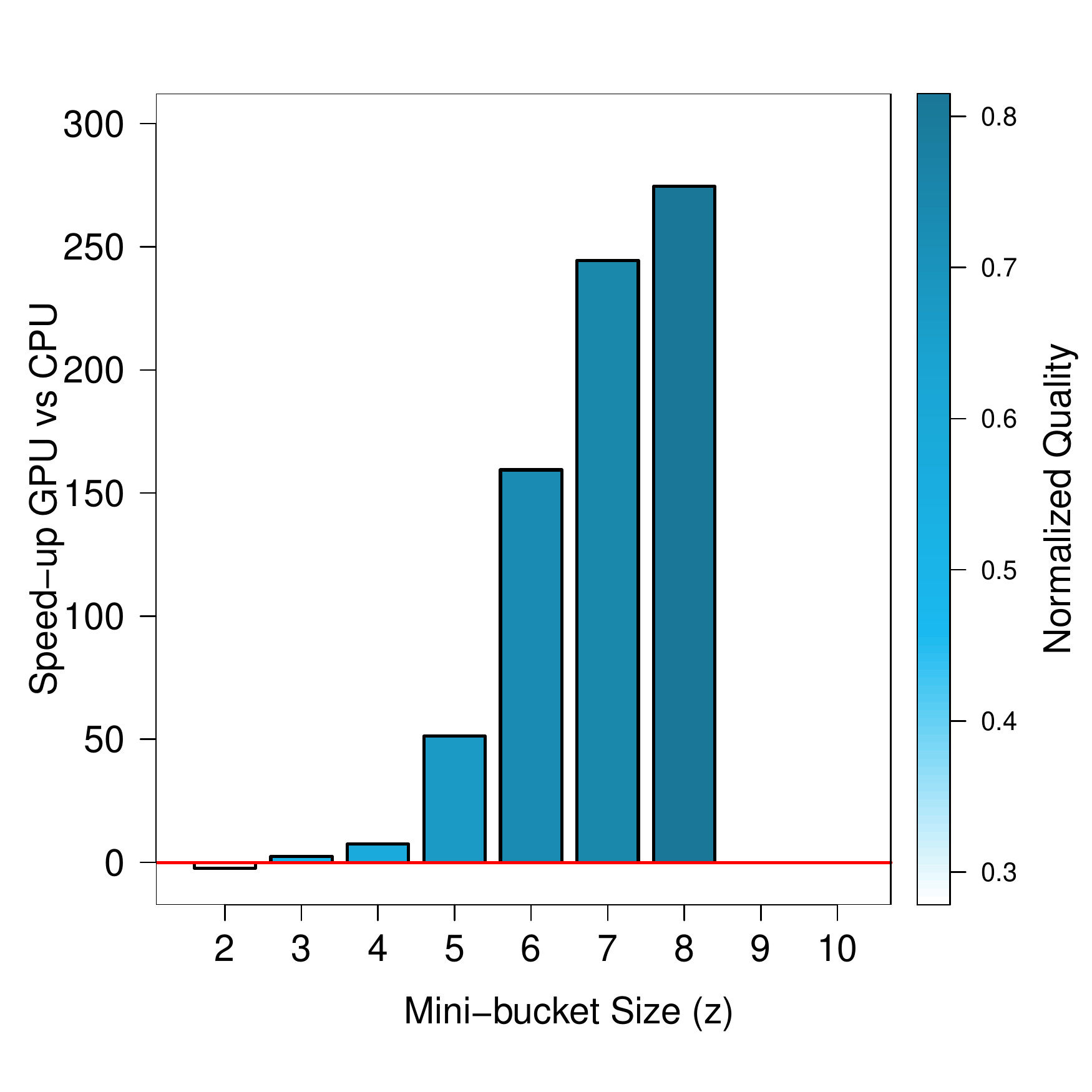}\\
  {\small (a) \hspace{150pt} (b)}\\
  \includegraphics[width=0.47\textwidth]{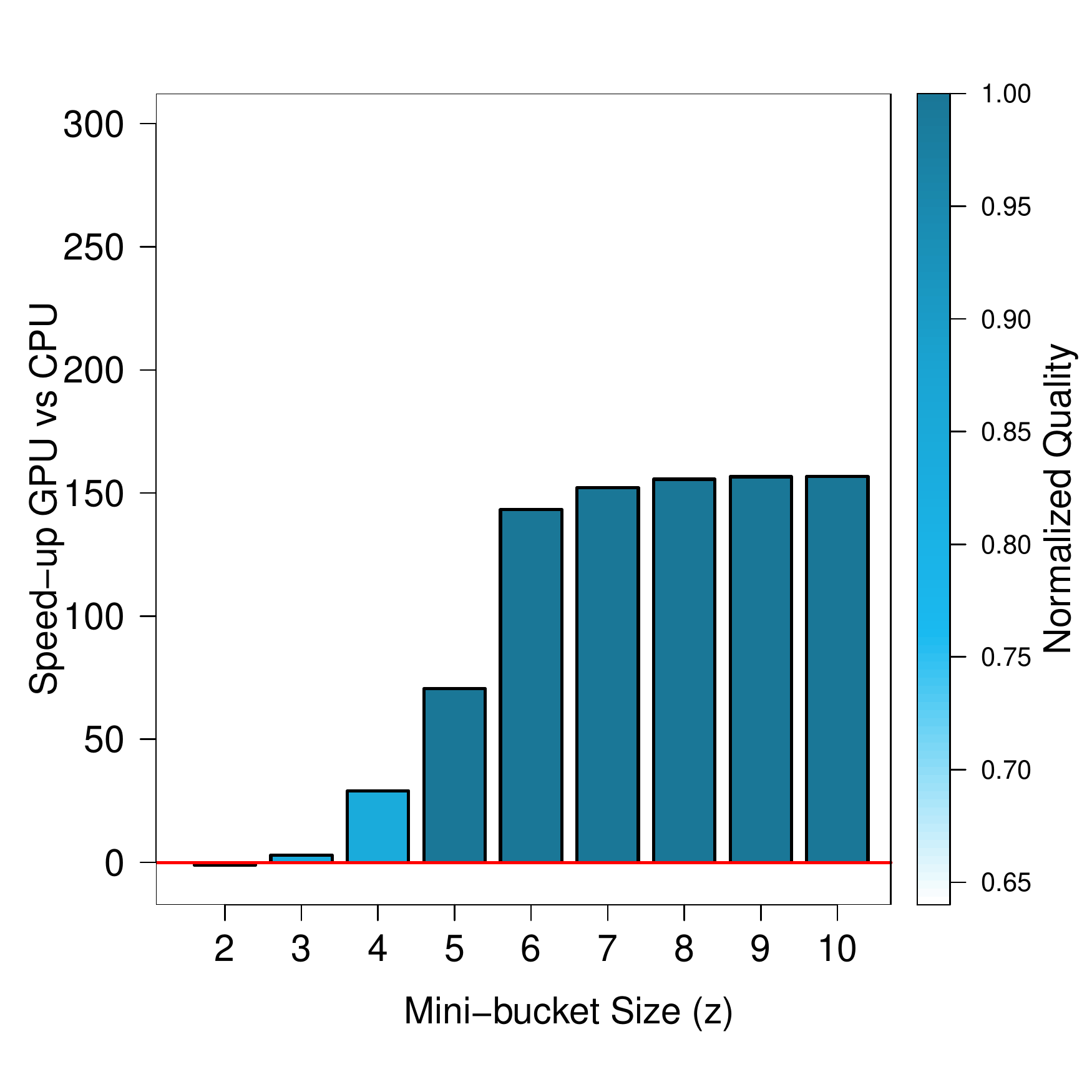}\hspace{18pt}
  \includegraphics[width=0.44\textwidth]{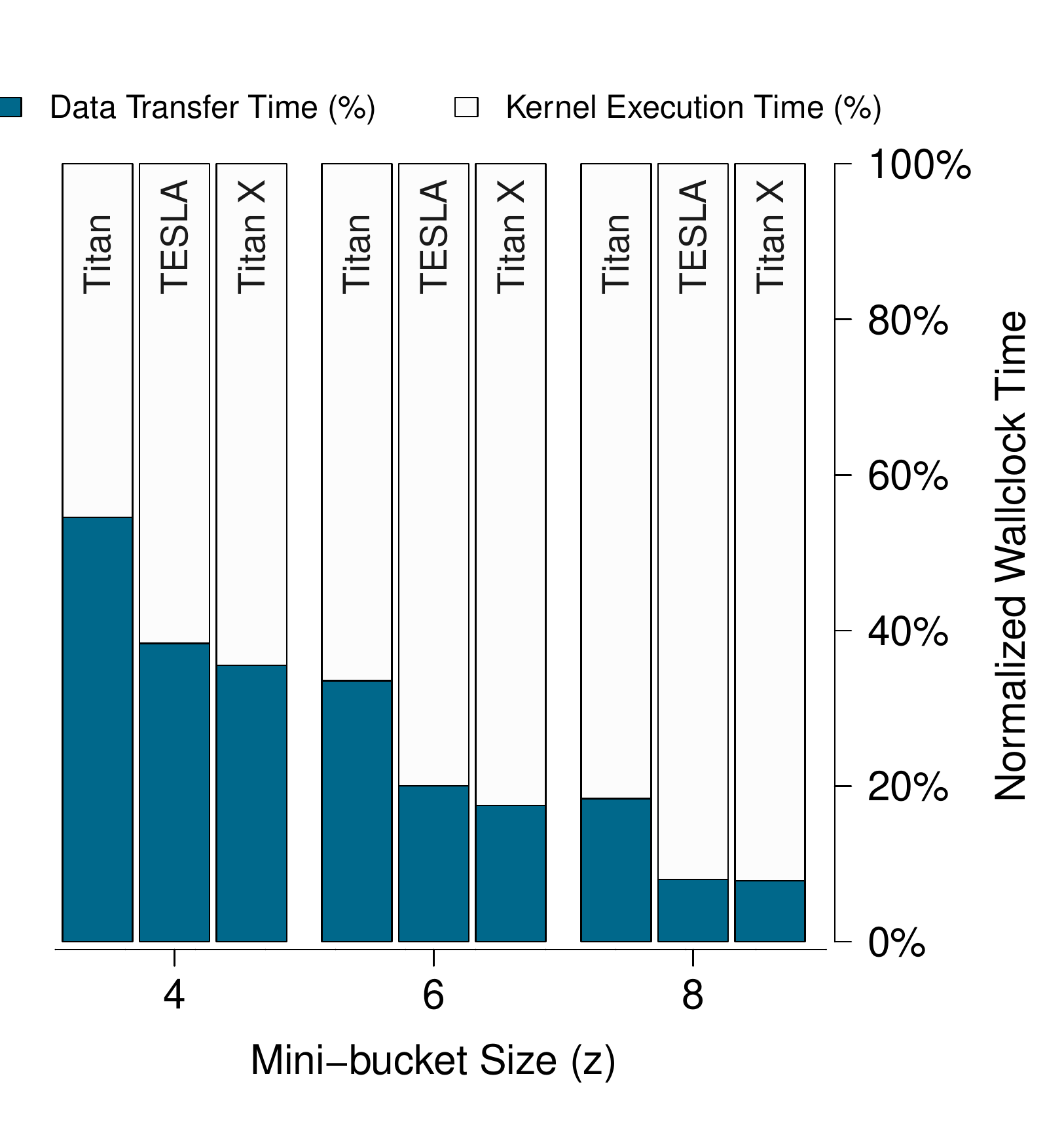}\\
  \caption{ 
  (a)~MBE Results on Random Networks: $n = 20, d  = 25, p_1 =0.3$; 
  (b)~MBE results on Scale-free Networks: $n = 20, d  = 25$;
  (c)~MBE results on Grid Networks: $\sqrt{n} = 10, d = 25$;
  (d)~Normalized data allocation and transfer times (blue) vs.~kernel times (white) on different GPUs.
  \label{fig:res_mbe1}}  
\end{figure}

Next, we compare our centralized and distributed versions of GpuBE with MBE~\cite{Dechter:99} and ADPOP~\cite{petcu:05} at varying of the mini-bucket size $z \in \{2,\ldots, 10\}$,  on binary constraint networks with \emph{random}, \emph{scale-free}, and \emph{grid} topologies, using the same settings described in the previous section. The instances for each topology are generated as described above.
Fig.~\ref{fig:res_mbe1}(a--c) illustrate the speedup of the CPU and GPU versions of MBE, respectively, on random networks with $n = 20$, $d  = 25$, $p_1 =0.3$, on scale-free networks with $n = 20$, $d  = 25$, and on grid networks with $\sqrt{n} = 10, d  = 25$.
The intensity of the color illustrates the solution quality of the bound returned (darker color denotes better solution quality).
We make the following observations:
\bitemize
\item
The speedup obtained by the GPU vs.~CPU solvers increases as the size of the mini-buckets increases. This observation is consistent with the previous observation that the speedup increases with increasing induced widths. 
\item 
The speedup saturates when $z=7$ in all benchmarks, reporting maximal speedups of 235\textsl{x},  274\textsl{x}, and 156\textsl{x}, for random, scale-free, and grid networks, respectively. 
This phenomena occurs when the maximum concurrent number of GPU threads are scheduled and executed simultaneously by all the GPU SMs---i.e., when there is enough work to saturate the GPU maximal occupancy.
\item
As for the previous experiment, the speedup trends of the distributed algorithms are similar to those of the centralized algorithms. The correlation\footnote{We use the \emph{Pearson product-moment correlation} coefficient.} of the CPU vs.~GPU speedup between the centralized and the distributed solutions are 0.93, 0.95, and 0.99, respectively for the grid, random, and scale-free network topologies.
\eitemize

\begin{table}
  \centering
	\resizebox{\textwidth}{!} {
       \begin{tabular}{ l l l l }
       \hline
       & \bemph{TESLA M2075} & \bemph{GeForce GTX Titan} & \bemph{GeForce GTX Titan X} \\[2pt]
	\hline
       CUDA Capability 		& 2.0 & 3.5 & 5.2 \\ 
	Global Memory Size 		& 5375 MB & 6137 MB & 12286 MB \\
	Number of SMs 			& 14 & 14 & 24 \\ 
	Cores per SM 			& 32 & 192 & 128 \\
	GPU Max Clock Rate 	& 1.15 GHz & 0.88 GHz & 1.08 GHz \\
	Memory Clock Rate 		& 1566 Mhz & 3004 Mhz & 3505 Mhz \\
	L2 Cache Size 			& 786 KB & 1572 KB & 3145 KB \\
	Max Number of Threads per SM  		 & 1536 & 2048 & 2048 \\
	Concurrent copy and execution & yes & yes & yes \\
       \hline
	\end{tabular}
	}
	\caption{GPU device specifics. \label{tab:gpus}}
\end{table}
\begin{table}
  \centering
	\resizebox{\textwidth}{!} {
       \begin{tabular}{ c | *{3}{r} | *{3}{r} | *{3}{r}}
       \hline
       & \multicolumn{3}{c |}{Grid} & \multicolumn{3}{c |}{Random} & \multicolumn{3}{c}{Scale-Free}\\
   z  &  TESLA  &  Titan  &  Titan X  &  TESLA  &  Titan  &  Titan X  &    TESLA  &   Titan  &    Titan X\\
   \hline
  2  &  0.22  &  0.85  &  0.49  &  0.21  &  0.28  &  0.60  &  0.27  &  0.43  &  0.57\\
  3  &  1.54  &  3.05  &  2.80  &  1.42  &  1.99  &  3.25  &  1.48  &  2.42  &  1.61\\
  4  &  21.9  &  29.1  &  33.0  &  10.7  &  11.6  &  16.1  &  12.2  &  7.46 &12.0\\
  5 & 117 &  150 	&  232 &   60.6 &   49.8 &   66.4 &   59.3 &   51.3 &   66.8\\
  6 &  117 &  143 	&  237  & 144 &  145  & 223 &  163 	 & 159 &  285\\
  7 &  117 &  152 	&  235  & 198  & 207  & 392  & 208  	& 244  & 435\\
  8 &  118 &  153  	& 241  & 198  & 235  & 645  & 211 		 & 274  & 627\\
  9 &  115 &  156  	& 238  & 197  & 234  & 620  &  oom  & oom  & oom\\
 10 & 117 &  155  	& 235 &  199  & 233 &  628  & oom   & oom & oom\\
       \hline
	\end{tabular}
	}
	\caption{CPU vs.~GPU speedup on different GPU devices. \label{tab:gpusPerformance}}
\end{table}

\smallskip
Table \ref{tab:gpusPerformance} illustrates a comparison of the speedups obtained with three different GPU hardware configurations: \emph{TESLA M2075}, \emph{GeForce GTX Titan} and \emph{GeForce GTX Titan X}, whose specifics are summarized in Table~\ref{tab:gpus}.\footnote{In all other experiments we used the GeForce GTX Titan, as this is the best, most affordable card at our disposal.} 

Among the three GPU devices, the TESLA M2075 achieve the lowest maximal speedups, which range from 117.8\textsl{x} to 213,7\textsl{x}. 
Additionally, the speedup saturates when $z=5$ for grid networks and $z=7$ for random and scale-free networks.
This is due to the fact that this card can schedule the smallest number of cores per each SMs (32). Since each core can run concurrently a wrap (32 threads), its maximal level of concurrency is $14 \times 32 \times 32 = 14,336$ threads (and is thus the maximum number of parallel aggregation operations).
In contrast, the speedup obtained by our GpuBE is the highest on the GeForce GTX Titan X---obtaining a maximal speedup of 646.9\textsl{x}---and saturates when $z=8$ in all networks.
The maximum number of threads that can run concurrently on this card is $24 \times 128 \times 32 = 98,304$.
The speedups obtained by our solver on the GeForce GTX Titan, used in the rest for the experiments in this paper, are larger than those obtained on the TESLA but smaller than those obtained on the GeForce GTX Titan X. This card can run up to $86,016$ threads. In addition to the number of threads than can run concurrently, the GPU clock rate and L2 cache size play a substantial role in the GPU performance.


Finally, Fig.~\ref{fig:res_mbe1}(d) illustrates the time spent by the GPU devices while executing the kernel functions (in white) in contrast to the time used for memory transfers and allocations (in blue), 
at varying mini-bucket size $z=\{4,6,8\}$. 
These times are averaged among all instances for the three network topologies examined and are normalized with the respect to the wallclock runtime.
The results show that the time spent by the device in performing actual computations increases, with the respect to the memory transfer time, as the mini-bucket size increase. Allocations and memory transfers on the Titan device are slower than on the TESLA and the GTX Titan X. Finally, these times account for the 36\% to 55\%, 18\% to 34\%, and 8\% to 18\% of the total time, respectively for the mini-bucket sizes $4, 6$, and $8$.


\subsection{WCSPs Benchmarks}
We now report the evaluation of our GpuBE on the following standard WCSPs benchmarks:
\bitemize
\item \emph{Celar}: Radio link frequency assignment problems.
\item \emph{Coloring}: Graph coloring instances cast into minimum coloring instances.
\item \emph{Iscas89}: WCSPs derived from digital circuits.
\item \emph{Pedigree}: Instances from the genetic linkage analysis domain that is associated with the task of haplotyping. 
\item \emph{Spot}: Instances of the daily photograph scheduling problem of Earth observation satellites.
\eitemize

Tables \ref{tab:celar}--\ref{tab:pedigree},  
tabulate the results for the above benchmarks.
In each table and for each instance, we report, in order, the instance name---as appearing in the original benchmark---the number of variables $n$ of the problem, the maximum size of their domains $d$, the number of constraints $c$, the graph density $p_1$, and the induced width $w^*$ of the underlying \rev{primal graph}.
In each table, the top row shows the runtimes in seconds of GpuBE(z) at varying bucket size $z$ and GpuBE. 
The bottom row shows the returned solutions' qualities, where for GpuBE(z), we report the lower bound it returned.
When GpuBE failed to report a solution (due to memory limits), we report the solution quality found by toulbar2 (shown in parenthesis) or a dash symbol, if toulbar2 did not terminate within the time limit.
The speedup of GpuBE(z) and GpuBE w.r.t. their CPU counterparts are shown in parentheses. 
For each instance, we vary the bucket size $z$ from $2$ to $20$, and report the minimum bucket size $z_{\min}$, which is the largest constraint arity of the instance, the maximum bucket size $z_{\max} = \min\{w^z, 20\}$, where
$w^z$ is defined as the maximal bucket size that can be processed within the hardware memory limits, and the intermediate bucket sizes $z_2 = z_{\min} + \frac{1}{3} (z_{\max} - z_{\min})$ and $z_3 = z_{\min} + \frac{2}{3} (z_{\max} - z_{\min})$.

Consistent with our previous observations, the algorithms' speedups and solution qualities increase as the bucket size increases. 
Additionally, for several large problems instances (e.g., \textit{scen06-24reduc}---\textit{scen06reduc} in the Celar benchmark), our GPU implementation of MBE can report good lower bounds quickly (within a few seconds), whereas solving the entire problem with the most competitive soft consistency technique in toulbar2 requires from 6 to 48 minutes. 
For other large instances (e.g., in the Spot benchmark), we observe that toulbar2 ran out of time for the majority of the instances, while our GpuBE(z) can quickly find lower bounds, which could be used in a AND-and-OR search type as proposed by Marinescu and Dechter~\cite{marinescu:09}.

%
%

\begin{table}
  \begin{center}
	\resizebox{0.9\textwidth}{!}
{
        \begin{tabular}{| l | p{6pt} p{2pt} p{12pt} p{6pt} p{6pt} || *{5}{r} | }
    \hline
	\prob{Problem}{$n$}{$d$}{$c$}{$p_1$}{$w^*$} & \multicolumn{4}{|c}{GpuBE($z$)} & GpuBE \\
	\prob{}{}{}{}{}{} &  $z_{\min}$ & $z_2$ & $z_3$ &  $z_{\max}$ & $w^*$\\
    \hline\hline
        \prob{ CELAR6-SUB0 }{16}{44}{207}{0.47}{7}  & 0.116 (1.26x)  & 0.116 (13.5x)  & 0.31 (182x)  & 0.31 (182x)  & oom \\
         &  \multicolumn{5}{|c||}{}  & 0  & 13  & 13  & 13   &  (159) \\
        \hline
        \prob{ CELAR7-SUB1-20 }{14}{20}{300}{0.98}{9}  & 0.111 (3.37x)  & 0.129 (42.8x)  & 0.451 (215x)  & 5.1 (313x)  & oom \\
         &  \multicolumn{5}{|c||}{}  & 20102  & 40931  & 71023  & 81433  & (132538) \\
        \hline
        \prob{ CELAR6-SUB1-24 }{14}{24}{300}{0.82}{9}  & 0.188 (0.59x)  & 0.122 (4.81x)  & 0.161 (71.8x)  & 0.837 (279x)  & oom \\
         &  \multicolumn{5}{|c||}{}  & 0  & 280  & 598  & 772  &  (2656) \\
        \hline
        \prob{ CELAR7-SUB0 }{16}{44}{188	}{0.66}{9}  & 0.173 (0.63x)  & 0.157 (10.1x)  & 0.317 (187x)  & 0.317 (187x)  & oom \\
         &  \multicolumn{5}{|c||}{}  & 0  & 104  & 10001  & 10001  &  (10310)\\
        \hline
        \prob{ CELAR6-SUB1 }{14}{44}{300	}{0.82}{9} & 0.201 (1.25x)  & 0.317 (10.6x)  & 0.723 (171x)  & 0.723 (171x)  & oom \\
         &  \multicolumn{5}{|c||}{}  & 0  & 308  & 626  &  626 &  (2669) \\
        \hline
        \prob{ CELAR7-SUB1 }{14}{44}{300}{0.82}{9}  & 0.295 (1.03x)  & 0.316 (10.5x)  & 0.593 (210x)  & 0.593 (210x)   & oom \\
         &  \multicolumn{5}{|c||}{}  & 0  & 21523  & 51123  &  51123 &  (142640)\\
        \hline
        \prob{ CELAR6-SUB2 }{16}{44}{353}{0.77}{10}  & 0.254 (1.3x)  & 0.328 (12.6x)  & 0.82 (188x)  & 0.82 (188x)  & oom  \\
         &  \multicolumn{5}{|c||}{}  & 0  & 231  & 387  & 387  &  (2746)\\
        \hline
        \prob{ CELAR7-SUB2 }{16}{44}{353}{0.77}{10} & 0.407 (0.54x)  & 0.254 (16.4x)  & 0.95 (164x)  & 0.95 (164x)  & oom  \\
         &  \multicolumn{5}{|c||}{}  & 0  & 20420  & 40931  & 40931  & (173252) \\
        \hline
        \prob{ CELAR6-SUB3 }{18}{44}{421}{0.71}{10}  & 0.375 (1.08x)  & 0.304 (16.1x)  & 0.975 (188x)  & 0.975 (188x)   & oom \\
         &  \multicolumn{5}{|c||}{}  & 0  & 265  & 452  &  452 &  (3079)\\
        \hline
        \prob{ CELAR7-SUB3 }{18}{44}{421}{0.71}{10}  & 0.449 (0.97x)  & 0.46 (10.8x)  & 1.002 (178x)  & 1.002 (178x)  & oom \\
         &  \multicolumn{5}{|c||}{}  & 0  & 20615  & 51434  & 51434  & (203460) \\
        \hline
        \prob{ scen06-30reduc }{81}{14}{399}{0.11}{10} & 0.071 (2.65x)  & 0.068 (10.3x)  & 0.325 (155x)  & 1.738 (278x)  & oom \\
         &  \multicolumn{5}{|c||}{}  & 285  & 690  & 975  & 1447  &  (2080)\\
        \hline
	\prob{ scen06-30 }{99}{14}{1178}{0.09}{10}  & 0.19 (1.5x)  & 0.189 (14.2x)  & 1.313 (238x)  & 13.3 (345x)  & oom  \\
	 &  \multicolumn{5}{|c||}{}  & 450  & 411  & 1201  & 1100  & (2080) \\
        \hline
		\prob{ CELAR6-SUB4-20 }{22}{20}{477}{0.82}{11}  & 0.197 (4.09x)  & 0.232 (49.6x)  & 1.021 (225x)  & 11.29 (344x)  & oom \\
		 &  \multicolumn{5}{|c||}{}  & 494  & 598  & 732  & 1359  & (2716) \\
        \hline
        \prob{ CELAR7-SUB4-22 }{22}{22}{473}{0.67}{11}  & 0.221 (0.69x)  & 0.211 (3.54x)  & 0.158 (82.4x)  & 0.922 (286x)  & oom \\
         &  \multicolumn{5}{|c||}{}  & 0  & 40104  & 60214  & 31530  & (202342) \\
        \hline
        \prob{ CELAR6-SUB4reduc }{20}{44}{149}{0.77}{11}  & 0.106 (2.01x)  & 0.213 (10.9x)  & 0.357 (241x)  & 0.357 (241x)   & oom  \\
         &  \multicolumn{5}{|c||}{}  & 0  & 44  & 283  & 283  & (202342) \\
        \hline
        \prob{ CELAR6-SUB4 }{22}{44}{477	}{0.65}{1} & 0.387 (0.73x)  & 0.343 (17.4x)  & 1.013 (229x)  & 1.013 (229x)  & oom \\
         &  \multicolumn{5}{|c||}{}  & 0  & 170  & 405  &405  &  (3230) \\
        \hline
        \prob{ CELAR7-SUB4 }{22}{44}	{477}{0.65}{1}  & 0.347 (0.82x)  & 0.344 (18.1x)  & 1.24 (188x)  & 1.24 (188x)  & oom \\
         &  \multicolumn{5}{|c||}{}  & 0  & 30118  & 31442  &  31442 & (242443) \\
        \hline
	\prob{ scen06-24reduc }{81}{20}{403}{0.12}{12}  & 0.099 (4.76x)  & 0.101 (57.9x)  & 0.375 (217x)  & 4.001 (303x)  & oom \\
	 &  \multicolumn{5}{|c||}{}  & 278  & 599  & 634  & 1411  &  (2857)\\
        \hline
        \prob{ scen06-22reduc }{81}{22}{404	}{0.12}{12}  & 0.164 (0.68x)  & 0.091 (5.87x)  & 0.122 (67.1x)  & 0.52 (243x)  & oom \\
         &  \multicolumn{5}{|c||}{}  & 0  & 453  & 717  & 793  &  (3159)\\
        \hline
        \prob{ scen06-20reduc }{82}{24}{409}{0.12}{12}  & 0.203 (0.68x)  & 0.095 (7.53x)  & 0.142 (86.1x)  & 0.838 (277x)  & oom \\
         &  \multicolumn{5}{|c||}{}  & 0  & 447  & 717  & 794  &  (3163)\\
        \hline
        \prob{ scen06-18reduc }{82}{26}{409}{0.12}{12}  & 0.221 (0.76x)  & 0.194 (4.7x)  & 0.303 (56x)  & 1.189 (292x)  & oom  \\
         &  \multicolumn{5}{|c||}{}  & 0  & 458  & 718  & 796  &  (3263)\\
        \hline
        \prob{ scen06-24 }{99}{20}{1203}{0.10}{12} & 0.25 (0.52x)  & 0.236 (3.97x)  & 0.278 (47.1x)  & 0.867 (233x)  & oom \\
         &  \multicolumn{5}{|c||}{}  & 0  & 437  & 319  & 900  &  \\
        \hline
        \prob{ scen06-16reduc }{82}{28}{409}{0.12}{12}  & 0.22 (0.45x)  & 0.113 (10.4x)  & 0.235 (101x)  & 1.695 (304x)  & oom \\
         &  \multicolumn{5}{|c||}{}  & 0  & 458  & 717  & 812  &  \\
        \hline
        \prob{ scen06-22 }{99}{22}{1210}{0.10}{12}  & 0.271 (0.58x)  & 0.26 (4.93x)  & 0.358 (56.3x)  & 1.415 (256x)  & oom \\
         &  \multicolumn{5}{|c||}{}  & 0  & 437  & 403  & 803  &  \\
        \hline
        \prob{ scen06-20 }{100}{24}{1215}{0.10}{12} & 0.306 (1.2x)  & 0.263 (6.37x)  & 0.371 (78.2x)  & 1.979 (291x)  & oom \\
         &  \multicolumn{5}{|c||}{}  & 0  & 437  & 352  & 804  &  \\
        \hline
        \prob{ scen06-18 }{100}{26}{1221}{0.10}{12}  & 0.352 (0.83x)  & 0.299 (7.54x)  & 0.457 (94.4x)  & 2.995 (303x)  & oom \\
         &  \multicolumn{5}{|c||}{}  & 0  & 437  & 327  & 813  &  \\
        \hline
        \prob{ scen06-16 }{100}{28}{1222}{0.1}{12} & 0.36 (1.34x)  & 0.389 (7.12x)  & 0.537 (122x)  & 4.382 (317x)  & oom  \\
         &  \multicolumn{5}{|c||}{}  & 0  & 437  & 328  & 813  &  \\
        \hline
        \prob{ scen06reduc }{82}{44}{409}{0.12}{14}  & 0.343 (0.68x)  & 0.306 (15.1x)  & 0.787 (204x)  & 0.787 (204x)  & oom  \\
         &  \multicolumn{5}{|c||}{}  & 0  & 137  & 318  & 318  &  \\
        \hline
        \hline
     \end{tabular}
     }
     {\small
    \caption{Celar Benchmark: Runtime (in seconds) of GpuBE, at varying of the bucket size $z$ and GpuBE($w^*$) (top), 
    and solution quality (bottom).  The speedup of GpuBE($z$) and GpuBE($w^*$) w.r.t. their CPU counterparts are shown in parenthesis.}
    \label{tab:celar}}
  \end{center}
\end{table}

\begin{table}
  \begin{center}
  \resizebox{0.9\textwidth}{!}
  {
        \begin{tabular}{| l | p{6pt} p{2pt} p{12pt} p{6pt} p{6pt} || *{5}{r} |}
    \hline
	\prob{Problem}{$n$}{$d$}{$c$}{$p_1$}{$w^*$} & \multicolumn{4}{|c}{GpuBE($z$)} & GpuBE \\
	\prob{}{}{}{}{}{} &  $z_{\min}$ & $z_2$ & $z_3$ &  $z_{\max}$ & $w^*$ \\
    \hline\hline
            \prob{ GEOM40-2 }{40}{2}{78}{0.12}{5}  & 0.004 (0.25x)  & 0.004 (0.25x)  & 0.004 (0.25x)  & 0.004 (0.25x)  & 0.004 (0.25x)\\
             &  \multicolumn{5}{|c||}{}  & 22  &  22 & 22  & 22  & 22 \\
            \hline
            \prob{ GEOM40-3 }{40}{3}{78}{0.12}{5}  & 0.009 (0.44x)  & 0.009 (0.44x)  & 0.009 (0.44x)  & 0.009 (0.44x)  & 0.009 (0.44x)\\
             &  \multicolumn{5}{|c||}{}  &  7 & 7  & 7  & 7  & 7 \\
            \hline
            \prob{ GEOM40-4 }{40}{4}{78}{0.12}{5} & 0.004 (4.25x)  & 0.004 (4.25x)  & 0.004 (4.25x)  & 0.004 (4.25x)  & 0.004 (4.25x) \\
             &  \multicolumn{5}{|c||}{}  &  3 & 3  & 3  & 3  & 3 \\
            \hline
            \prob{ GEOM40-5 }{40}{5}{78}{0.12}{5}  & 0.005 (10.8x)  & 0.005 (10.8x)  & 0.005 (10.8x)  & 0.005 (10.8x)  & 0.005 (10.8x)\\
             &  \multicolumn{5}{|c||}{}  &  1 &  1 &  1 & 1  & 1 \\
            \hline
            \prob{ GEOM40-6 }{40}{6}{78}{0.12}{5}  & 0.011 (5.73x)  & 0.011 (5.73x)  & 0.011 (5.73x)  & 0.011 (5.73x)  & 0.011 (5.73x) \\
             &  \multicolumn{5}{|c||}{}  & 0  & 0  &  0 & 0  & 0 \\
            \hline
            \prob{ GEOM30a-3 }{30}{3}{81}{0.2}{6}  & 0.014 (0.21x)  & 0.024 (0.5x)  & 0.011 (1.73x)  & 0.024 (0.79x)  & 0.003 (2.67x)\\
             &  \multicolumn{5}{|c||}{}  & 0  & 10  & 11  & 11  & 11 \\
            \hline
            \prob{ GEOM30a-4 }{30}{4}{81}{0.2}{6}  & 0.028 (0.14x)  & 0.024 (2.25x)  & 0.012 (10.3x)  & 0.012 (10.2x)  & 0.003 (14.7x) \\
             &  \multicolumn{5}{|c||}{}  & 0  & 4  & 4  & 4  & 4  \\
            \hline
            \prob{ GEOM30a-5 }{30}{5}{81}{0.2}{6} & 0.029 (0.21x)  & 0.012 (16x)  & 0.012 (21.3x)  & 0.012 (20.2x)  & 0.004 (21.8x)\\
             &  \multicolumn{5}{|c||}{}  & 0  & 1  & 1  & 1  & 1 \\
            \hline
            \prob{ GEOM30a-6 }{30}{6}{81}{0.2}{6} & 0.03 (0.17x)  & 0.013 (35.3x)  & 0.028 (31.6x)  & 0.015 (68.8x)  & 0.006 (50.3x) \\
             &  \multicolumn{5}{|c||}{}  & 0  & 0  & 0  & 0  & 0 \\
            \hline
            \prob{ queen5-5-3 }{25}{3}{160}{0.87}{18}  & 0.031 (0.19x)  & 0.034 (48.2x)  & 0.471 (203x)  & 2.899 (250x)  & oom \\
             &  \multicolumn{5}{|c||}{}  & 6  & 18  & 23  & 25  &  (29) \\
             \hline
            \prob{ queen5-5-4 }{25}{4}{160}{0.87}{18}  & 0.031 (1x)  & 0.038 (53.3x)  & 0.158 (169x)  & 1.355 (238x)  & oom \\
             &  \multicolumn{5}{|c||}{}  & 0  & 1  & 4  & 5  &  (12) \\
             \hline
            \prob{ queen5-5-5 }{25}{5}{160}{0.87}{18}  & 0.031 (2.19x)  & 0.121 (97x)  & 1.031 (247x)  & 4.659 (267x)  & oom  \\
             &  \multicolumn{5}{|c||}{}  & 0  & 0  & 0  & 0  &  (0) \\
             \hline
            \prob{ myciel5g-3 }{47}{3}{236}{0.44}{20}  & 0.033 (2.58x)  & 0.069 (27.2x)  & 1.999 (201x)  & 11.97 (308x)  & oom \\
             &  \multicolumn{5}{|c||}{}  & 0  & 3  & 9  & 12  &  (16) \\
             \hline
            \prob{ myciel5g-4 }{47}{4}{236}{0.44}{20}  & 0.031 (1.03x)  & 0.045 (52.6x)  & 0.23 (143x)  & 7.852 (293x)  & oom \\
             &  \multicolumn{5}{|c||}{}  & 0  & 0  & 0  & 0  &  (4) \\
             \hline
            \prob{ myciel5g-5 }{47}{5}{236}{0.44}{20}  & 0.072 (0.92x)  & 0.051 (62.7x)  & 0.41 (160x)  & 6.513 (278x)  & oom  \\
             &  \multicolumn{5}{|c||}{}  & 0  & 0  & 0  & 0  &  (1) \\
             \hline
            \prob{ myciel5g-6 }{47}{6}{236}{0.44}{20}  & 0.035 (3.57x)  & 0.069 (5.39x)  & 0.123 (91x)  & 1.636 (220x)  & oom \\
             &  \multicolumn{5}{|c||}{}  & 0  & 0  & 0  & 0  &  (0) \\
             \hline
            \prob{ DSJC125.1.4 }{125}{4}{736}{0.72}{72}  & 0.241 (0.62x)  & 0.168 (19.4x)  & 0.477 (96.5x)  & 3.551 (197x)  & oom  \\
             &  \multicolumn{5}{|c||}{}  & 0  & 0  & 0  & 0  &  -- \\
             \hline
            \prob{ DSJC125.1.5 }{125}{5}{736}{0.72}{72}  & 0.285 (1.1x)  & 0.17 (4.43x)  & 0.393 (38.8x)  & 1.801 (191x)  & oom \\
             &  \multicolumn{5}{|c||}{}  & 0  & 0  & 0  & 0  &  (0) \\
             \hline
            \prob{ le450-5a-2 }{450}{2}{5714}{0.81}{344} & 1.725 (0.18x)  & 1.611 (1.46x)  & 2.934 (40.7x)  & 10.21 (67.3x)  & oom \\
             &  \multicolumn{5}{|c||}{}  & 618  & 734  & 833  & 878  &  -- \\
             \hline
            \prob{ le450-5a-3 }{450}{3}{5714}{0.81}{344} & 1.728 (0.19x)  & 1.847 (2.39x)  & 2.042 (17.6x)  & 7.207 (44.4x)  & oom \\
             &  \multicolumn{5}{|c||}{}  & 42  & 55  & 57  & 58  & --\\
              \hline
		\prob{ le450-5a-4 }{450}{4}{5714}{0.81}{344}  & 1.422 (0.44x)  & 1.549 (1.29x)  & 2.088 (13.7x)  & 6.517 (68.6x)  & oom \\
		 &  \multicolumn{5}{|c||}{}  & 0  & 0  & 4  & 1  &  -- \\
             \hline
            \prob{ le450-5a-5 }{450}{5}{5714}{0.81}{344}  & 1.67 (0.98x)  & 1.844 (3.7x)  & 3.505 (39.3x)  & 10.23 (65.1x)  & oom \\
             &  \multicolumn{5}{|c||}{}  & 0  & 0  & 0  & 0  &  -- \\
             \hline
    \end{tabular}
     }
     {\small
    \caption{Coloring Benchmark: Runtime (in seconds) of GpuBE, at varying of the bucket size $z$ and GpuBE($w^*$) (top), 
    and solution quality (bottom).  The speedup of GpuBE($z$) and GpuBE($w^*$) w.r.t. their CPU counterparts are shown in parenthesis.}
    \label{tab:coloring}}	
  \end{center}
\end{table}

\begin{table}
  \begin{center}
  \resizebox{0.9\textwidth}{!}
  {
        \begin{tabular}{| l | p{6pt} p{2pt} p{12pt} p{6pt} p{6pt} || *{5}{r} | }
    \hline
	\prob{Problem}{$n$}{$d$}{$c$}{$p_1$}{$w^*$} & \multicolumn{4}{|c}{GpuBE($z$)} & GpuBE \\
	\prob{}{}{}{}{}{} &  $z_{\min}$ & $z_2$ & $z_3$ &  $z_{\max}$ & $w^*$ \\
    \hline\hline
            \prob{ s386 }{172}{2}{172}{0.04}{19}  & 0.054 (0.28x)  & 0.051 (2.55x)  & 0.053 (16.1x)  & 0.185 (71.9x)  & 0.129 (82.8x) \\
             &  \multicolumn{5}{|c||}{}  & 29  & 29  & 29  & 29  & 29 \\
            \hline
            \prob{ s1423 }{748}{2}{748}{{0.06}}{38}  & 0.184 (0.12x)  & 0.182 (1.08x)  & 0.189 (6.05x)  & 0.546 (57.8x)  & oom  \\
             &  \multicolumn{5}{|c||}{}  & 231  & 231  & 231  & 231  &   (231)\\
            \hline
            \prob{ c499 }{499}{2}{499}{0.01}{42}  & 0.133 (0.89x)  & 0.131 (1.08x)  & 0.141 (9.72x)  & 0.391 (78.1x)  & oom \\
             &  \multicolumn{5}{|c||}{}  & 111  & 111  & 111  & 111  &  (111)\\
            \hline
            \prob{ c432 }{432}{2}{432}{0.01}{54}  & 0.225 (0.33x)  & 0.237 (0.72x)  & 0.270 (7.93x)  & 0.622 (69.6x)  & oom  \\
             &  \multicolumn{5}{|c||}{}  & 101  & 101  & 101  & 101  &  --\\
            \hline
            \prob{ s1494 }{661}{2}{661}{0.01}{57}  & 0.238 (0.19x)  & 0.222 (1.14x)  & 0.249 (17.1x)  & 1.285 (101x)  & oom  \\
             &  \multicolumn{5}{|c||}{}  & 32  & 32  & 32  & 32  &  (32)\\
            \hline
            \prob{ s1488 }{667}{2}{667}{0.01}{62}  & 0.232 (0.13x)  & 0.219 (1.07x)  & 0.244 (16.7x)  & 1.268 (98.5x)  & oom  \\
             &  \multicolumn{5}{|c||}{}  & 32  & 32  & 32  & 32  &  (32)\\
            \hline
            \prob{ c880 }{880}{2}{880}{0.04}{68}  & 0.245 (0.11x)  & 0.241 (0.69x)  & 0.295 (8.86x)  & 1.1 (91.7x)  & oom  \\
             &  \multicolumn{5}{|c||}{}  & 162  & 162  & 162  & 162  &  --\\
            \hline
            \prob{ s1196 }{561}{2}{561}{0.01}{92}  & 0.19 (0.27x)  & 0.185 (0.93x)  & 0.305 (10.6x)  & 1.049 (101x)  & oom \\
             &  \multicolumn{5}{|c||}{}  & 95  & 95  & 95  & 95  &  (95)\\
            \hline
            \prob{ s953 }{440}{2}{440}{0.01}{93}  & 0.234 (0.13x)  & 0.139 (1.62x)  & 0.262 (11.2x)  & 0.781 (98.8x)  & oom \\
             &  \multicolumn{5}{|c||}{}  & 124  & 124  & 124  & 124  &  (124)\\
            \hline
            \prob{ s1238 }{540}{2}{540}{0.01}{95}  & 0.195 (0.28x)  & 0.184 (1.04x)  & 0.21 (16.8x)  & 0.964 (94.6x)  & oom \\
             &  \multicolumn{5}{|c||}{}  & 95  & 95  & 95  & 95  &  (95)\\
            \hline
    \end{tabular}
     }
	     {\small
    \caption{Iscas-89 Benchmark: Runtime (in seconds) of GpuBE, at varying of the bucket size $z$ and GpuBE($w^*$) (top), 
    and solution quality (bottom).  The speedup of GpuBE($z$) and GpuBE($w^*$) w.r.t. their CPU counterparts are shown in parenthesis.}
    \label{tab:iscas}}
  \end{center}
\end{table}

\begin{table}
  \begin{center}
  \resizebox{0.9\textwidth}{!}
  {
        \begin{tabular}{| l | p{6pt} p{2pt} p{12pt} p{6pt} p{8pt} || *{5}{r} |}
    \hline
	\prob{Problem}{$n$}{$d$}{$c$}{$p_1$}{$w^*$} & \multicolumn{4}{|c}{GpuBE($z$)} & GpuBE \\
	\prob{}{}{}{}{}{} &  $z_{\min}$ & $z_2$ & $z_3$ &  $z_{\max}$ & $w^*$ \\
    \hline\hline
                \prob{ eye }{36}{21}{53}{0.09}{2}  & 0.048 (3.83x)  & 0.045 (3.67x)  & 0.045 (2.2x)  & 0.045 (4.13x)  & 0.006 (12.7x)  \\
                 &  \multicolumn{5}{|c||}{}  & 1  & 1  & 1  & 1  & 1  \\
                \hline
                \prob{ wijsmanguo }{49}{36}{68}{0.06}{3} & 0.278 (2.15x)  & 0.279 (2.18x)  & 0.278 (2.21x)  & 0.278 (2.15x)  & 0.012 (18.8x) \\
                 &  \multicolumn{5}{|c||}{}  & 1  & 1  & 1  & 1  & 1  \\
                \hline
                \prob{ cancer }{49}{36}{68}{0.06}{3}  & 0.304 (2.34x)  & 0.28 (2.91x)  & 0.291 (2.44x)  & 0.278 (2.55x)  & 0.012 (18.8x) \\
                 &  \multicolumn{5}{|c||}{}  & 1  & 1  & 1  & 1  & 1  \\
                \hline
                \prob{ sobel }{7}{6}{8}{0.61}{3}  & 0.002 (0.5x)  & 0.002 (1x)  & 0.001 (2x)  & 0.001 (2x)  & 0.001 (1x) \\
                 &  \multicolumn{5}{|c||}{}  & 0  & 0  & 0  & 0  & 0  \\
                \hline
                \prob{ connell }{12}{6}{15}{0.39}{3}  & 0.004 (0.25x)  & 0.004 (1.5x)  & 0.002 (2.5x)  & 0.001 (6x)  & 0.001 (4x)  \\
                 &  \multicolumn{5}{|c||}{}  & 0  & 1  & 1  & 1  & 1  \\
                \hline
                \prob{ pedck60-L2 }{60}{10}{106}{0.08}{5}  & 0.017 (1.71x)  & 0.052 (78x)  & 0.052 (77x)  & 0.051 (82.6x)  & 0.02 (144x)  \\
                 &  \multicolumn{5}{|c||}{}  & 2  & 2  & 2  & 2  & 2  \\
                \hline
                \prob{ pedck60-L1 }{60}{10}{108}{0.08}{5}  & 0.016 (1.56x)  & 0.033 (125x)  & 0.044 (94.1x)  & 0.033 (124x)  & 0.02 (137x) \\
                 &  \multicolumn{5}{|c||}{}  & 2  & 2  & 2  & 2  & 2  \\
                \hline
                \prob{ pedck60-L12 }{60}{10}{108}{0.08}{5}  & 0.023 (1.26x)  & 0.033 (124x)  & 0.033 (127x)  & 0.033 (126x)  & 0.02 (143x) \\
                 &  \multicolumn{5}{|c||}{}  & 6  & 6  & 6  & 6  & 6  \\
                \hline
                \prob{ saudiarabia }{37}{15}{43}{0.16}{5}  & 0.015 (4.13x)  & 0.267 (231x)  & 0.286 (211x)  & 0.282 (210x)  & 0.187 (212x) \\
                 &  \multicolumn{5}{|c||}{}  & 0  & 0  & 0  & 0  & 0  \\
                \hline
                \prob{ parkinson }{37}{15}{43}{0.16}{5}  & 0.015 (3.4x)  & 0.292 (206x)  & 0.265 (223x)  & 0.274 (223x)  & 0.186 (237x)  \\
                 &  \multicolumn{5}{|c||}{}  & 0  & 0  & 0  & 0  & 0  \\
                \hline
                \prob{ pedck350l3 }{350}{10}{578}{0.03}{24}  & 0.092 (2.13x)  & 0.094 (7.23x)  & 0.109 (35.7x)  & 0.192 (112x)  & oom  \\
                 &  \multicolumn{5}{|c||}{}  & 0  & 0  & 0  & 0  &  (0)\\
                \hline
                \prob{ pedck350l2 }{350}{10}{578}{0.03}{24}  & 0.091 (1.02x)  & 0.131 (4.21x)  & 0.109 (32.9x)  & 0.235 (81.1x)  & oom \\
                 &  \multicolumn{5}{|c||}{}  & 0  & 0  & 1  & 1  & (1)\\
                \hline
                \prob{ pedck350 }{350}{10}{580}{0.03}{26}  & 0.099 (1.04x)  & 0.139 (3.89x)  & 0.117 (27.9x)  & 0.252 (114x)  & oom  \\
                 &  \multicolumn{5}{|c||}{}  & 0  & 0  & 2  & 2  & (2)\\
                \hline
                \prob{ sheep4r-4-3 }{2662}{10}{5021}{0.00}{38}  & 1.16 (0.98x)  & 1.173 (5.83x)  & 1.551 (29.9x)  & 3.139 (110x)  & oom \\
                 &  \multicolumn{5}{|c||}{}  & 0  & 0  & 0  & 1  &  (1)\\
                \hline
                \prob{ sheep4r-4-2 }{2172}{10}{4026}{0.00}{46}  & 0.874 (0.96x)  & 1.3 (4.11x)  & 1.145 (29.9x)  & 2.339 (110x)  & oom \\
                 &  \multicolumn{5}{|c||}{}  & 1  & 0  & 0  & 0  &  (1)\\
                \hline
                \prob{ sheep4r-4-1 }{641}{10}{1196}{0.05}{63}  & 0.273 (0.96x)  & 0.291 (5.92x)  & 0.36 (35.7x)  & 0.782 (120x)  & oom \\
                 &  \multicolumn{5}{|c||}{}  & 0  & 0  & 0  & 0  & (0)\\
                \hline
                \prob{ sheep4r-4-0 }{1541}{10}{2941}{0.03}{108}  & 0.76 (0.95x)  & 0.786 (5.93x)  & 1.066 (31.3x)  & 1.066 (31.3x)  & oom \\
                 &  \multicolumn{5}{|c||}{}  & 0  & 0  & 0  & 0  & (0)\\
                \hline
                \prob{ pedck1000 }{928}{6}{1736}{0.09}{126}  & 0.368 (0.34x)  & 0.393 (1.32x)  & 0.399 (4.46x)  & 0.447 (20.1x)  & oom  \\
                 &  \multicolumn{5}{|c||}{}  & 15  & 15  & 12  & 16  &  (19)\\
                 \hline
                \end{tabular}
     }
	     {\small
    \caption{Pedigree Benchmark: Runtime (in seconds) of GpuBE, at varying of the bucket size $z$ and GpuBE($w^*$) (top), 
    and solution quality (bottom).  The speedup of GpuBE($z$) and GpuBE($w^*$) w.r.t. their CPU counterparts are shown in parenthesis.}
    \label{tab:pedigree}}
  \end{center}
\end{table}

\begin{table}
  \begin{center}
  \resizebox{0.9\textwidth}{!}
  {
        \begin{tabular}{| l | p{6pt} p{2pt} p{12pt} p{6pt} p{8pt} || *{5}{r} |}
    \hline
	\prob{Problem}{$n$}{$d$}{$c$}{$p_1$}{$w^*$} & \multicolumn{4}{|c}{GpuBE($z$)} & GpuBE \\
	\prob{}{}{}{}{}{} &  $z_{\min}$ & $z_2$ & $z_3$ &  $z_{\max}$ & $w^*$ \\
    \hline\hline
    \prob{ 8 }{8}{4}{15}{0.28}{2}  & 0.003 (0.33x)  & 0.001 (1.0x)  & 0.002 (1.5x)  & 0.003 (1.33x)  & 0.003 (2.0x)\\
    &  \multicolumn{5}{|c||}{}  & 2  & 2  & 2  & 2  & 2  \\
    \hline
    \prob{ 1502 }{209}{4}{411}{0.01}{5}  & 0.082 (0.32x)  & 0.039 (1.18x)  & 0.08 (0.57x)  & 0.079 (0.61x)  & 0.029 (0.66x)\\
    &  \multicolumn{5}{|c||}{}  & 28042  & 28042  & 28042  & 28042  & 28042  \\
    \hline
    \prob{ 54 }{67}{4}{271}{0.15}{11}  & 0.037 (0.92x)  & 0.036 (1.86x)  & 0.036 (7.31x)  & 0.045 (53.7x)  & 0.027 (46.1x) \\
    &  \multicolumn{5}{|c||}{}  & 31  & 32  & 35  & 37  & 37  \\
    \hline
    \prob{ 503 }{143}{4}{635}{0.07}{12}  & 0.088 (0.74x)  & 0.085 (2.88x)  & 0.088 (18.3x)  & 0.203 (31.9x)  & 0.044 (59.9x) \\
    &  \multicolumn{5}{|c||}{}  & 7093  & 9106  & 11111  & 11113  & 11113  \\
    \hline
    \prob{ 29 }{82}{4}{462}{0.17}{14}  & 0.085 (0.11x)  & 0.07 (3.71x)  & 0.147 (97.7x)  & 7.281 (236x)  & 3.327 (141x) \\
    &  \multicolumn{5}{|c||}{}  & 7035  & 8048  & 8055  & 8059  & 8059 \\
    \hline
    \prob{ 404 }{100}{4}{710}{0.16}{19} & 0.112 (0.6x)  & 0.207 (5.95x)  & 0.188 (113x)  & 0.701 (224x)  & 0.301 (124x) \\
    &  \multicolumn{5}{|c||}{}  & 76  & 106  & 112  & 114  & 114  \\
    \hline
    \prob{ 42b }{190}{4}{1140}{0.09}{19}  & 0.455 (0.13x)  & 0.358 (0.65x)  & 0.276 (43.6x)  & 12.98 (239x)  & oom  \\
    &  \multicolumn{5}{|c||}{}  & 58049  & 95050  & 135050  & 155050  & --\\
    \hline
    \prob{ 505b }{240}{4}{1716}{0.06}{23}  & 0.392 (0.12x)  & 0.484 (0.61x)  & 0.361 (48.6x)  & 7.531 (238x)  & oom \\
    &  \multicolumn{5}{|c||}{}  & 6106  & 11159  & 15204  & 19244  &  --\\
    \hline
    \prob{ 1504 }{605}{4}{4187}{0.03}{25}  & 1.047 (0.18x)  & 0.884 (2.41x)  & 1.221 (55.3x)  & 12.09 (263x)  & oom  \\
    &  \multicolumn{5}{|c||}{}  & 80104  & 107175  & 131227  & 141251  &  --\\
    \hline
    \prob{ 408b }{200}{4}{1843}{0.10}{29}  & 0.552 (0.14x)  & 0.309 (2.77x)  & 0.862 (132x)  & 5.774 (218x)  & oom \\
    &  \multicolumn{5}{|c||}{}  & 2104  & 5162  & 6206  & 6215  &  --\\
    \hline
    \prob{ 42 }{190}{4}{1394}{0.11}{30}  & 0.295 (0.11x)  & 0.251 (7.53x)  & 0.59 (56.9x)  & 9.419 (241x)  & oom  \\
    &  \multicolumn{5}{|c||}{}  & 60049  & 97050  & 105050  & 133050  &  --\\
    \hline
    \prob{ 505 }{240}{4}{2242}{0.11}{30}  & 0.61 (0.09x)  & 0.547 (1.05x)  & 0.712 (83.3x)  & 8.757 (220x)  & oom \\
    &  \multicolumn{5}{|c||}{}  & 5103  & 7149  & 10178  & 15211  &  --\\
    \hline
    \prob{ 408 }{200}{4}{2232}{0.17}{40}  & 0.637 (0.12x)  & 0.506 (2.14x)  & 0.816 (96x)  & 10.2 (265x)  & oom \\
    &  \multicolumn{5}{|c||}{}  & 2100  & 3123  & 4169  & 5185  &  --\\
        \hline
        \prob{ 5 }{309}{4}{5621}{0.19}{44}  & 1.45 (0.14x)  & 1.18 (1.09x)  & 1.879 (37x)  & 6.573 (158x)  & oom  \\
        &  \multicolumn{5}{|c||}{}  & 42  & 53  & 86  & 106  &  --\\
        \hline
        \prob{ 412 }{300}{4}{4348}{0.16}{48}  & 1.366 (0.08x)  & 1.097 (2.05x)  & 1.228 (41.3x)  & 7.048 (183x) & oom \\
        &  \multicolumn{5}{|c||}{}  & 2106  & 2131  & 8176  & 11258  &  --\\
        \hline
        \prob{ 507 }{311}{4}{5732}{0.18}{68}  & 1.788 (0.13x)  & 1.379 (1.59x)  & 1.744 (38.2x)  & 5.343 (140x) & oom \\
        &  \multicolumn{5}{|c||}{}  & 6114  & 5140  & 7194  & 10226  &  --\\
        \hline
        \prob{ 1506 }{940}{4}{15240}{0.05}{77}  & 4.82 (0.13x)  & 3.805 (0.82x)  & 3.929 (8.52x)  & 16.93 (128x) & oom \\
        &  \multicolumn{5}{|c||}{}  & 50122  & 68152  & 86172  & 88235  &  --\\
        \hline
        \prob{ 28 }{230}{4}{5226}{0.42}{87}  & 1.397 (0.12x)  & 1.173 (1.89x)  & 1.946 (50x)  & 8.584 (226x)  & oom  \\
        &  \multicolumn{5}{|c||}{}  & 43075  & 52105  & 76105  & 88105  &  --\\
        \hline
        \prob{ 509 }{348}{4}{8624}{0.22}{92}  & 2.82 (0.16x)  & 2.204 (5.56x)  & 5.664 (75.6x)  & 10.61 (278x)  & oom \\
        &  \multicolumn{5}{|c||}{}  & 3105  & 3217  & 5191  & 6157  &  --\\
        \hline
    \prob{ 414 }{364}{4}{10108}{0.24}{104}  & 3.61 (0.16x)  & 2.673 (1.95x)  & 3.091 (16.7x)  & 7.608 (80.7x)  & oom  \\
    &  \multicolumn{5}{|c||}{}  & 3113  & 3120  & 4195 & 6141  &  --\\
    \hline
    \prob{ 1401 }{488}{4}{10963}{0.17}{105}  & 3.283 (0.16x)  & 3.047 (1.62x)  & 3.543 (13x)  & 13.65 (157x)  & oom  \\
    &  \multicolumn{5}{|c||}{}  & 64057  & 61066  & 66071  & 87071  & --\\
    \hline
    \prob{ 1403 }{665}{4}{13616}{0.11}{105}  & 4.52 (0.14x)  & 3.829 (1.63x)  & 4.246 (15x)  & 10.11 (69.6x)  & oom \\
    &  \multicolumn{5}{|c||}{}  & 58099  & 71129  & 74118  & 82120  &  --\\
    \hline
    \prob{ 1405 }{855}{4}{18258}{0.09}{105}  & 5.708 (0.14x)  & 4.791 (2.08x)  & 6.787 (17.3x)  & 19.98 (74.5x)  & oom  \\
    &  \multicolumn{5}{|c||}{}  & 58099  & 71129  & 74118  & 84177  &  --\\
    \hline
    \prob{ 1407 }{1057}{4}{21786}{0.07}{105}  & 7.018 (0.15x)  & 6.283 (2.21x)  & 8.931 (19x)  & 18.15 (53.9x)  & oom  \\
    &  \multicolumn{5}{|c||}{}  & 58127  & 74180  &  75164 & 84202  & --\\
    \hline
	\end{tabular}
     }
     \small{
         \caption{Spot Benchmark: Runtime (in seconds) of GpuBE, at varying of the bucket size $z$ and GpuBE($w^*$) (top), 
    and solution quality (bottom).  The speedup of GpuBE($z$) and GpuBE($w^*$) w.r.t. their CPU counterparts are shown in parenthesis.}
    \label{tab:spot}}
  \end{center}
\end{table}

\section{Related Work}
\label{sec:relatedwork}
The use of GPUs to solve difficult combinatorial problems has been explored by several proposals in different areas of constraint optimization.
For instance, Meyer \emph{et al.}~\cite{lalami:11} proposed a multi-GPU implementation of the \emph{simplex tableau} algorithm that relies on a vertical problem decomposition to reduce communication between GPUs.
In constraint programming, Arbelaez and Codognet~\cite{arbelaez:14} proposed a GPU-based version of the \emph{Adaptive Search} algorithm, which explores several \emph{large neighborhoods} in parallel, resulting in a speedup factor of $17$. Campeotto \emph{et al.}~\cite{campeotto:14} proposed a GPU-based framework that exploits both parallel propagation and parallel exploration of several large neighborhoods using local search techniques, leading to a speedup factor of up to $38$.
The combination of GPUs with dynamic programming has also been explored to solve different combinatorial optimization problems. For instance, Boyer \emph{et al.}~\cite{boyer:12} proposed the use of GPUs to compute the classical DP recursion step for the knapsack problem, which led to a speedup factor of $26$.
Paw{\l}owski \emph{et al.}~\cite{pawlowski:14} presented a DP-based solution for the \emph{coalition structure formation problem} on GPUs, reporting up to two orders of magnitude of speedup.
In a recent work, Bistaffa \emph{et al.}~\cite{bistaffa:16} study the parallelization of an inference-based algorithm to solve COPs using GPUs, albeit exclusively in the centralized case. 
Silberstein \emph{et al.}~\cite{Silberstein:08} study a GPU-based kernel for the sum-product operations that arise in \emph{marginalize a product of functions} (MPF) problems. The authors report an average speedup factor of $15$ for random benchmarks and Bayesian networks and higher average speedups (up to two orders of magnitude) for log domains due to the difference in performance of the \emph{log2f} and \emph{exp2f} functions on the CPU and GPU. 

In the distributed constraint optimization context, GPU parallelism has been applied to speed up several DCOP solving techniques. Fioretto \emph{et al.}~\cite{fioretto:16} proposed a multi-variable agent decomposition strategy to solve \emph{general} DCOPs with complex local subproblems, which makes use of GPUs to implement a search-based and a sampling-based algorithm to speed up the agents' local subproblems resolution.
Le \emph{et al.}~\cite{le:16} studied a GPU accelerated algorithm in the context of stochastic DCOPs---DCOPs where the values of the cost tables are stochastic. The authors used SIMT-style parallelism on a DP-based approach, which resulted in a speedup of up to two orders of magnitude. 
Recently, a combination of GPUs with \emph{Markov Chain Monte Carlo} (MCMC) sampling algorithms has been proposed in the context of solving DCOPs~\cite{fioretto:16c}, where the authors adopted GPUs to accelerate the computation of the normalization constants used in the MCMC sampling process as well as to compute several samples in parallel, resulting in a speedup of up to one order of magnitude. 

Finally, a parallelization of the AND/OR Branch-and-Bound search with mini-bucket heuristic has been presented in \cite{lars:17}. 
%

Differently from other proposals, our approach aims at using GPUs to exploit SIMT-style parallelism from DP-based methods to solve general, exact and approximated, WCSPs and DCOPs. 

%

\section{Conclusions and Discussions}
\label{sec:conclusions}

Inference-based algorithms are powerful tools for solving discrete optimization problems. However, their applicability is limited by their high time and space requirements. Motivated by the increasing availability of GPUs, in this paper, we proposed a scheme to speed up the resolution of inference-based methods for centralized and distributed constraint optimization by exploiting SIMT-style parallelism. 
We introduced an exact algorithm and an approximated algorithm that are inspired by BE and MBE for WCSPs \rev{and tasks over belief networks (e.g., MPE)}, and by DPOP and ADPOP for DCOPs. These procedures make use of multiple threads in the GPU cards to parallelize the aggregation and elimination procedures, which are responsible for the high complexity in the inference-based approaches. 
Additionally, we detailed the design of the data structures adopted to process cost functions with GPUs, and of the mapping adopted to associate GPU threads to cost functions' entries, which allows us to efficiently exploit the data parallelism (SIMT) supported by GPUs.

Finally, we reported an extensive experimental evaluation of our inference-based GPU implementations on both centralized and distributed benchmarks. 
We showed that the use of GPUs provides significant advantages in terms of runtime and scalability, achieving speedups of up to two order of magnitude, showing a considerable reduction in runtime (up to 345 times faster) with respect to the serialized version, and that the speedups increase with the induced width of the problem and with the size of the domain of the problem's variables.

The proposed results are significant---the wide availability of GPUs provides access to parallel computing solutions that can be used to improve efficiency of WCSPs and DCOP solvers. 
Furthermore, GPUs are renowned for their complex architectures (multiple memory levels with very different size and speed characteristics; relatively slow cores), which often create challenges to the effective exploitation of parallelism from irregular applications. The strong experimental results indicate that the proposed algorithms are well-suited to GPU architectures.

These results hint that our approach could be exploited in the the context of a dynamic search that makes use of the mini-bucket elimination method as an heuristic to infer bounds on the solution quality, potentially allowing dynamic variable orderings. 
Indeed, the main drawback of this type of search (and various look-ahead methods) is that, since the heuristic is (re)computed in various nodes during search process, the time invested in the heuristic computation may not be cost-effective. 
Leveraging the use of GPUs to infer bounds faster during the dynamic search, 
may therefore produce dramatic speedup to the whole search process. 

%

While this paper describes the applicability of our approach to (M)BE and (A)DPOP,  analogous techniques can be derived and applied to other inference-based approaches to solve discrete optimization problems (e.g.,~to implement the logic of inference-based propagators) and optimization on probabilistic graphical models---(e.g.,~in finding maximum probability explanation (MPE) in belief networks). 
\rev{
Additionally, our work can be extended to solve \emph{sum-product} problems, also known as \emph{weighted counting}, \emph{partition function}, or \emph{probability of evidence}. Due to the difficulty of these problems the value of accelerated versions of the bucket elimination algorithms is especially important. We plan to work in this direction as future work. 
}

We also envision this technology could open the door to efficiently enforcing higher forms of consistencies than domain consistency (e.g.,~\emph{path consistency}~\cite{montanari:74}, \emph{adaptive consistency}~\cite{dechter:88}, or the more recently proposed \emph{branch consistency} for DCOPs \cite{fioretto:14b}), especially when the cost functions need to be represented explicitly. 

\begin{acknowledgements}
This research is partially supported by the National Science Foundation under grants 1345232,  1550662, 1458595, and 1401639. The views and conclusions contained in this document are those of the authors and should not be interpreted as representing the official policies, either expressed or implied, of the sponsoring organizations, agencies, or the U.S. government.
\end{acknowledgements}

\bibliographystyle{spmpsci}      
\bibliography{cp15b}   

%
%

\end{document}